\newif\ifarxiv 
\title{From Self-Attention to Markov Models:\\Unveiling the Dynamics of Generative Transformers}
\author{ M. Emrullah Ildiz$^1$ \quad Yixiao Huang$^1$ \quad  Yingcong Li$^1$\\
{Ankit Singh Rawat$^2$ \quad Samet Oymak$^1$}
}
\affil{$^1$ University of Michigan, Ann Arbor \\
{\small \texttt{\{eildiz,yingcong,oymak\}@umich.edu, yixiao.huang@my.cityu.edu.hk} 
\vspace{0mm}}
}
\affil{$^2$ Google Research NYC \\
{\small \texttt{ankitsrawat@google.com} 
\vspace{-3mm}}
} 
\icmltitlerunning{Understanding Self-Attention through PCMC}
\newtheorem{theorem}{Theorem}[section]
\newtheorem{lemma}[theorem]{Lemma}
\newtheorem{corollary}[theorem]{Corollary}
\newtheorem{definition}[theorem]{Definition}
\newtheorem{assumption}[theorem]{Assumption}
\newcommand{\rulesep}{\unskip\ \vrule\ }
\definecolor{darkred}{RGB}{150,0,0}
\definecolor{darkgreen}{RGB}{0,150,0}
\definecolor{darkblue}{RGB}{0,0,200}
\def \endprf{\hfill {\vrule height6pt width6pt depth0pt}\medskip}
\newtheorem{observation}{Observation}
\newenvironment{proof}{\noindent {\bf Proof.} }{\endprf\par}
\newcommand{\bpij}{\pi_{ij}}
\newcommand{\red}{\textcolor{darkred}}
\newcommand{\SO}[1]{\textcolor{blue}{[SO: #1]}}
\newcommand{\cln}[1]{\red{}}
\newcommand{\eps}{\varepsilon}
\newcommand{\hb}{\vct{h}}
\newcommand{\st}{\star}
\newcommand{\Ace}{\Ac_{\eps}}
\newcommand{\Sall}{\Sc_{\Eb}}
\newcommand{\Sprm}{\Sc_{\data}}
\newcommand{\bPi}{\boldsymbol{\Pi}}
\newcommand{\beq}{\begin{equation}}
\newcommand{\ba}{\begin{align}}
\newcommand{\ea}{\end{align}}
\newcommand{\eeq}{\end{equation}}
\newcommand{\nn}{\nonumber}
\newcommand{\A}{{\mtx{A}}}
\newcommand{\Ub}{{\mtx{U}}}
\newcommand{\B}{{{\mtx{B}}}}
\newcommand{\Ib}{{{\mtx{I}}}}
\newcommand{\Sb}{{{\mtx{S}}}}
\newcommand{\diag}[1]{\text{diag}(#1)}
\newcommand{\Lc}{{\cal{L}}}
\newcommand{\Oc}{{\cal{O}}}
\newcommand{\Ocb}{\bar{\Oc}}
\newcommand{\Lch}{{\widehat{\cal{L}}}}
\newcommand{\Nc}{{\cal{N}}}
\newcommand{\Sp}{{\Sc_{\bpigrd}}}
\newcommand{\bpist}{\bpi_*}
\newcommand{\pigrd}{\pi^{\texttt{GT}}}
\newcommand{\pist}{\pi_*}
\newcommand{\bpigrd}{\bpi^{\texttt{GT}}}
\newcommand{\Dc}{{\cal{D}}}
\newcommand{\Pb}{{\mtx{P}}}
\newcommand{\Pbh}{\hat{{\mtx{P}}}}
\newcommand{\Pbs}{{\mtx{P}^*}}
\newcommand{\Qb}{{\mtx{Q}}}
\newcommand{\Cb}{{\mtx{C}}}
\newcommand{\Eb}{{\mtx{E}}}
\newcommand{\Gc}{{\cal{G}}}
\newcommand{\Iden}{{\mtx{I}}}
\newcommand{\M}{{\mtx{M}}}
\newcommand{\order}[1]{{\cal{O}}(#1)}
\newcommand{\z}{{\vct{z}}}
\newcommand{\sft}[1]{\mathbb{S}(#1)}
\newcommand{\sfp}[1]{\mathbb{S}'(#1)}
\newcommand{\tnt}[1]{\|{#1}\|_2}
\newcommand{\ti}[1]{\|{#1}\|_{\infty}}
\newcommand{\tf}[1]{\|{#1}\|_{F}}
\newcommand{\Ac}{\mathcal{A}}
\newcommand{\bt}{{\boldsymbol{\beta}}}
\newcommand{\bal}{{\boldsymbol{\alpha}}}
\newcommand{\bgam}{{\boldsymbol{\gamma}}}
\newcommand{\Sc}{\mathcal{S}}
\newcommand{\vb}{\vct{v}}
\newcommand{\nb}{\vct{n}}
\newcommand{\xb}{\vct{\bar{x}}}
\newcommand{\Xb}{\vct{\bar{X}}}
\newcommand{\xo}{\vct{x}^{occ}}
\newcommand{\cb}{\mtx{c}}
\newcommand{\w}{\vct{w}}
\newcommand{\s}{\vct{s}}
\newcommand{\ab}{\vct{a}}
\newcommand{\bb}{\vct{b}}
\newcommand{\ub}{{\vct{u}}}
\newcommand{\Tc}{\mathcal{T}}
\newcommand{\Fc}{\m}
\newcommand{\Xc}{\mathcal{X}}
\newcommand{\Yc}{\mathcal{Y}}
\newcommand{\m}{\vct{m}}
\newcommand{\mb}{\bar{m}}
\newcommand{\bmb}{\bar{\vct{m}}}
\newcommand{\Ws}{\W^\star}
\newcommand{\data}{\Tc}
\newcommand{\x}{\vct{x}}
\newcommand{\xl}{\x_L}
\newcommand{\xxl}{x_L}
\newcommand{\xli}{\x_{i,L_i}}
\newcommand{\xlj}[1]{\x_{L#1}}
\newcommand{\maskb}{\m}
\newcommand{\mask}{{m}}
\newcommand{\xlp}{\bar{ x}}
\newcommand{\xlpi}{\bar{ x}_i}
\newcommand{\y}{\vct{y}}
\newcommand{\Dcxy}{\Dc_{\Xc\Yc}}
\newcommand{\Dcx}{\Dc_{\Xc}}
\newcommand{\W}{\mtx{W}}
\newcommand{\Wout}{\mtx{W}_{out}}
\newcommand{\Winn}{\mtx{W}_{inn}}
\newcommand{\Wh}{\hat{\mtx{W}}}
\newcommand{\Wgrd}{\mtx{W}^{\texttt{GT}}}
\newcommand{\Wst}{\mtx{W}_\st}
\newcommand{\Pst}{\mtx{P}_\st}
\newcommand{\Pgrd}{\mtx{P}^{\texttt{GT}}}
\newcommand{\Pgrdnb}{P^{GT}}
\newcommand{\bgl}{{~\big |~}}
\newcommand{\bpi}{\boldsymbol{\pi}}
\definecolor{emmanuel}{RGB}{255,127,0}
\newcommand{\PCMC}{{\textrm{CCMC}}\xspace}
\newcommand{\pb}{{\vct{p}}}
\newcommand{\Pbw}{{\vct{P}}^{\W}}
\newcommand{\qb}{{\vct{q}}}
\newcommand{\R}{\mathbb{R}}
\newcommand{\Pro}{\mathbb{P}}
\newcommand{\<}{\langle}
\renewcommand{\>}{\rangle}
\renewcommand{\P}{\operatorname{\mathbb{P}}}
\newcommand{\E}{\operatorname{\mathbb{E}}}
\newcommand{\e}{\mathrm{e}}
\newcommand{\eb}{\vct{e}}
\newcommand{\emax}{\vct{e}_{max}}
\newcommand{\vct}[1]{\bm{#1}}
\newcommand{\mtx}[1]{\bm{#1}}
\newcommand{\Pc}{{\cal{P}}}
\newcommand{\X}{{\mtx{X}}}
\newcommand{\Vb}{{\mtx{V}}}
\newif\ifdraft
    \newcommand{\asrtodo}[1]{\todo[color=cyan,size=\tiny]{AR: #1}}
    \newcommand{\asr}[1]{\textsf{\textcolor{red}{[\textbf{Ankit}: #1]}}}
    \newcommand{\ylm}[1]{\marginpar{\color{orange}\tiny\ttfamily YL: #1}}
    \newcommand{\ankit}[1]{\textsf{\textcolor{red}{[\textbf{ASR}: #1]}}}
    \newcommand{\shaw}[1]{\textcolor{violet}{#1}}
    \newcommand{\redp}[1]{\textcolor{darkred}{[#1]}}
    \newcommand{\asrtodo}[1]{}
    \newcommand{\asr}[1]{}
    \newcommand{\ylm}[1]{}
    \newcommand{\ankit}[1]{}
    \newcommand{\shaw}[1]{}
    \newcommand{\redp}[1]{{[#1]}}
\date{}
\begin{document}

\ifarxiv
\maketitle

\else
\twocolumn[



\icmlsetsymbol{equal}{*}

\begin{icmlauthorlist}
\icmlauthor{Firstname1 Lastname1}{equal,yyy}
\icmlauthor{Firstname2 Lastname2}{equal,yyy,comp}
\icmlauthor{Firstname3 Lastname3}{comp}
\icmlauthor{Firstname4 Lastname4}{sch}
\icmlauthor{Firstname5 Lastname5}{yyy}
\icmlauthor{Firstname6 Lastname6}{sch,yyy,comp}
\icmlauthor{Firstname7 Lastname7}{comp}
\icmlauthor{Firstname8 Lastname8}{sch}
\icmlauthor{Firstname8 Lastname8}{yyy,comp}
\end{icmlauthorlist}

\icmlaffiliation{yyy}{Department of XXX, University of YYY, Location, Country}
\icmlaffiliation{comp}{Company Name, Location, Country}
\icmlaffiliation{sch}{School of ZZZ, Institute of WWW, Location, Country}

\icmlcorrespondingauthor{Firstname1 Lastname1}{first1.last1@xxx.edu}
\icmlcorrespondingauthor{Firstname2 Lastname2}{first2.last2@www.uk}

\icmlkeywords{Machine Learning, ICML}

\vskip 0.3in
]



\printAffiliationsAndNotice{\icmlEqualContribution} 
\fi
\begin{abstract}
Modern language models rely on the transformer architecture and attention mechanism to perform language understanding and text generation. In this work, we study learning a 1-layer self-attention model from a set of prompts and associated output data sampled from the model. We first establish a precise mapping between the self-attention mechanism and Markov models: Inputting a prompt to the model samples the output token according to a \emph{context-conditioned Markov chain} (CCMC) which weights the transition matrix of a base Markov chain. Additionally, incorporating positional encoding results in position-dependent scaling of the transition probabilities. Building on this formalism, we develop identifiability/coverage conditions for the prompt distribution that guarantee consistent estimation and establish sample complexity guarantees under IID samples. Finally, we study the problem of learning from a single output trajectory generated from an initial prompt. We characterize an intriguing \emph{winner-takes-all} phenomenon where the generative process implemented by self-attention collapses into sampling a limited subset of tokens due to its non-mixing nature. This provides a mathematical explanation to the tendency of modern LLMs to generate repetitive text. In summary, the equivalence to CCMC provides a simple but powerful framework to study self-attention and its properties.

\end{abstract}

\section{Introduction}

The attention mechanism~\cite{vaswani2017} is a key component of the canonical transformer architecture which underlies the recent advances in language modeling~\citep{radford2018_gpt1, radford2019_gpt2, brown2020_gpt3, chowdhery2022_palm, touvron2023_llama}. The self-attention layer allows all tokens within an input sequence to interact with each other. Through these interactions, the transformer assesses the similarities of each token to a given query and composes their value embedding in a non-local fashion. 

In this work, we study the mathematical properties of the one-layer self-attention model where the model is trained to predict the next token of an input sequence. The token generation process via the self-attention mechanism is non-trivial because the generation depends on \textit{entire} input sequence. This aspect is crucial for the capabilities of modern LLMs where the response is conditioned on the user prompt. It is also unlike well-understood topics such as Markov Chains where the model generates the next state based on the current one. Theoretical analysis is further complicated by the fact that the optimization landscape is typically nonconvex. This motivates us to ask:
\begin{adjustwidth}{10pt}{}
\begin{quote}
\textbf{Q:} Can self-attention be formally related to fundamental models such as Markov chains? Can this allow us to study its optimization, approximation, and generalization properties?
\end{quote}
\end{adjustwidth}

Our main contribution is addressing this question by formally mapping the generative process of one self-attention layer to, what we call, \emph{Context-Conditioned Markov Chains (\PCMC)} under suitable conditions. In essence, \PCMC modifies the transition probabilities of a \emph{base Markov chain} according to the sequence of tokens/states observed so far. Thus, learning a 
self-attention layer from the (prompt, output) pairs generated by it can be interpreted as learning a Markov chain from its \textit{context-conditioned} transitions. 

Concretely, we make the following contributions:
\begin{itemize}[leftmargin=4mm, itemsep=1mm, partopsep=0pt,parsep=0pt]
\item \textbf{\PCMC$\Leftrightarrow$ Self-attention (Sec \ref{sect setup}). }We introduce \PCMC and show that it can precisely represent the transition dynamics of self-attention under suitable conditions. Importantly, the optimization of self-attention weights becomes convex, hence tractable via gradient descent, under maximum likelihood estimation ($-\log$ loss).
\item \textbf{Consistency of learning (Sec \ref{sect consistency}).} We study the learnability of a self-attention layer where we observe its outputs for a set of input prompts. 
In practice, this is motivated by the question: \emph{Can we distill the generative capabilities of a language model by collecting its outputs on a set of instructions/prompts?} Through the \PCMC connection, we identify necessary and sufficient \emph{coverage conditions} 
on the prompt distribution that ensures consistent estimation of the underlying model.
\item \textbf{Sample complexity (Sec \ref{sect finite sample}).} Integrating consistency guarantees with finite sample analysis, we develop generalization guarantees for learning a ground-truth self-attention model from its IID (prompt, output) pairs. We establish a fast statistical rate of $\order{K^2/n}$ where $K$ is the size of the token vocabulary and $n$ is the sample size. 
\item \textbf{Learning from single prompt trajectory (Sec \ref{sect single traj}).} Going beyond IID samples, we provide theory and experiments on the learnability of self-attention from a \emph{single trajectory} of its autoregressive generation. Our findings reveal a \emph{distribution collapse} phenomenon where the transition dynamics evolve to generate only one or very few tokens while suppressing the other tokens. This also provides an explanation to why modern LLMs tend to generate repetitive sentences after a while \citep{see2017get, holtzman2019curious, xu2022learning}. Finally, we study the characteristics of self-attention trajectory, identify novel phase transitions, and shed light on when consistent estimation succeeds or fails. 
\item \textbf{The role of positional encoding (Sec \ref{sect position embedding}).} We augment our theory to incorporate positional encoding (PE). We show that PE enriches \PCMC to make transition dynamics adjustable by learnable positional priors.
\end{itemize}

Overall, we believe that CCMC provides a powerful and rigorous framework to study self-attention and its characteristics. Note that, the token generation process implemented by self-attention is inherently non-Markovian because the next generated token depends on the whole past input prompt/trajectory. Thus, CCMC is also a non-Markovian process, however, it admits a simple representation in terms of a \emph{base Markov chain} and the prompt/trajectory characteristics. CCMC is illustrated in Figure \ref{fig:main_figure2} and formally introduced in the next section together with its connection to the self-attention mechanism.
\section{Setup: Markov Chain and Self-Attention}\label{sect setup}
\textbf{Notation. } Let $[n]$ denote the set $\{1,\cdots,  n\}$ for an integer $n \geq 1$.
We use lower-case and upper-case bold letters (e.g., $\ab, \A$) to represent vectors and matrices, respectively. $a_i$ denotes the $i$-th entry of the vector $\ab$. Let $\bPi_\Sc(\cdot)$ denote the projection operator on a set $\Sc$, $\mathbf{1}(E)$ denote the indicator function of an event $E$, and $\sft{\cdot}:\R^L\rightarrow\R^L$ denote the softmax operation. We use $\lesssim, \gtrsim$ for inequalities that hold up to constant/logarithmic factors.

\subsection{Context-Conditioned Markov Chain (\PCMC)}
Let $\Pb=[\bpi_1 \dots \bpi_K] \in \R^{K \times K}$ be the transition matrix associated with a base Markov chain where the $i$-th column $\bpi_i = (\pi_{i1},\ldots, \pi_{iK})\in\R^K$ captures the transition probabilities from state $i\in[K]$ with entries adding up to $1$. Thus, given random state sequence $(x_t)_{t\geq 1}$ drawn according to $\Pb$, we have that $\P(x_{t+1} = j | x_{t} = i ) = \bpij$.

Now consider the modified transitions for $\Pb$ where transition probabilities are weighted according to a vector $\m\in\R^K$ with non-negative entries. Concretely, we consider the following transition model
\begin{align}\label{eqn masked markov chain}
    \P_{\maskb}(x_{t+1} = j | x_{t} = i ) = \pi^{\m}_{ij}:=\frac{m_j \cdot \bpij}{ \maskb^\top \bpi_i}.
\end{align} 
Note that this transition model is still a standard Markov chain with updated transition probabilities $\pi^{\m}_{ij}=\frac{m_j \cdot \bpij}{ \maskb^\top \bpi_i}$. In contrast, we now introduce the setting where weighting $\m$ changes as a function of the input sequence.

\textbf{Context-conditioned Markov Chain (\PCMC).} \PCMC is a \textit{non-Markovian} transition model derived from a base transition matrix $\Pb$. To proceed, given a state trajectory $X=(x_t)_{t=1}^L\in [K]^L$, let us define $\Fc(X)$ to be the empirical frequencies of individual states where
\begin{align}
\label{eq:freq-func}
\Fc(X)_k=\frac{|\{t \in [L]~:~x_t = k\}|}{L},\quad \forall k \in [K].
\end{align}
\PCMC is obtained by weighting the standard Markov chain transitions according to $\Fc(X)$ determined by $X$.
\begin{definition}\label{def pcmc}Let $X=(x_t)_{t=1}^L$ and $\maskb=\Fc(X)$. Given a transition matrix $\Pb$, the associated \PCMC transition from state $x_L$ to $x_{L+1}$ is governed by $\bpi^X \triangleq \bpi^{\Fc(X)}_{x_L} \in\R^K$ defined as
\begin{align}\label{PCMC transition}
    \P_{\Pb}(x_{L+1} = j | X ) = \pi^X_j:=\frac{\mask_j \cdot \pi_{\xxl, j}}{ \maskb^\top \bpi_{\xxl}}.
\end{align} 
\end{definition}
Here, note that the last element $x_L$ of $X$ still serves as the state of the Markov chain; however, transitions are weighted by state frequencies, which can be observed in Figure \ref{fig:main_figure2}. These frequencies will be evolving as the model keeps generating new states. In the context of language modeling, $X=(x_t)_{t=1}^L$ will correspond to the \emph{prompt} that we input to the model and $(x_{t})_{t> L}$ will be the model's response: Sections \ref{sect consistency} and \ref{sect finite sample} will explore the learnability of underlying dynamics $\Pb$ from multiple diverse prompts and the corresponding model generations. On the other hand, Section \ref{sect single traj} will study learnability from an infinite trajectory generated from a single prompt.



As we shall see, Definition \ref{def pcmc} captures the dynamics of a 1-layer self-attention model when there are no positional encodings. In Section \ref{sect position embedding}, we introduce a more general setting where the transition dynamics of \PCMC incorporates the positional information of the state trajectory. This enriched model will similarly capture self-attention with absolute positional encoding.



\subsection{Attention-based Token Generation}
\begin{figure*}[tb] 
   \centering    \includegraphics[width=0.9\textwidth]{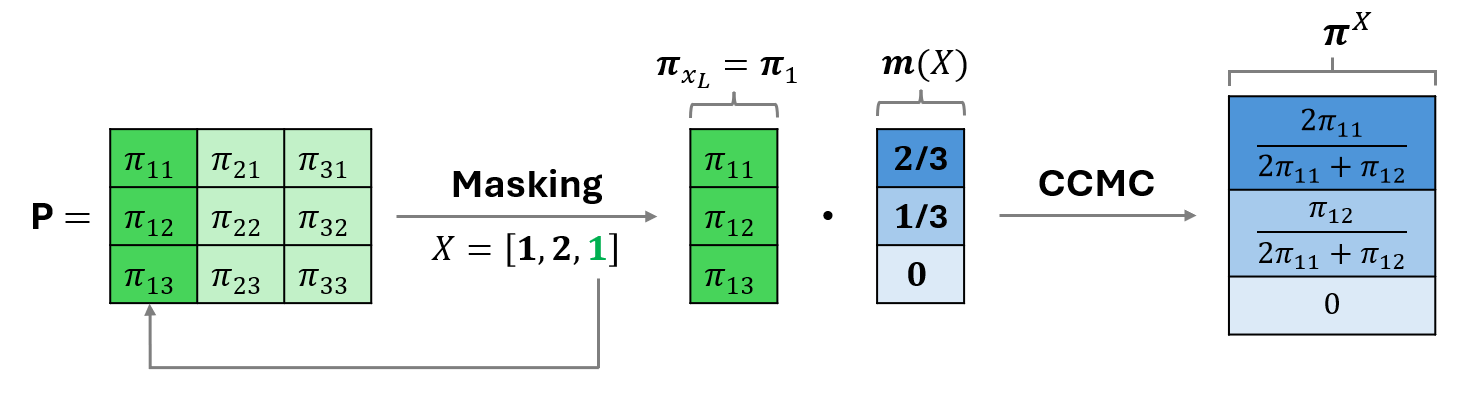}
   \vspace{-10pt}
   \caption{Demonstration of Definition \ref{def pcmc}. We provide an example where the vocabulary size $K = 3$ and the input prompt $X = [1, 2, 1]$, which results in a frequency vector $\Fc(X)$. $\Pb$ represents the transition matrix of the base Markov chain.
   }
       \vspace{-10pt}
   \label{fig:main_figure2}
\end{figure*}
We consider a single attention head which admits a sequence of tokens and outputs the next token. Suppose we have a vocabulary of $K$ tokens denoted by $[K]$, which precisely corresponds to the set of states in the (base) Markov chain. To feed tokens to the attention layer, we embed them to get an embedding matrix $\Eb=\begin{bmatrix}\eb_1~\dots~\eb_K\end{bmatrix}^\top\in\R^{K\times d}$, where $\eb_i\in\R^d$ is the embedding of the $i$-th token. To feed a prompt $X=(x_i)_{i=1}^L$ to the attention layer, we first obtain its embedding as follows:
\[ 
\X=[\x_1\dots\x_L]^\top\in\R^{L\times d}\quad\text{where}\quad \x_i:=\eb_{x_i},\forall i\in[L].
\]
Throughout, we consistently denote the embedding of a discrete sequence $X$ and token $x_i$ by $\X$ and $\x_i$, respectively.

\noindent\textbf{Self-attention model.} A single-layer self-attention head predicts the next token based on the input prompt $X$, with the last token $x_L$ forming the query token in the attention layer and playing a distinct role in sampling the next token $x_{L+1}$. Let us denote the combined key-query weights by a trainable matrix $\W \in\R^{d\times d}$, and assume the value weights to be the identity matrix, i.e., $\Vb = \Ib_{d}$. With these, the self-attention layer $f_{\W}$ outputs
\begin{align} 
\label{eq:sattn-def}
f_{\W}(X)&=\X^\top\s_X\quad\text{where}\quad\s_X=\sft{\X\W\xl}.\tag{SA}
\end{align}
Here $\sft{\cdot}$ is the softmax function and $\s_X\in\R^L$ is the softmax probability output associated with $X$. A useful observation is that $f_{\W}(X)$ produces probability-weighted combination of the input tokens. To proceed, let us define a transition matrix $\Pbw\in\R^{K\times K}$ associated with the attention model weights $\W$ as follows:
\begin{align}
\Pbw=[\bpi_1~\dots~\bpi_K] \quad\text{where}\quad \bpi_i=\sft{\Eb\W\eb_i}.\label{pbw def}
\end{align}
Next, we observe the following identity on self-attention. 

\begin{lemma}\label{lemma identity} Let $(X, y)$ be an arbitrary pair of (prompt, next token). Define $\bpi^X \in \R^K$ based on $\Pb^{\W}$ using Definition \ref{def pcmc}. We have that
\[ 
f_{\W}(X)=\X^\top\s_X=\Eb^\top \bpi^X.
\]
\end{lemma}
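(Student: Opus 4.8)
The plan is to simply unfold both sides and match them term by term. Starting from the definition \eqref{eq:sattn-def}, writing the softmax weights explicitly and using $\x_i = \eb_{x_i}$, we have $f_{\W}(X) = \X^\top\s_X = \sum_{i=1}^L (\s_X)_i\,\eb_{x_i}$ with $(\s_X)_i = \exp(\eb_{x_i}^\top\W\eb_{x_L})/\sum_{j=1}^L \exp(\eb_{x_j}^\top\W\eb_{x_L})$. The entire content of the lemma is the observation that this sum over the $L$ \emph{positions} collapses into a frequency-weighted sum over the $K$ distinct \emph{tokens}.

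First I would regroup the sum by token identity. For each $k\in[K]$, all positions $i$ with $x_i = k$ contribute the same vector $\eb_k$, and by \eqref{eq:freq-func} there are exactly $L\cdot\Fc(X)_k$ of them; the same regrouping applied to the normalizing denominator gives $\sum_{j=1}^L \exp(\eb_{x_j}^\top\W\eb_{x_L}) = L\sum_{k'=1}^K \Fc(X)_{k'}\exp(\eb_{k'}^\top\W\eb_{x_L})$. After the common factor $L$ cancels, this yields
\[
f_{\W}(X) = \sum_{k=1}^K \frac{\Fc(X)_k\,\exp(\eb_k^\top\W\eb_{x_L})}{\sum_{k'=1}^K \Fc(X)_{k'}\exp(\eb_{k'}^\top\W\eb_{x_L})}\,\eb_k.
\]

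Next I would bring in $\Pbw$ from \eqref{pbw def}: its $x_L$-th column is $\bpi_{x_L} = \sft{\Eb\W\eb_{x_L}}$, so $\pi_{x_L,k} = \exp(\eb_k^\top\W\eb_{x_L})/Z$ where $Z := \sum_{k'=1}^K \exp(\eb_{k'}^\top\W\eb_{x_L})$ does not depend on $k$. Substituting $\exp(\eb_k^\top\W\eb_{x_L}) = Z\,\pi_{x_L,k}$ in both numerator and denominator cancels $Z$, and with $\maskb = \Fc(X)$ we obtain $f_{\W}(X) = \sum_{k=1}^K \big(\mask_k\pi_{x_L,k}/(\maskb^\top\bpi_{x_L})\big)\,\eb_k$. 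By Definition \ref{def pcmc} the coefficient of $\eb_k$ is precisely $\pi^X_k$, so $f_{\W}(X) = \sum_{k=1}^K \pi^X_k\,\eb_k = \Eb^\top\bpi^X$, which is the claim.

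I do not expect a genuine obstacle: the argument is a direct computation. The only thing requiring care is the bookkeeping between position indices $i\in[L]$ and token indices $k\in[K]$, together with the remark that the per-column normalization constant $Z$ of $\Pbw$ is a common factor that drops out of the ratio — which is exactly why the \emph{frequencies} $\Fc(X)$, rather than raw counts, are the correct weighting in Definition \ref{def pcmc}.
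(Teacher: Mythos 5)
Your proof is correct and follows essentially the same route as the paper's: group the $L$ positional terms by token identity to produce the count/frequency weights, then observe that the softmax normalization constant of the column $\bpi_{x_L}$ (the paper's $s_0$, your $Z$) cancels in the ratio, yielding exactly the \PCMC transition vector $\bpi^X$. The paper merely packages the regrouping via the mapping matrix $\M$ with $\X=\M\Eb$ and $\M^\top\mathbf{1}_L=\m$, which is the same bookkeeping you carry out with explicit sums.
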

Lemma~\ref{lemma identity} highlights a fundamental connection between self-attention and \PCMC, which we leverage by defining the following sampling. The proof is provided in Appendix \ref{lemma app identity}.

\noindent\textbf{Sampling-from-softmax.} 
The idea is sampling the next token proportional to its contribution to the output of the self-attention layer. This is equivalent to sampling the next token according to its total probability within the softmax-attention map given by $\bpi^X$. Thus, \emph{sampling-from-softmax} with weights $\W\in\R^{d\times d}$ is mathematically equivalent to a \PCMC with transition dynamics \eqref{pbw def}.

In what follows, we will introduce and investigate an attention-based next token generation model that implements the \emph{sampling-from-softmax} procedure. Let $\Cb\in \R^{K\times d}$ be the linear prediction head. Following attention output $f_{\W}(X)$, we sample the next token from $\Cb f_{\W}(X)\in\R^K$. We will utilize the following assumption.

\begin{assumption}\label{assume iden} 
The vocabulary embeddings $(\eb_k)_{k=1}^K$ are linearly independent and the classifier obeys $\Cb\Eb^\top=\Iden_{K}$.
\end{assumption}

This assumption is a slightly stronger version of the weight-tying, where the output and input embedding are the same \cite{press2017using}. In addition to the weight-tying, we apply orthogonalization to the output embedding with the linearly independent conditions so that the output embedding only interacts with the corresponding token.  This assumption also requires that the token embeddings are over-parameterized and $d\geq K$. Under this assumption, applying Lemma \ref{lemma identity}, we find that $\Cb f_{\W}(X)=\bpi^X$. Thus, sampling from the classifier output $\Cb f_{\W}(X)$ becomes equivalent to \emph{sampling-from-softmax}. While the assumption is rather strong, it will enable us to develop a thorough theoretical understanding of self-attention through the \PCMC connection and convex log-likelihood formulation that facilitates consistency and sample complexity analysis.






\textbf{Bijection between attention and \PCMC dynamics.} We will next show that, under Assumption \ref{assume iden} there is a bijection between weights $\W$ and stochastic matrices $\Pb$. This will be established over a $(K-1)\times K$ subspace of $d\times d$ which is the degrees of freedom of the stochastic matrices.
\begin{definition}\label{defn Stoken}
    Let $\Sall \subset \R^{d \times d}$ be the subspace spanned by the matrices in $\{(\eb_i - \eb_j) \eb_k^\top :~i,j,k \in [K]\}$. 
\end{definition}

The next lemma states that the projection of $\W$ orthogonal to $\Sall$ has no impact on the model output and justifies the definition of $\Sall$.
\begin{lemma}\label{lemma Stoken}
For all $\W \in \R^{d \times d}$ and $X$, we have $f_{\W}(X) = f_{\bPi_{\Sall}(\W)}(X)$.

\end{lemma}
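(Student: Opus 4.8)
The plan is to exploit the fact that the self-attention output depends on $\W$ only through a small collection of bilinear forms evaluated at token embeddings, and that those bilinear forms are precisely the linear functionals that cut out the subspace $\Sall$.

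First I would note, from the definition \eqref{eq:sattn-def}, that $f_\W(X) = \X^\top \sft{\X\W\xl}$, so the only way $\W$ enters is through the logit vector $\z := \X\W\xl \in \R^L$, whose $i$-th coordinate is $z_i = \x_i^\top \W \xl = \eb_{x_i}^\top \W \eb_{x_L}$. Since softmax is invariant to adding a common scalar to all coordinates, i.e.\ $\sft{\z + c\onebb} = \sft{\z}$ for every $c\in\R$, the output $f_\W(X)$ is in fact a function only of the differences $z_i - z_L = (\eb_{x_i} - \eb_{x_L})^\top \W \eb_{x_L}$, $i\in[L]$ (any fixed reference coordinate would do equally well; the raw logit $\eb_{x_i}^\top\W\eb_{x_L}$ on its own need not be a functional associated to $\Sall$, but the difference is).

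Next I would rewrite each such difference as a Frobenius inner product: for any $a,b,c\in[K]$,
\[
(\eb_a - \eb_b)^\top \W \eb_c \;=\; \tr\!\big(\eb_c (\eb_a - \eb_b)^\top \W\big) \;=\; \big\langle (\eb_a - \eb_b)\eb_c^\top,\ \W \big\rangle,
\]
where $\langle \A,\B\rangle := \tr(\A^\top\B)$. By Definition \ref{defn Stoken}, the matrix $(\eb_a - \eb_b)\eb_c^\top$ is one of the spanning matrices of $\Sall$, hence lies in $\Sall$. Decomposing $\W = \bPi_{\Sall}(\W) + \W^\perp$ with $\W^\perp$ orthogonal to $\Sall$ (in the Frobenius inner product), we get $\langle (\eb_a - \eb_b)\eb_c^\top, \W\rangle = \langle (\eb_a - \eb_b)\eb_c^\top, \bPi_{\Sall}(\W)\rangle$. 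Taking $a = x_i$, $b = c = x_L$ shows that the logit differences produced by $\W$ and by $\bPi_{\Sall}(\W)$ coincide for every $i\in[L]$; therefore $\sft{\X\W\xl} = \sft{\X\bPi_{\Sall}(\W)\xl}$, and left-multiplying by $\X^\top$ yields $f_\W(X) = f_{\bPi_{\Sall}(\W)}(X)$.

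There is no substantive obstacle here; the only points requiring care are (i) invoking softmax shift-invariance so that one reasons with logit \emph{differences} — which are genuinely the functionals defining $\Sall$ — rather than with raw logits, and (ii) fixing the convention that $\bPi_{\Sall}$ denotes orthogonal projection with respect to the Frobenius inner product on $\R^{d\times d}$, consistent with $\Sall$ being defined as a linear subspace of $\R^{d\times d}$.
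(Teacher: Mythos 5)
Your proof is correct and follows essentially the same route as the paper: the paper decomposes $\W = \bPi_{\Sall}(\W) + \bPi_{\Sall^\perp}(\W)$ and observes that the orthogonal-complement part only adds a constant vector $c\onebb_L$ to the logits $\X\W\xl$, which softmax ignores, while you equivalently phrase this as the logit \emph{differences} being Frobenius inner products with the spanning matrices $(\eb_a-\eb_b)\eb_c^\top$ of $\Sall$ and hence unchanged under the projection. If anything, your formulation is slightly cleaner, since the paper's intermediate claim that $\eb_i^\top\bPi_{\Sall^\perp}(\W)\eb_j$ is a single constant over all $i,j$ is stated a bit too strongly (the constant may depend on the column index $j$), whereas only constancy in $i$ for the fixed query $j=x_L$ is needed — exactly what your inner-product argument delivers.
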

The proof of this lemma is provided in Appendix \ref{lemma app Stoken}. With this, we are ready to establish the equivalency between the \PCMC and self-attention dynamics.
\begin{figure*}[tb] 
   \centering    \includegraphics[width=0.9\textwidth]{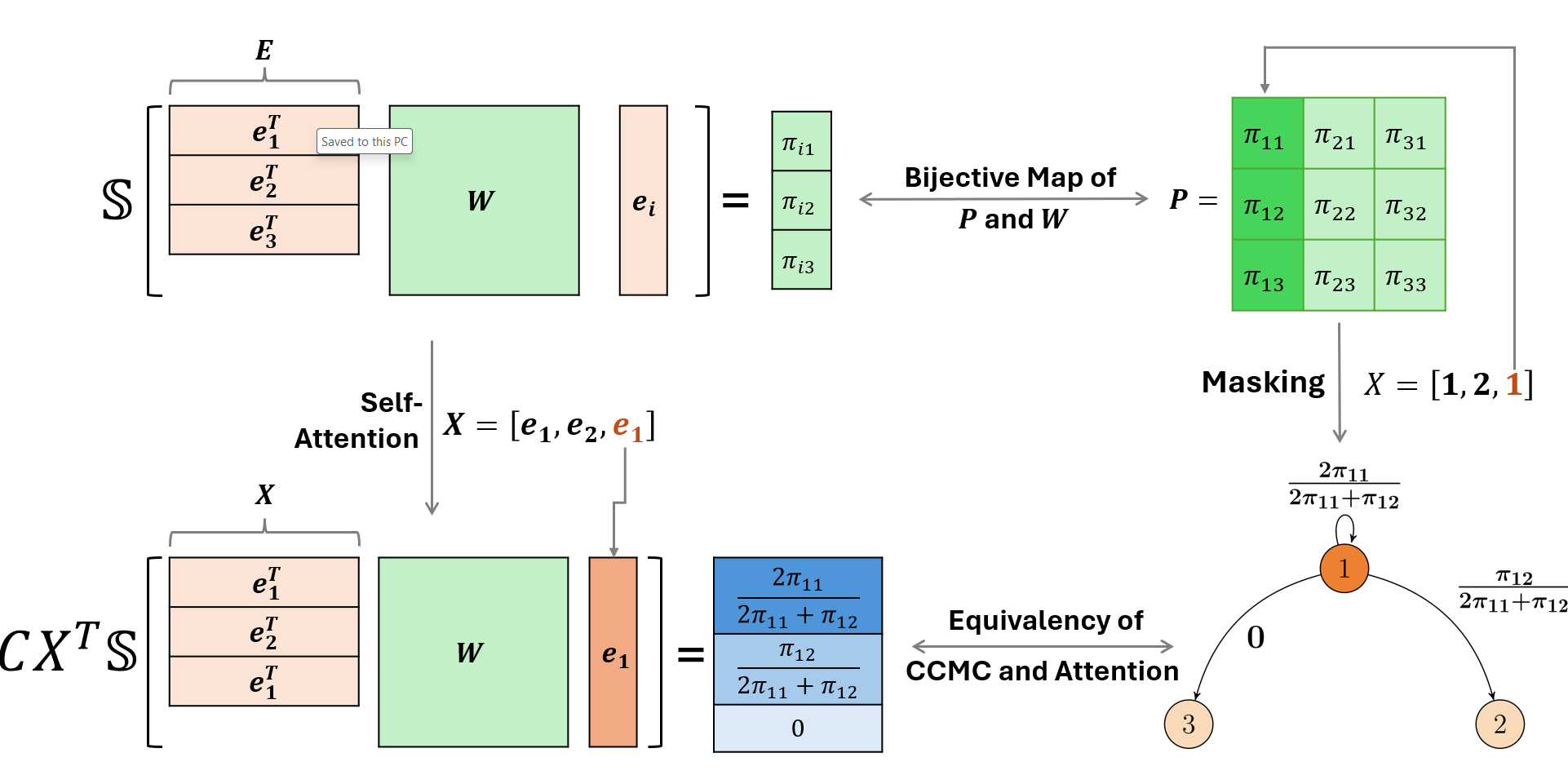}
   \vspace{-10pt}
   \caption{Illustration of the Equivalency between the Attention and PCMC models. We provide an example where the vocabulary size $K=3$ and the input prompt is $X = [1,2, 1]$. The upper figure represents how the token probabilities $\sft{\Eb\W\eb_i}$ can be mapped to a base transition matrix $\Pb$. The left-lower figure demonstrates the output of the self-attention given an input prompt $\X$. The right-lower figure derives CCMC transitions from this $\Pb$ given the same prompt. The resulting next token probabilities are the same for both of the models. The masking operation is demonstrated in a more detailed way in Figure \ref{fig:main_figure2}.
   }
       \vspace{-10pt}
   \label{fig:main_figure}
\end{figure*}




\begin{theorem}\label{theorem equivalency ps}
    Suppose Assumption \ref{assume iden} holds. For any transition matrix $\Pb$ with non-zero entries, there is a unique $\W\in \Sall$ such that for any prompt $X\in[K]^L$ and next token $y=x_{L+1}\in[K]$,
    \begin{align*}
        \P_{\Pb}(y | X ) = \cb_{y}^\top \X^\top \sft{\X \W \xl}
    \end{align*}
    where $\cb_y$ is the $y$-th row of the linear prediction head $\Cb$.
\end{theorem}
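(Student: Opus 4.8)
The plan is to use Assumption~\ref{assume iden} and Lemma~\ref{lemma identity} to recognize the right-hand side of the claimed identity as a \PCMC transition of $\Pbw$, thereby reducing the theorem to the assertion that $\W\mapsto\Pbw$ is a bijection from $\Sall$ onto the set of $K\times K$ transition matrices with strictly positive entries, and then to prove this bijection from the linear independence of the token embeddings. For the reduction: fix $\W$ and let $\bpi^X$ be built from $\Pbw$ as in Definition~\ref{def pcmc}. Lemma~\ref{lemma identity} gives $\X^\top\sft{\X\W\xl}=\Eb^\top\bpi^X$, and Assumption~\ref{assume iden} makes $\cb_y^\top\Eb^\top$ equal to the $y$-th row of $\Iden_K$, so $\cb_y^\top\X^\top\sft{\X\W\xl}=\pi^X_y=\P_{\Pbw}(y\mid X)$. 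Now take $X$ to be any permutation of $[K]$: then $\Fc(X)=\mathbf{1}/K$ and, because the entries of each column of a transition matrix sum to $1$, $\pi^X_y$ collapses to the base transition probability from $x_L$ to $y$. Hence the claimed identity, restricted to such prompts, says exactly that the transitions of $\Pbw$ out of state $x_L$ equal those of $\Pb$; since $x_L$ may be any token, this forces $\Pbw=\Pb$, while conversely $\Pbw=\Pb$ makes the two sides the same \PCMC transition for \emph{every} prompt. So the theorem is equivalent to the bijection claim, which I prove in two parts.

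\emph{Surjectivity.} Given a transition matrix $\Pb$ with positive entries, note that softmax is unchanged by adding a constant vector to its input, so it suffices to find $\W$ with $\Eb\W\eb_i=\log\bpi_i$ for every $i$, equivalently with $\Eb\W\Eb^\top$ equal to the matrix whose $i$-th column is $\log\bpi_i$. Since the embeddings are linearly independent, $\Eb$ has full row rank, so $\W\mapsto\Eb\W\Eb^\top$ is onto $\R^{K\times K}$ and some $\W_0\in\R^{d\times d}$ works. Set $\W:=\bPi_{\Sall}(\W_0)\in\Sall$. By Lemma~\ref{lemma Stoken}, $f_\W=f_{\W_0}$, and evaluating both on the permutation-of-$[K]$ prompts from the reduction (and using that $\Eb^\top$ has full column rank) gives $\Pbw=\Pb^{\W_0}=\Pb$. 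Thus $\W\in\Sall$ is a preimage of $\Pb$.

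\emph{Injectivity.} Suppose $\W_1,\W_2\in\Sall$ induce the same $\Pbw$, and set $\W=\W_1-\W_2\in\Sall$. Then $\sft{\Eb\W_1\eb_i}=\sft{\Eb\W_2\eb_i}$ for all $i$, so by softmax's shift invariance $\Eb\W\Eb^\top=\mathbf{1}\cb^\top$ for some $\cb\in\R^K$. On the other hand every element of $\Sall$ can be written $\W=\sum_k\ub_k\eb_k^\top$ with each $\ub_k\in\mathrm{span}\{\eb_a-\eb_b:a,b\in[K]\}$; by linear independence this means $\ub_k=\sum_aA_{ak}\eb_a$ with every column of $A=(A_{ak})$ summing to zero, and with $G:=\Eb\Eb^\top$ a direct expansion gives $\Eb\W\Eb^\top=GAG$. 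Equating $GAG=\mathbf{1}\cb^\top$ gives $A=G^{-1}\mathbf{1}\,\cb^\top G^{-1}$, and the zero-column-sum condition $\mathbf{1}^\top A=0$ forces $(\mathbf{1}^\top G^{-1}\mathbf{1})\,\cb^\top G^{-1}=0$. Since $G$ is positive definite, $\mathbf{1}^\top G^{-1}\mathbf{1}>0$, so $\cb=0$, hence $A=0$ and $\W=0$, i.e.\ $\W_1=\W_2$. This completes the bijection and the theorem. The one genuinely delicate step is this injectivity argument, where the ``difference-of-embeddings'' structure defining $\Sall$ must be reconciled with softmax's invariance under shifts by $\mathbf{1}$; positive-definiteness of the Gram matrix is exactly what breaks the tie. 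The reduction and surjectivity are light given Lemmas~\ref{lemma identity} and~\ref{lemma Stoken} and Assumption~\ref{assume iden}.
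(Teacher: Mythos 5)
Your proof is correct, and while the existence half runs along the same lines as the paper's, your overall architecture and especially your uniqueness argument take a genuinely different route. You first reduce the theorem to the statement that $\W\mapsto\Pbw$ is a bijection from $\Sall$ onto positive transition matrices, using permutation-of-$[K]$ prompts (for which $\Fc(X)=\mathbf{1}/K$ and the \PCMC transition collapses to the base chain) to show that the identity holding for all prompts is equivalent to $\Pbw=\Pb$; the paper instead matches the two formulas directly via Lemma~\ref{lemma identity} without isolating this reduction. For existence both arguments solve $\Eb\W\Eb^\top=\log\Pb$ using the full row rank of $\Eb$, but you are more careful in then projecting onto $\Sall$ and invoking Lemma~\ref{lemma Stoken} to land inside the subspace, a step the paper's write-up glosses over. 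The real divergence is uniqueness: the paper picks a direction $(\eb_i-\eb_j)\eb_k^\top$ on which $\W_1-\W_2$ has nonzero projection and exhibits a concrete two-token prompt $X=[i,j]$ with query $k$ on which the two models' outputs differ; you instead parametrize $\Sall$ as $\{\Eb^\top A\Eb:\mathbf{1}^\top A=0\}$, use softmax shift-invariance to force $\Eb(\W_1-\W_2)\Eb^\top=\mathbf{1}\cb^\top$, and solve $GAG=\mathbf{1}\cb^\top$ with the positive-definite Gram matrix $G=\Eb\Eb^\top$ to conclude $A=0$. The paper's argument is shorter and more constructive (it names the witnessing prompt); yours buys a cleaner structural picture of $\Sall$ and makes explicit exactly how the zero-column-sum constraint kills the residual shift ambiguity of softmax, which is the content hidden inside the paper's choice of the difference direction.
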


The proof of this theorem is provided in Appendix \ref{theorem app equivalency ps}. Note that for any $\W \in \R^{d \times d}$, there exists a transition matrix that satisfies the above by Lemma \ref{lemma identity}. As a result, Theorem \ref{theorem equivalency ps} establishes the equivalence between the \PCMC model and the self-attention model in the sense that $\P_{\Pb}(y| X)$ matches the output distribution of the self-attention model for any input prompt $X$. We will utilize this for learning latent attention models in Sections \ref{sect consistency} and \ref{sect single traj}. 

\subsection{Cross-Attention Model}\label{sec cross}
We also consider the cross-attention model where the query token $\xl$ is not among the keys for the attention head. Thus, $\xl$ becomes a free variable resulting in a more flexible transition model. The cross-attention is given by
\begin{align} 
\label{eq:cattn-def}
&\text{Key tokens:}~\bar{X}=[x_1~\dots~x_{L-1}] \nonumber \\
&f^{\texttt{CA}}_{\W}(X)=\Xb^\top \sft{\Xb\W\xl}\in\R^d.\tag{CA}
\end{align}

The \PCMC associated with the cross-attention only slightly differs from Definition \ref{def pcmc}. Now, the transition probabilities are defined with $\maskb=\Fc(\bar{X})$ because the model can only sample from the states/keys contained in $\bar{X}$. This clearly disentangles the \emph{state} $\xl$ of the \PCMC and the transition weighting vector $\Fc(\bar{X})$. Specifically, unlike self-attention, this \PCMC is not biased towards transitioning to the last token $\xl$ and we can entirely mask out last token in the next transition (as soon as $\xl$ is not contained within $\bar{X}$).
\section{Consistent Estimation: When can we learn an attention layer by prompting it?}\label{sect consistency}

In this section, our interest is learning a ground-truth attention layer by sampling (prompt, next-token) pairs. Let $\Dcx$ denote the distribution of input prompts. We assume $\Dcx$ has a finite support and denote its support by $\Omega$, which is a set of prompts. Finally, let $\Wgrd$ denote the ground-truth attention weights which will be our generative model. Specifically, we will sample the next token $y\in[K]$ according to the model output $\Cb f_{\Wgrd}(X)$ under Assumption \ref{assume iden}. Let $\Dcxy$ be this distribution of $(X, y)$ such that $X\sim\Dcx$ and $y\bgl X$ is sampled from $\Cb f_{\Wgrd}(X)$.

Throughout the section, we identify the conditions on the input prompt distribution $\Dcx$ and $\Wgrd$ that guarantee consistent learning of the attention matrix $\Wgrd$ in the population limit (given infinitely many samples from $\Dcxy$). We will investigate the maximum likelihood estimation procedure which is obtained by minimizing the negative log-likelihood. As a result, our estimation procedure corresponds to the following optimization problem.
\begin{align}
&\noindent\Wst = \arg \min_{\W \in \Sall} \Lc(\W)\quad \text{where} \label{defn Wstr}\\
 &\Lc(\W) = \E_{(X,y) \sim \Dcxy}[-\log(\cb_{y}^\top \X^\top \sft{\X\W\xl})]. \nonumber 
\end{align}
Note that the prompt length is not necessarily the same. Here we use $\x_L$ to simplify the notation and it presents the last/query token of prompt $\X$.
\begin{definition}\label{defn consistency of estimation}
The estimator \eqref{defn Wstr} is \textbf{consistent} if $\Wst=\Wgrd$ when data $\Dcxy$ is sampled from $\Wgrd$. Otherwise, the estimation is called inconsistent.
\end{definition}





Observe that the optimization in \eqref{defn Wstr} is performed over the subspace $\Sall$. As discussed in Lemma \ref{lemma Stoken}, its orthogonal complement $\Sall^\perp$ has no impact on the output of the attention model. Similarly, the gradient of $\Lc(\W)$ is zero over $\Sall^\perp$, thus $\Sall^\perp$ is essentially the \emph{null space} of the problem where token embeddings simply don't interact. For this reason, we make the following assumption.
\begin{assumption}\label{assume stoken consistency}
    The ground truth obeys $\Wgrd\in\Sall$.
\end{assumption}

Learning the ground-truth model is challenging for two reasons: First, through each prompt, we only collect partial observations of the underlying model. It is not clear if these observations can be \emph{stitched} to recover the full model. For instance, if we query an LLM on a subset of domains (e.g.,~only query about medicine and law), we might not be able to deduce its behavior on another domain (e.g.,~computer science, math). Second, optimization of self-attention is typically non-convex \cite{tarzanagh2023transformers} and does not result in global convergence. Fortunately, under Assumption \ref{assume iden}, \eqref{defn Wstr} becomes a convex problem \cite{anonymous} which we will leverage in our analysis at the end of this section.

Section \ref{sect setup} established the equivalency between the \PCMC model and the self-attention model under Assumption \ref{assume stoken consistency}. This enables us to prove the consistency of estimation for the self-attention model through the consistency of estimation for the \PCMC model. We define the population estimate of the transition matrix for the \PCMC model as follows:
\begin{align}
            &\Pst = \arg \min_{\Pb} \Lc(\Pb)\quad\text{where} \label{defn Pstr}\\
    &\Lc(\Pb) = \E_{(X,y) \sim \Dcxy}[-\log(\P_{\Pb}(y | X))]. \nonumber
\end{align}
Through Theorem \ref{theorem equivalency ps}, there is a mapping between the optimal solution set of \eqref{defn Wstr} and \eqref{defn Pstr}.

The consistency conditions on the input prompt distribution and the ground truth variable $\Wgrd$, or equivalently $\Pgrd \triangleq \Pb^{\Wgrd},$ are related to \emph{how well input prompts $\Omega$ cover the pairwise token relations}. To characterize this, for each last token (i.e.,~query token), we define an undirected co-occurrence graph as follows.
\begin{definition}\label{defn cooccurrence graph}
     Let $\Omega_k\subset\Omega$ be the set of input prompts whose last tokens are equal to $k$ for all $k\in [K]$. Define the co-occurrence graph $\Gc^{(k)}$ with $K$ vertices as follows: There is an edge between $i$-th and $j$-th nodes in $\Gc^{(k)}$ iff there is a prompt $X\in\Omega_k$ such that its \emph{key tokens} include both $i$ and $j$. Here, \emph{key tokens} are all tokens for self-attention and all but last tokens for cross-attention (i.e.,~$\bar{X}$ in \eqref{eq:cattn-def}).
\end{definition}
A simple example highlighting the difference between self-attention and cross-attention is given in Figure~\ref{fig:consistency_intro}. Based on this graph, the consistency of estimation is then translated to the connectivity of the graphs $\Gc^{(k)}$. This is summarized in the following theorem.

\begin{figure*}[tb] 
    \centering    \includegraphics[width=0.9\textwidth]{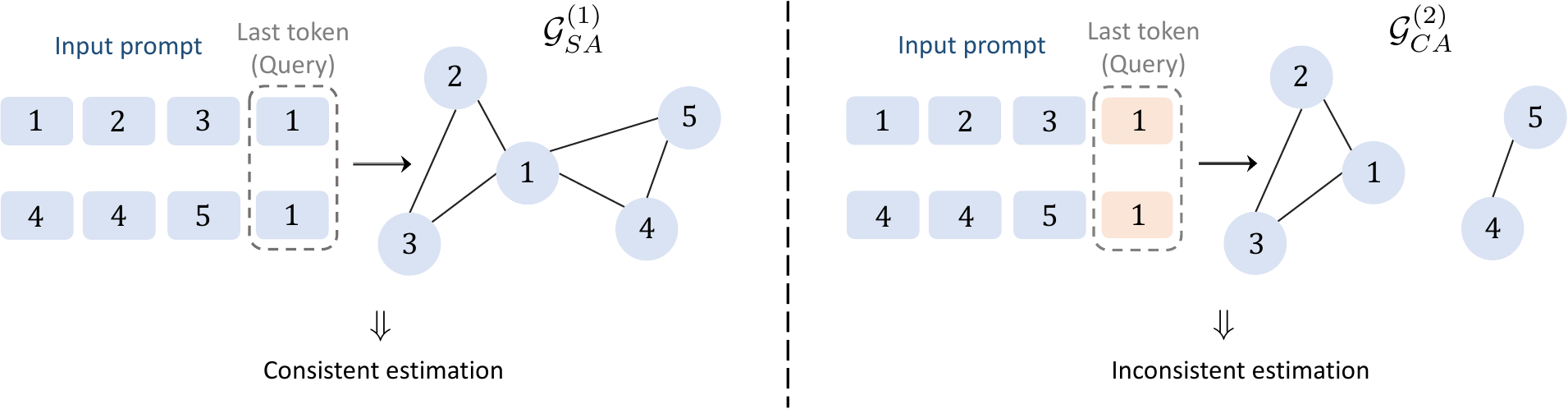}
    \vspace{-10pt}
    \caption{\small{{Illustration of co-occurrence graphs for the self-attention (\textbf{Left}) and cross-attention (\textbf{Right}) models}. We fit the same input examples where the only difference is the use of self- vs cross-attention (which includes vs excludes the query token `$1$' from the list of key tokens). Following Theorem~\ref{theorem consistency}, in the cross-attention setting, where the token `$1$' is not contained in the prompt, the co-occurrence graph becomes disconnected, resulting in inconsistent estimation. In contrast, the estimation is consistent for the self-attention model since both inputs share the same query token within their key tokens. 
    }}
        \vspace{-10pt}
    \label{fig:consistency_intro}
\end{figure*}
\begin{theorem}\label{theorem consistency}
    Let $\Pgrd$ be a transition matrix with non-zero entries. Let $(\Gc^{(k)})_{k=1}^K$ be the co-occurrence graphs based on the input distribution $\Dcx$. Then, the estimation $\Pst$ in \eqref{defn Pstr} is consistent iff $\Gc^{(k)}$ is a connected graph for all $k \in [K]$. 
\end{theorem}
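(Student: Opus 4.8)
The plan is to reduce the population negative log-likelihood to a Kullback--Leibler divergence, read off the set of global minimizers, and then translate identifiability into connectivity of the co-occurrence graphs. Throughout I would treat self- and cross-attention uniformly by writing $\mathcal K(X)\subseteq[K]$ for the set of \emph{key tokens} of a prompt $X$ (all tokens of $X$ in the self-attention case, all but the last in the cross-attention case); by construction the weighting vector $\maskb$ used in the \PCMC transition is supported exactly on $\mathcal K(X)$, the conditional $\P_{\Pb}(\cdot\mid X)$ is supported on $\mathcal K(X)$, and $\P_{\Pb}(y\mid X)=m_y\,\pi_{ky}\big/\sum_{j\in\mathcal K(X)}m_j\,\pi_{kj}$ for $k=x_L$ and $y\in\mathcal K(X)$. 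I take $\Pb$ to range over column-stochastic matrices (the natural domain of a transition matrix), which also fixes the otherwise-free per-column scaling appearing in this formula.

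First I would write $\Lc(\Pb)=\E_{X\sim\Dcx}\big[H(\P_{\Pgrd}(\cdot\mid X))+\mathrm{KL}(\P_{\Pgrd}(\cdot\mid X)\,\|\,\P_{\Pb}(\cdot\mid X))\big]$, using that $y\mid X$ is sampled from $\P_{\Pgrd}(\cdot\mid X)$. The entropy term is independent of $\Pb$, the KL term is nonnegative, and it vanishes at $\Pb=\Pgrd$; hence the minimizer set is exactly $\{\Pb:\P_{\Pb}(\cdot\mid X)=\P_{\Pgrd}(\cdot\mid X)\text{ for all }X\in\Omega\}$, which always contains $\Pgrd$. So the estimator is consistent precisely when this set is the singleton $\{\Pgrd\}$, and it remains to characterize when matching all observed prompt-conditionals forces $\Pb=\Pgrd$. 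The key structural point is that $\P_{\Pb}(\cdot\mid X)$ depends only on the column $\bpi_k$ indexed by the query token $k=x_L$, and, since the frequencies $m_j$ are observable, matching the conditionals for a prompt $X\in\Omega_k$ is equivalent to matching the ratios $\pi_{ki}/\pi_{kj}$ for all $i,j\in\mathcal K(X)$. By Definition~\ref{defn cooccurrence graph}, $\{i,j\}$ is an edge of $\Gck$ exactly when some such prompt exists, so the observations determine $\pi_{ki}/\pi_{kj}$ along every edge of $\Gck$ and nothing more.

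For the sufficiency direction I would then argue: if $\Gck$ is connected, chaining the ratio identities along a path between any $i$ and $j$ shows that every ratio within column $k$ is determined, whence $\bpi_k=\lambda_k\,\bpi^{\texttt{GT}}_k$ for some $\lambda_k>0$, and column-stochasticity forces $\lambda_k=1$; doing this for every $k$ gives $\Pb=\Pgrd$. A small point to settle first is positivity: connectivity forces every vertex to be a key token of some prompt in $\Omega_k$, so if any $\pi_{ki}$ were zero the corresponding conditional would differ from $\P_{\Pgrd}(\cdot\mid X)$ (which is positive at $i$ since $\Pgrd$ has positive entries), a contradiction; hence all entries of $\bpi_k$ are positive and the ratios are well defined.

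For necessity (contrapositive) suppose $\Gc^{(k_0)}$ is disconnected; let $A$ be one of its connected components and $B=[K]\setminus A\neq\emptyset$. Since no edge of $\Gc^{(k_0)}$ crosses $A$ and $B$, no prompt in $\Omega_{k_0}$ has key tokens in both, so $\mathcal K(X)\subseteq A$ or $\mathcal K(X)\subseteq B$ for every $X\in\Omega_{k_0}$. Setting $p=\sum_{j\in A}\pi^{\texttt{GT}}_{k_0,j}\in(0,1)$, I would exhibit the one-parameter family $\Pb^\epsilon$ equal to $\Pgrd$ except in column $k_0$, where the $A$-entries are scaled by $1+\epsilon(1-p)$ and the $B$-entries by $1-\epsilon p$; for small $\epsilon\neq0$ this column is still positive and sums to $1$, and $\Pb^\epsilon\neq\Pgrd$. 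Because every prompt in $\Omega_{k_0}$ sees only $A$-entries or only $B$-entries, the common scalar cancels in $\P_{\Pb^\epsilon}(\cdot\mid X)$, and columns other than $k_0$ are untouched; hence $\Pb^\epsilon$ matches all observed conditionals, so the minimizer set is not a singleton and the estimator is inconsistent. I expect the main obstacle to be not any single computation but the bookkeeping: handling self- and cross-attention uniformly through $\mathcal K(X)$, verifying positivity of all entries before chaining ratios, and choosing the scalings in $\Pb^\epsilon$ so that stochasticity is preserved. Once the \PCMC statement is established it transfers to the attention weights $\Wst$ via the bijection of Theorem~\ref{theorem equivalency ps}.
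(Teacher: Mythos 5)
Your proposal is correct and follows essentially the same route as the paper: reduce consistency to the question of which $\Pb$ match all prompt-conditionals, observe that matching the conditional for a prompt $X\in\Omega_k$ pins down the ratios $\pi_{ki}/\pi_{kj}$ exactly along the edges of $\Gc^{(k)}$, chain these ratios along paths (plus column-stochasticity) for sufficiency, and construct a component-wise rescaled column for necessity — which is precisely the paper's argument in Appendix B (Theorem B.2), including its counterexample that scales one component's entries and renormalizes. The only difference is cosmetic: you obtain the per-prompt "constant ratio" characterization via the entropy-plus-KL decomposition, whereas the paper derives the same fact through a Lagrangian/KKT computation (its Lemma B.1); both yield identical intermediate constraints.
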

The proof of Theorem \ref{theorem consistency} is provided in Appendix \ref{app sect consistency}. This appendix also addresses the case where $\Pgrd$ contains zero transition probabilities. 
The main proof idea is that, optimizing the log-likelihood of a specific prompt results in estimating the local Markov chain transitions over the tokens within that prompt. When two prompts are connected and optimized together, the optimization merges and expands these local chains.

The connectivity of the graph is essentially a \emph{coverage condition} that ensures that prompts can fully sense the underlying Markov chain. For the self-attention model, this condition simplifies quite a bit.
\begin{observation} For self-attention model, $\Gc^{(k)}$ is connected iff all tokens in $[K]$ appear at least in one prompt within $\Omega_k$.
\end{observation}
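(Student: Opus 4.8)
The plan is to read off the connectivity of $\Gc^{(k)}$ directly from the defining feature of the self-attention co-occurrence graph: because the query token $k$ is always present as a key token of every prompt in $\Omega_k$, the graph $\Gc^{(k)}$ is forced to contain a spanning star centered at $k$ as soon as each token shows up somewhere in $\Omega_k$. So the proof reduces to a short argument in each direction after isolating this observation.

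First I would record the key fact. For the self-attention model, Definition~\ref{defn cooccurrence graph} takes the key tokens of a prompt $X$ to be \emph{all} of its tokens, and by definition every $X\in\Omega_k$ has last token $x_L=k$. Hence, if a token $i\in[K]$ appears anywhere in some prompt $X\in\Omega_k$, that prompt's key-token set contains both $i$ and $k$, so Definition~\ref{defn cooccurrence graph} places the edge $\{i,k\}$ in $\Gc^{(k)}$. For the ($\Leftarrow$) direction, assume every token of $[K]$ appears in at least one prompt of $\Omega_k$. By the fact just stated, every vertex $i\neq k$ is adjacent to $k$, so $\Gc^{(k)}$ contains a spanning star rooted at $k$ and is therefore connected. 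For the ($\Rightarrow$) direction I would argue by contrapositive: if some token $i_0\in[K]$ appears in no prompt of $\Omega_k$, then no prompt in $\Omega_k$ has $i_0$ among its key tokens, so by Definition~\ref{defn cooccurrence graph} no edge of $\Gc^{(k)}$ is incident to $i_0$; the vertex $i_0$ is isolated, and with $K\geq 2$ vertices an isolated vertex means $\Gc^{(k)}$ is disconnected.

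I do not expect any real obstacle here — once the ``query token co-occurs with everything it sees'' observation is made explicit, both implications are immediate, and this is why the statement is phrased as an observation rather than a theorem. The only care needed is degenerate bookkeeping: the case $K=1$ (where the one-vertex graph is trivially connected) and the possibility $\Omega_k=\varnothing$. These are handled by the standing convention that the vocabulary has at least two tokens, so that for $K\ge 2$ the equivalence holds in all cases (in particular, $\Omega_k=\varnothing$ makes both sides false: no token appears, and an edgeless graph on $K\ge2$ vertices is disconnected).
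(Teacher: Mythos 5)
Your argument is correct and is essentially the paper's own: the query token $k$ is always among the key tokens of every prompt in $\Omega_k$, so node $k$ connects (via a spanning star) to every token that appears, and connectivity of $\Gc^{(k)}$ reduces to every token appearing at least once. The extra bookkeeping you add for $K=1$ and $\Omega_k=\varnothing$ is harmless but not needed for the paper's one-line justification.
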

This observation follows from the fact that, for self-attention, the last token is always within the keys of the input prompt. Thus, all prompts within $\Omega_k$ overlap at the last token $\xxl=k$. Thus, the node $k$ naturally connects all tokens that appear within the prompts (at least once). The situation is more intricate for cross-attention -- where the query token is distinct from keys -- because there is no immediate connectivity between prompts. The difference between these two models are illustrated in Figure~\ref{fig:consistency_intro}. For this reason, our theory on cross-attention is strictly more general than self-attention and its connection to Markov chain is of broader interest.


Our next result is an application of Theorems \ref{theorem equivalency ps} and \ref{theorem consistency} and states the result for self-attention problem \eqref{defn Wstr}. It equivalently applies to the cross-attention variation of \eqref{defn Wstr}.


\begin{corollary}\label{corollary consistency}
    Let $\Wgrd$ be the attention model underlying $\Dcxy$ for either the self-attention model or the cross-attention model and suppose Assumptions \ref{assume iden} and \ref{assume stoken consistency} hold. Then, $\Wgrd=\Wst$ in \eqref{defn Wstr} iff all $\Gc^{(k)}$'s are connected.
\end{corollary}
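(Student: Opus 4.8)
The plan is to deduce the corollary entirely from the two \PCMC results already established, Theorems~\ref{theorem equivalency ps} and~\ref{theorem consistency}, by transporting the estimation problem~\eqref{defn Wstr} across the attention$\leftrightarrow$\PCMC dictionary. First, identify the data law as a \PCMC: under Assumption~\ref{assume iden}, Lemma~\ref{lemma identity} gives $\Cb f_{\W}(X)=\Cb\Eb^\top\bpi^X=\bpi^X$ for every prompt $X$, where $\bpi^X$ is the \PCMC transition built from $\Pb^{\W}$ (for the cross-attention model the same identity holds with the masking vector $\Fc(\bar X)$ replacing $\Fc(X)$). Hence the generative law $\Dcxy$ induced by $\Wgrd$ — draw $X\sim\Dcx$, then $y\mid X\sim\Cb f_{\Wgrd}(X)$ — is exactly the law of a \PCMC with base matrix $\Pgrd\triangleq\Pb^{\Wgrd}$; since the softmax is coordinatewise strictly positive, $\Pgrd$ has non-zero entries, so it falls inside the regime covered by both theorems.

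Next, transport the estimator. By Assumption~\ref{assume stoken consistency}, $\Wgrd\in\Sall$, and by Lemma~\ref{lemma Stoken} the objective $\Lc(\W)$ in~\eqref{defn Wstr} depends only on $\bPi_{\Sall}(\W)$, so restricting the optimization to $\Sall$ loses nothing. Combining Lemma~\ref{lemma identity} (the map $\W\mapsto\Pb^{\W}$ always lands in strictly-positive stochastic matrices) with Theorem~\ref{theorem equivalency ps} (every such matrix has a unique $\Sall$-preimage) shows that $\Phi:\W\mapsto\Pb^{\W}$ is a bijection from $\Sall$ onto the set of stochastic matrices with non-zero entries, and $\cb_y^\top\X^\top\sft{\X\W\xl}=\P_{\Pb^{\W}}(y\mid X)$ for all $(X,y)$. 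Taking $-\log$ and $\E_{(X,y)\sim\Dcxy}$ on both sides gives $\Lc(\W)=\Lc(\Phi(\W))$ with $\Lc$ on the right the \PCMC objective~\eqref{defn Pstr}. Therefore $\Wst$ minimizes $\Lc$ over $\Sall$ iff $\Phi(\Wst)$ minimizes the \PCMC loss over strictly-positive stochastic matrices, and in particular $\Wst=\Wgrd\iff\Phi(\Wst)=\Pgrd$.

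Then apply Theorem~\ref{theorem consistency}, which for a \PCMC whose ground truth $\Pgrd$ has non-zero entries states that $\Pst=\Pgrd$ iff every co-occurrence graph $\Gc^{(k)}$ is connected. Definition~\ref{defn cooccurrence graph} already defines $\Gc^{(k)}$ with ``key tokens'' meaning all tokens in the self-attention case and all-but-last in the cross-attention case, so the same family of graphs governs the attention problem and its \PCMC image. Chaining the previous step with this one yields $\Wgrd=\Wst$ iff all $\Gc^{(k)}$ are connected; the cross-attention case follows identically from the cross-attention forms of Theorems~\ref{theorem equivalency ps} and~\ref{theorem consistency}, since passing from $\Fc(X)$ to $\Fc(\bar X)$ only alters the underlying \PCMC, not the reduction itself.

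The main obstacle is the one delicate point in this chain: the \PCMC minimization~\eqref{defn Pstr} ranges over the closed set of stochastic matrices, whereas $\Phi$ surjects only onto its relative interior, so one must know that the minimizer of~\eqref{defn Pstr} does not escape to a face with a zero entry (which would invalidate the bijection) and that $\arg\min_{\W\in\Sall}\Lc$ is attained — and, by convexity under Assumption~\ref{assume iden}, located by gradient descent. These are precisely the issues resolved inside the proof of Theorem~\ref{theorem consistency} in Appendix~\ref{app sect consistency} (which also treats a $\Pgrd$ with some zero entries), so at the level of the corollary the task is merely to invoke that theorem correctly; no new hard analysis is required here.
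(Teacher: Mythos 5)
Your proposal is correct and follows essentially the same route as the paper, which presents this corollary as a direct application of Theorem~\ref{theorem equivalency ps} and Theorem~\ref{theorem consistency}: transport the loss across the bijection $\W\mapsto\Pb^{\W}$ between $\Sall$ and strictly positive stochastic matrices, and invoke the \PCMC consistency characterization. Your extra care about the minimizer of \eqref{defn Pstr} possibly landing on a face with a zero entry is a valid refinement of a point the paper leaves implicit, not a divergence in method.
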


\subsection{Gradient-based Optimization of Attention Weights}
An important feature of our problem formulation \eqref{defn Wstr} is its convexity, which was observed by \cite{anonymous}. The convexity arises from the fact that, we directly feed the attention probabilities to log-likelihood which results in a LogSumExp function. We next state the following stronger lemma which follows from Lemma 9 of \cite{anonymous}). A detailed discussion is provided in Appendix \ref{sect app strict convexity}. 
\begin{lemma}\label{lemma strict convexity consistency}
Suppose Assumption \ref{assume iden} holds. If $\Gc^{(k)}$ is a connected graph for every $k \in [K]$, then $\Lc(\W)$ of \eqref{defn Wstr} is strictly convex over $\Sall$.
\end{lemma}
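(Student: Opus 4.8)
The plan is to show that the Hessian of $\Lc$ restricted to $\Sall$ is positive definite at every point, which directly yields strict convexity over $\Sall$. First I would put a single log-likelihood term into a transparent form: under Assumption \ref{assume iden}, Lemma \ref{lemma identity} gives $\cb_y^\top\X^\top\sft{\X\W\xl}=\bpi^X_y$, and since the softmax normalizer $\sum_{j'\in[K]}e^{\eb_{j'}^\top\W\eb_{\xxl}}$ occurs in both the numerator and the denominator of $\bpi^X_y$ (Definition \ref{def pcmc}), it cancels, so
\begin{align*}
-\log\big(\cb_y^\top\X^\top\sft{\X\W\xl}\big)=-\log\Fc(X)_y-\eb_y^\top\W\eb_{\xxl}+\log\Big(\sum_{j\in[K]}\Fc(X)_j\,e^{\eb_j^\top\W\eb_{\xxl}}\Big).
\end{align*}
The first two summands are affine in $\W$, and the third is a $\mathrm{LogSumExp}$ of affine functions of $\W$ (re-deriving the convexity observed by \cite{anonymous}); moreover the Hessian of the third term does not depend on $y$. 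Differentiating twice in $\W$ and taking the expectation, for any direction $\Delta\in\R^{d\times d}$,
\begin{align*}
\nabla^2\Lc(\W)[\Delta,\Delta]=\E_{X\sim\Dcx}\!\Big[\Var_{j\sim p^X_\W}\!\big(\eb_j^\top\Delta\,\eb_{\xxl}\big)\Big],\qquad p^X_\W(j)\ \propto\ \Fc(X)_j\,e^{\eb_j^\top\W\eb_{\xxl}}.
\end{align*}
Because the exponential factor is strictly positive, $\mathrm{supp}(p^X_\W)$ is exactly the set of \emph{key tokens} of $X$ (all tokens of $X$ for self-attention, the tokens of $\bar{X}$ for cross-attention, cf.\ Definition \ref{defn cooccurrence graph}), and this support does not depend on $\W$.

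Next I would determine when this quadratic form vanishes. Since $\Dcx$ has finite support $\Omega$ with strictly positive weights and each variance is nonnegative, $\nabla^2\Lc(\W)[\Delta,\Delta]=0$ iff for every $X\in\Omega$ the numbers $\{\eb_j^\top\Delta\,\eb_{\xxl}:\ j\text{ a key token of }X\}$ are all equal, a condition independent of $\W$. This is exactly where graph connectivity enters. Fix $k\in[K]$ and set $w^{(k)}_j:=\eb_j^\top\Delta\eb_k$; each prompt $X\in\Omega_k$ forces $w^{(k)}$ to be constant across the key tokens of $X$, so by Definition \ref{defn cooccurrence graph} we get $w^{(k)}_i=w^{(k)}_j$ whenever $i$ and $j$ are adjacent in $\Gc^{(k)}$, hence $w^{(k)}$ is constant on each connected component of $\Gc^{(k)}$. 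Connectivity of every $\Gc^{(k)}$ then produces scalars $c_1,\dots,c_K$ with $\eb_j^\top\Delta\eb_k=c_k$ for all $j,k\in[K]$, i.e.\ $\Eb\Delta\eb_k=c_k\mathbf{1}_K$.

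Finally I would argue that such a $\Delta\in\Sall$ must be zero. By shift-invariance of $\sft{\cdot}$, the identity $\Eb\Delta\eb_k=c_k\mathbf{1}_K$ gives $\sft{\Eb(\W+\Delta)\eb_k}=\sft{\Eb\W\eb_k}$ for every $k$, hence $\Pb^{\W+\Delta}=\Pb^{\W}$ for any $\W$. Taking any $\W\in\Sall$, both $\W$ and $\W+\Delta$ lie in $\Sall$ and induce the same positive stochastic matrix, so the bijection between $\Sall$ and positive stochastic matrices established in Theorem \ref{theorem equivalency ps} forces $\Delta=0$ (alternatively this can be checked directly from Definition \ref{defn Stoken} using invertibility of the Gram matrix of the embeddings). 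Consequently, for every $\Delta\in\Sall\setminus\{0\}$ the constancy condition above fails on some $X\in\Omega$, so $\nabla^2\Lc(\W)[\Delta,\Delta]>0$ at every $\W$; as $\Lc$ is $C^2$, this means $\Lc$ is strictly convex over $\Sall$.

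The step I expect to be the main obstacle is the last one: invoking the $\Sall\leftrightarrow$(positive stochastic matrix) bijection to upgrade ``$\Delta$ shifts each column $\Eb\W\eb_k$ by a constant vector'' into ``$\Delta=0$'', while being careful that $\Sall$ captures exactly the $K(K-1)$ degrees of freedom referenced in Theorem \ref{theorem equivalency ps}. A secondary subtlety is the support bookkeeping — confirming that $\mathrm{supp}(p^X_\W)$ coincides with the key-token set of Definition \ref{defn cooccurrence graph} uniformly in $\W$ and along each segment $\W+t\Delta$, which is what makes the vanishing criterion base-point-free and lets a pointwise-positive Hessian imply global strict convexity.
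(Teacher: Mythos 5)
Your proposal is correct and follows essentially the same route as the paper's proof (Lemmas \ref{lemma cvx app}--\ref{lemma vect strict convexity} in the appendix): both reduce each log-likelihood term to an affine function plus a LogSumExp, identify the null directions of the Hessian via the equality case of Cauchy--Schwarz (your variance formulation is the same computation), propagate the resulting constancy condition along edges of the co-occurrence graphs, and conclude that the direction lies in $\Sall\cap\Sall^\perp=\{0\}$. The only cosmetic difference is that the paper first changes variables through the invertible map $\W\mapsto\Eb\W\Eb^\top$ and the intermediate subspace $\Sprm$, whereas you work directly with the embeddings and note (correctly) that $\Eb\Delta\eb_k=c_k\mathbf{1}_K$ for all $k$ already places $\Delta$ in $\Sall^\perp$ without needing the full bijection of Theorem \ref{theorem equivalency ps}.
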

For any continuous strictly convex function, there exists at most $1$ solution that minimizes the objective function. Furthermore, by the definition of $\Dcxy$, we know that $\Wgrd$ is a solution. Therefore, Lemma \ref{lemma strict convexity consistency} guarantees the gradient-based learnability of the attention model as follows.
\begin{corollary} Set $\W_0=0$ and run gradient iterations $\W_{t+1}\gets \W_t-\eta \nabla\Lc(\W_t)$ for $t\geq 0$ with a constant $\eta$. If all $\Gc^{(k)}$'s are connected, then $\lim_{t\rightarrow\infty}\W_t=\Wgrd$.
\end{corollary}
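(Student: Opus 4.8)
The plan is to read the corollary off from what has already been established, reducing it to a routine statement about gradient descent on a strictly convex, smooth function. First I would pin down the minimizer. Writing $p_{\W}(y\mid X):=\cb_y^\top\X^\top\sft{\X\W\xl}$ for the model's predictive distribution, which is a genuine probability vector over $y$ under Assumption~\ref{assume iden}, and using that $y\mid X$ is drawn from $p_{\Wgrd}(\cdot\mid X)$ under $\Dcxy$, I would rewrite $\Lc(\W)=\E_{X\sim\Dcx}[\mathrm{CE}(p_{\Wgrd}(\cdot\mid X),\,p_{\W}(\cdot\mid X))]$ and invoke Gibbs' inequality to get $\Lc(\W)\ge\E_{X}[H(p_{\Wgrd}(\cdot\mid X))]=\Lc(\Wgrd)$; so $\Wgrd$ is a global minimizer, and it lies in $\Sall$ by Assumption~\ref{assume stoken consistency}. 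Under the connectivity hypothesis, Lemma~\ref{lemma strict convexity consistency} makes $\Lc$ strictly convex on $\Sall$, so this minimizer is unique — i.e.\ $\Wgrd=\Wst$, consistent with Corollary~\ref{corollary consistency}. I would also record that $\Lc$ is finite-valued and $C^\infty$ on $\Sall$: for self-attention, $p_{\Wgrd}(y\mid X)>0$ forces $y$ to occur in $X$, hence $p_{\W}(y\mid X)>0$ for every finite $\W$, so no $-\log$ term blows up (cross-attention is identical with the key sequence $\bar X$).

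Next I would check that the iterates stay in $\Sall$. By Lemma~\ref{lemma Stoken}, $f_{\W}(X)=f_{\bPi_{\Sall}(\W)}(X)$ for all $X$, hence $\Lc(\W)=\Lc(\bPi_{\Sall}(\W))$; thus $\Lc$ is constant along every direction in $\Sall^\perp$, its directional derivatives there vanish, and therefore $\nabla\Lc(\W)\in(\Sall^\perp)^\perp=\Sall$ for all $\W$. Since $\W_0=0\in\Sall$, induction gives $\W_t\in\Sall$ for all $t\ge0$, so the recursion is exactly (unconstrained) gradient descent applied to the restriction $\Lc|_{\Sall}$.

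Then the convergence argument is classical. Under Assumption~\ref{assume iden} one has $p_{\W}(y\mid X)=\sum_{i:\,x_i=y}[\sft{\X\W\xl}]_i$, so each summand $-\log p_{\W}(y\mid X)$ is the composition of the linear map $\W\mapsto(\x_i^\top\W\xl)_{i\le L}$ with the function $a\mapsto\log\sum_j e^{a_j}-\log\sum_{i:\,x_i=y}e^{a_i}$, whose Hessian is a difference of two categorical-covariance matrices and hence has operator norm at most an absolute constant; since $\Dcx$ is finitely supported, the operator norms of the linear maps are uniformly bounded, so $\Lc|_{\Sall}$ has an $L$-Lipschitz gradient for a finite $L$. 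Taking the constant step size to be any $\eta\in(0,1/L]$, the descent lemma gives $\Lc(\W_{t+1})\le\Lc(\W_t)-\tfrac{\eta}{2}\|\nabla\Lc(\W_t)\|^2$, hence $\nabla\Lc(\W_t)\to0$, while the iterates are Fej\'er-monotone with respect to $\{\Wgrd\}$ (a standard consequence of convexity and $\eta\le 1/L$) and therefore bounded. Convexity then yields $0\le\Lc(\W_t)-\Lc(\Wgrd)\le\|\nabla\Lc(\W_t)\|\,\|\W_t-\Wgrd\|\to0$, so any limit point $\bar\W\in\Sall$ of the bounded sequence satisfies $\Lc(\bar\W)=\Lc(\Wgrd)$ and hence equals $\Wgrd$ by uniqueness of the minimizer; a bounded sequence with a single limit point converges, so $\lim_{t\to\infty}\W_t=\Wgrd$.

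The genuine mathematical content — that graph connectivity promotes convexity of $\Lc$ to \emph{strict} convexity — is already delivered by Lemma~\ref{lemma strict convexity consistency}, so the only steps requiring care are bookkeeping ones: (i) establishing that $\Lc$ is globally $L$-smooth on $\Sall$, which is precisely what forces $\eta$ to be a \emph{sufficiently small} constant rather than arbitrary (the statement should be read with $\eta\le1/L$); and (ii) upgrading convergence of the objective values $\Lc(\W_t)\to\Lc(\Wgrd)$ to convergence of the iterates, which needs the Fej\'er-monotonicity/boundedness observation and not merely strict convexity. One could shorten (ii) by quoting the textbook theorem that gradient descent on a convex function with Lipschitz gradient and nonempty solution set converges to a point in that set, which here is the singleton $\{\Wgrd\}$.
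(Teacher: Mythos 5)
Your proof is correct and follows essentially the same route the paper takes — strict convexity of $\Lc$ over $\Sall$ (Lemma \ref{lemma strict convexity consistency}) together with the fact that $\Wgrd$ is a global minimizer of the population log-likelihood, combined with the standard convergence argument for gradient descent on a smooth convex function. The paper states the corollary without a separate proof, so your additional bookkeeping (the gradient lies in $\Sall$ so the iterates stay there, smoothness forces the constant step size to satisfy $\eta\le 1/L$, and Fej\'er monotonicity upgrades convergence of objective values to convergence of the iterates) supplies exactly the details the paper leaves implicit.
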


\section{Guarantees on Finite Sample Learning}\label{sect finite sample}

\begin{figure}[tb] 
    \centering    
    \begin{minipage}{1\linewidth}
\vspace{-7pt}
\includegraphics[width=1\linewidth]{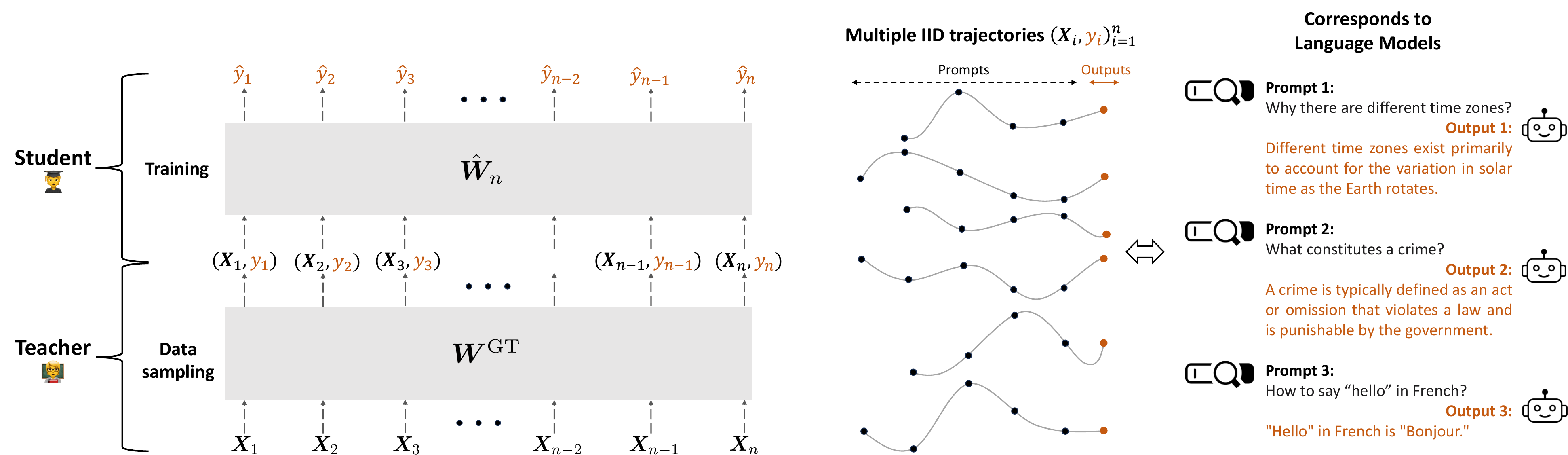}
    \centering
    \label{fig:iid-learning}
\end{minipage}
    \centering
\caption{\small{\textbf{Left: }Illustration of finite sample learning where the next tokens are sampled from the ground-truth model, which corresponds to single outputs from multiple IID trajectories. \textbf{Right: }In practice, the scenario is analogous to querying language models with prompts on different topics and using the responses to train a tiny model. In Theorem~\ref{theorem consistency}, we characterize the condition when the tiny model can estimate the ground-truth model consistently.}}
\vspace{-5pt}
\end{figure}
Following the setting of Section~\ref{sect consistency}, we sample a training dataset $\data=\big\{(X_i, y_i)\big\}_{i=1}^n$ from $\Dcxy$. In this section, we establish a sample complexity guarantee on the difference between $\tf{\Wh - \Wgrd}$ where $\Wh$ is trained on $\data$. 

\textbf{ERM problem.} Given a training dataset $\data$, we consider the ERM problem with the following objective:
\begin{align}
&\Wh_n = \arg \min_{\W \in \Sall } \Lch_n(\W)\quad\text{where } \label{Whn} \\ &\Lch_n(\W)=\frac{1}{n}\sum_{i=1}^n -\log(\cb_{y_i}^\top \X_i^\top \sft{\X_i\W\xli}). \label{EmpRisk W}
\end{align}
Our main aim in this section is to establish a sample complexity guarantee on $\tf{\Wh - \Wgrd}$. We leverage the findings of Section \ref{sect consistency} to achieve our aim with the following assumption:
\begin{assumption}\label{assume cooccurrence connected}
    Recall the co-occurrence graphs in Definition \ref{defn cooccurrence graph}. We assume that the co-occurrence graphs $(\Gc^{(k)})_{k=1}^K$ constructed from $\Dcx$ are connected.
\end{assumption}
We prove the following theorem, which will provide finite sample complexity guarantees for the loss function:



\begin{theorem}\label{theorem sample complexity}
    Suppose Assumptions \ref{assume iden} and \ref{assume cooccurrence connected} hold. Let $R_0>0$ be a finite constant based on the structure of $\Wgrd$ and $\Dcx$. Then, if $n \geq R_0 K^2$, with probability at least $1-2\delta$
    \begin{align*}
        \Lc(\Wh_n) - \Lc(\Wst) \lesssim \frac{K^2\log\frac{n}{K\delta}}{n}.
    \end{align*}
\end{theorem}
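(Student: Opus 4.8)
The plan is to push the problem through the transition-matrix (CCMC) parameterization of Section~\ref{sect setup}, where the loss becomes an explicit LogSumExp in a $K(K-1)$-dimensional variable, and then run a standard localized fast-rate argument for a strictly convex parametric $M$-estimator. Write each column of $\Pb$ as $\bpi_i=\sft{\z_i}$ with $\z_i\in\R^K$ under a fixed gauge (e.g.\ $\mathbf 1^\top\z_i=0$). By Lemma~\ref{lemma identity} and Definition~\ref{def pcmc}, the per-sample loss equals $-\log\Fc(X)_y - z_{\xxl,y} + \log\big(\sum_k \Fc(X)_k\,e^{z_{\xxl,k}}\big)$; the first term does not depend on the parameter, so it cancels in every excess-risk comparison, and what remains is a $K$-way LogSumExp that moreover \emph{separates across the query token} $\xxl$. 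Theorem~\ref{theorem equivalency ps} gives the bijection between $\W\in\Sall$ and such logit matrices, and this bijection preserves loss values, so it suffices to prove $\widehat{L}_n(\hat\Pb_n)$'s excess risk bound in the $\z$-coordinates; Lemma~\ref{lemma strict convexity consistency} guarantees $\Lc$ is strictly convex over $\Sall$, so $\Wst$ (equivalently the minimizing logits) is unique and $\Lc(\Wst)<\infty$.

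First I would localize. Since the data are generated by $\Wgrd$ whose induced transition matrix has all entries at least some $p_{\min}>0$, and since the empirical loss is coercive once the co-occurrence graphs are connected (this is exactly the content behind Lemma~\ref{lemma strict convexity consistency} and Assumption~\ref{assume cooccurrence connected}), a preliminary crude deviation bound — together with concentration of the empirical per-query co-occurrence counts, which stabilizes once $n\gtrsim R_0K^2$ — shows that $\Wh_n$ lies with high probability in a fixed compact set $\mathcal K\subset\Sall$ on which every induced transition probability exceeds $p_{\min}/2$. On $\mathcal K$ the log-likelihood ratios $\ell(\W;\cdot)-\ell(\Wst;\cdot)$ are bounded by an absolute constant times $\log(1/p_{\min})$, and compactness upgrades the strict convexity of Lemma~\ref{lemma strict convexity consistency} to uniform strong convexity $\nabla^2\Lc\succeq\mu\Iden$ for some $\mu>0$; I would absorb $1/\mu$, $1/p_{\min}$, and $\operatorname{diam}(\mathcal K)$ into the constant $R_0$. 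Strong convexity then yields the quadratic-growth lower bound $\Lc(\W)-\Lc(\Wst)\gtrsim\mu\,\tf{\W-\Wst}^2$ on $\mathcal K$, and boundedness of the ratios gives the Bernstein-class property $\E\big[(\ell(\W;\cdot)-\ell(\Wst;\cdot))^2\big]\lesssim \Lc(\W)-\Lc(\Wst)$ (the standard relation between squared log-likelihood ratio and KL divergence for bounded ratios, valid here because the model is well specified, $\Wst=\Wgrd$).

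Next I would run the peeling argument. By optimality, $\Lch_n(\Wh_n)\le\Lch_n(\Wst)$, so the excess risk $\Delta_n:=\Lc(\Wh_n)-\Lc(\Wst)$ is bounded by the centered empirical process $\big[\Lc-\Lch_n\big](\Wh_n)-\big[\Lc-\Lch_n\big](\Wst)$. On the slice $\{\W\in\mathcal K:\Lc(\W)-\Lc(\Wst)\le t\}$, which by quadratic growth sits in a $\tf{\cdot}$-ball of radius $\sqrt{t/\mu}$ inside a $K(K-1)$-dimensional space, an $\eps$-net has $\log N\lesssim K^2\log\!\big(\tfrac{\sqrt{t/\mu}}{\eps}\big)$ points; applying Bernstein's inequality at each net point (variance $\lesssim t$, bounded increments) with a union bound over the net (resolution $\eps\asymp 1/n$, which produces the $\log n$) and over the $K$ per-query-token subproblems (replacing $\delta$ by $\delta/K$) gives, with probability $1-2\delta$,
\begin{align*}
\sup_{t\text{-slice}}\Big|\big[\Lc-\Lch_n\big](\W)-\big[\Lc-\Lch_n\big](\Wst)\Big|\;\lesssim\;\sqrt{\frac{t\,K^2\log\frac{n}{K\delta}}{n}}+\frac{K^2\log\frac{n}{K\delta}}{n}.
\end{align*}
Setting $t=\Delta_n$ and solving the resulting quadratic inequality $\Delta_n\lesssim\sqrt{\Delta_n K^2\log\frac{n}{K\delta}/n}+K^2\log\frac{n}{K\delta}/n$ yields $\Delta_n\lesssim K^2\log\frac{n}{K\delta}/n$; transferring back through the loss-value-preserving bijection of Theorem~\ref{theorem equivalency ps} gives the claimed bound on $\Lc(\Wh_n)-\Lc(\Wst)$, and $n\ge R_0K^2$ is precisely what makes the high-probability localization radius smaller than the radius of $\mathcal K$ so that the strong-convexity and bounded-ratio steps apply.

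The main obstacle I expect is the localization step: turning the \emph{strict} convexity of Lemma~\ref{lemma strict convexity consistency} into a \emph{uniform} strong-convexity and bounded-ratio statement on an explicit set that provably contains $\Wh_n$ with high probability, and doing so sharply enough that the threshold $n\ge R_0K^2$ — rather than a larger power of $K$ — suffices (this requires controlling when the empirical co-occurrence structure matches the population one). A secondary technical point is pinning down the Bernstein-class constant, i.e.\ bounding $\E[(\ell(\W;\cdot)-\ell(\Wst;\cdot))^2]$ by $\Lc(\W)-\Lc(\Wst)$ with a constant of the form $e^{O(\log(1/p_{\min}))}$, which hinges precisely on the uniform lower bound on transition probabilities obtained during localization.
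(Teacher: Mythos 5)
Your proposal is sound and reaches the stated rate, but it takes a genuinely different route from the paper's proof. The paper works directly in $\W$-space and gets the fast rate without any Bernstein-class or peeling machinery: it shows the per-sample loss difference is $2\max_i\tnt{\eb_i}^2$-Lipschitz in $\W$ (so that on a ball $\B(\Wst,r)$ the centered difference $\Lch_n(\W)-\Lch_n(\Wst)-\Lc(\W)+\Lc(\Wst)$ has bounded increments of order $r/n$), applies McDiarmid plus a union bound over an $\eps$-net of the $K^2$-dimensional ball to get a uniform deviation of order $rK\sqrt{\log(\cdot)/n}$, and then localizes $\Wh_n$ in a single step: strong convexity of $\Lc$ on the ball (obtained, as you do, by combining strict convexity from Lemma \ref{lemma strict convexity consistency} with compactness) forces $\Lc(\W_{\mathrm{out}})-\Lc(\W_{\mathrm{inn}})\geq\alpha r^2/8$ on the boundary versus an interior point, which dominates the deviation once $r\asymp K\sqrt{\log(\cdot)/n}/\alpha$, so $\Lch_n(\W_{\mathrm{out}})>\Lch_n(\W_{\mathrm{inn}})$ and convexity of $\Lch_n$ (Lemma \ref{lemma cvx app}) pins $\Wh_n$ inside $\B(\Wst,r)$; smoothness of $\Lc$ then converts $\tf{\Wh_n-\Wst}\leq r$ into the excess-risk bound. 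Your route instead establishes the variance--excess-risk relation $\E[(\ell(\W)-\ell(\Wst))^2]\lesssim\Lc(\W)-\Lc(\Wst)$ (valid here by well-specification and the lower bound on transition probabilities on the localization set) and runs peeling with Bernstein's inequality over excess-risk slices. What each buys: your argument yields the excess-risk bound directly without invoking smoothness of $\Lc$, and is the more standard fixed-point formulation; the paper's argument sidesteps the Bernstein-class condition entirely, resolves in one shot the preliminary "crude localization to a compact set" that you correctly flag as your main obstacle (the boundary-comparison plus empirical convexity does all the localization at the final radius $r$, with no separate coarse stage), and produces the parameter-recovery bound of Corollary \ref{corollary sample complexity} as an immediate byproduct of the same localization. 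Both proofs ultimately rest on the same two ingredients: local $\alpha$-strong convexity extracted from Lemma \ref{lemma strict convexity consistency}, and the $K^2\log(\cdot)$ metric entropy of balls in $\Sall$. One small inaccuracy in your write-up: the factor $K$ inside $\log\frac{n}{K\delta}$ does not come from a union bound over the $K$ query-token subproblems (that would give $\log\frac{Kn}{\delta}$); in the paper it arises from choosing the net resolution $\eps\propto K/n$, though this is absorbed by $\lesssim$ either way.
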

The proof is provided in Appendix \ref{sect app sample complexity}. We apply the local covering arguments to achieve sample complexity guarantees by concentration inequalities. Using Lemma \ref{lemma strict convexity consistency}, we prove that $\Wh_n$ inside a local ball with sufficient samples and we achieve the fast rate $1/n$ with the smoothness of the loss function similar to \cite{Bartlett_2005} and \cite{srebro_FastRate}.

Now, we are ready to share our main contribution in this section with the following corollary:
\begin{corollary}\label{corollary sample complexity}
    Consider the setting in Theorem \ref{theorem sample complexity} and suppose Assumptions \ref{assume iden}, \ref{assume stoken consistency}, and \ref{assume cooccurrence connected} hold. Then, if $n \geq R_0 K^2$, with probability at least $1-2\delta$
    \begin{align}
        \tf{\Wh_n - \Wgrd}^2 \lesssim \frac{K^2\log\frac{n}{K\delta}}{n}.
    \end{align}
\end{corollary}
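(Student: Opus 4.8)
The plan is to upgrade the excess-risk bound of Theorem~\ref{theorem sample complexity} to a parameter-recovery bound via a local quadratic-growth (strong convexity) property of the population loss $\Lc$ around $\Wgrd$, read off the active subspace $\Sall$. First, under Assumption~\ref{assume stoken consistency} we have $\Wgrd\in\Sall$, and Assumption~\ref{assume cooccurrence connected} ensures every co-occurrence graph $\Gc^{(k)}$ is connected; hence Corollary~\ref{corollary consistency} gives $\Wst=\Wgrd$. Consequently, on the high-probability event of Theorem~\ref{theorem sample complexity}, $\Lc(\Wh_n)-\Lc(\Wgrd)=\Lc(\Wh_n)-\Lc(\Wst)\lesssim K^2\log\tfrac{n}{K\delta}/n$, so it remains to relate this excess risk to $\tf{\Wh_n-\Wgrd}^2$.

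The second and central step is to show that $\Lc$ grows quadratically near $\Wgrd$ along $\Sall$: there exist a constant $\mu>0$ depending only on $\Wgrd$ and $\Dcx$ (hence absorbable into $R_0$) and a radius $r>0$ such that for all $\W\in\Sall$ with $\tf{\W-\Wgrd}\le r$,
\[
\Lc(\W)-\Lc(\Wgrd)\ \ge\ \frac{\mu}{2}\,\tf{\W-\Wgrd}^2 .
\]
By Lemma~\ref{lemma strict convexity consistency}, $\Lc$ is strictly convex on the finite-dimensional subspace $\Sall$, and $\Wgrd$ is an interior minimizer, so $\nabla^2\Lc(\Wgrd)$ restricted to $\Sall$ is positive definite with smallest eigenvalue $\mu>0$ (restricting to $\Sall$ discards the null directions in $\Sall^\perp$ identified by Lemma~\ref{lemma Stoken}). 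A second-order Taylor expansion together with a uniform bound on $\opnorm{\nabla^2\Lc(\W)}$ over a small ball around $\Wgrd$ then yields the displayed inequality; since $\Lc$ only depends on $\bPi_{\Sall}(\W)$ and both $\Wh_n,\Wgrd\in\Sall$ by construction, this is exactly the geometry we need.

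The third step is to confirm that $\Wh_n$ lands in this local ball. This is already delivered by the localization argument behind Theorem~\ref{theorem sample complexity}: the local covering step there shows that once $n\ge R_0 K^2$ the ERM minimizer $\Wh_n$ concentrates within a neighborhood of $\Wst=\Wgrd$ in $\Sall$; enlarging $R_0$ if necessary we may assume this neighborhood lies inside the radius-$r$ ball above. Chaining the two bounds on the high-probability event gives
\[
\frac{\mu}{2}\,\tf{\Wh_n-\Wgrd}^2\ \le\ \Lc(\Wh_n)-\Lc(\Wgrd)\ \lesssim\ \frac{K^2\log\tfrac{n}{K\delta}}{n},
\]
and absorbing $\mu^{-1}$ into the implicit constant proves the corollary.

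The main obstacle is the second step: establishing that the quadratic-growth constant $\mu$ is genuinely bounded away from zero and independent of $n$ once we restrict to $\Sall$, i.e.\ that the pointwise strict convexity of Lemma~\ref{lemma strict convexity consistency} upgrades to a uniform positive lower bound on the spectrum of $\nabla^2\Lc$ on $\Sall$ throughout a fixed neighborhood of $\Wgrd$. This also requires controlling the upper curvature: the softmax/log-likelihood Hessian must not blow up near $\Wgrd$, which holds because $\Pgrd$ has non-zero entries and prompt lengths are bounded, but it needs a uniform operator-norm bound on $\nabla^2\Lc(\W)$ over the local ball.
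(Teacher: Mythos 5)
Your proposal is correct and follows essentially the same route as the paper: invoke consistency ($\Wst=\Wgrd$) via the connectivity assumption, use the localization from the proof of Theorem~\ref{theorem sample complexity} to place $\Wh_n$ in a fixed ball $\B(\Wst,r)$ in $\Sall$, and convert the excess-risk bound into a squared-distance bound via local $\alpha$-strong convexity of $\Lc$ on that ball, obtained exactly as you describe from strict convexity (Lemma~\ref{lemma strict convexity consistency}) plus continuity of the Hessian on a compact set. The only cosmetic difference is that the quadratic-growth lower bound needs only the lower eigenvalue bound on $\nabla^2\Lc$ over the ball, not the uniform upper bound on $\opnorm{\nabla^2\Lc(\W)}$ you flag as an obstacle; that smoothness control is used in Theorem~\ref{theorem sample complexity} itself, not in this corollary.
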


The proof of this corollary is provided in Appendix \ref{corollary sample complexity app} and follows from Lemma \ref{lemma strict convexity consistency} and Theorem \ref{theorem sample complexity}. 
\section{Learning from a Single Trajectory}\label{sect single traj}

In this section, we ask: Can we learn a ground-truth self-attention model by querying it once and collecting its output trajectory? The question of single-trajectory learning is fundamental for two reasons: First, modern language models train all tokens in parallel, that is we fit multiple next tokens per sequence. Secondly, learning from a single trajectory is inherently challenging due to dependent data and has been subject of intense research in reinforcement learning and control. Here, we initiate the statistical study of single trajectory learning for attention models.

\noindent\textbf{Setting:} We first describe the single trajectory sampling. Suppose we are given a ground truth attention matrix $\Wgrd$ and an initial prompt $X_1$ of length $L$. We feed $X_1$ to sample the next token $y_1$ and auto-regressively feed back each next token to sample a length $n$ output trajectory. Overall, we obtain the training dataset $\data=\big\{(X_{i}, y_{i})\}_{i=1}^n$ where $X_i=[X_1~y_1~\dots~y_{i-1}]$. This sampling is done according to our self-attention model under Assumption \ref{assume iden}. To proceed, we optimize the likelihood according to \eqref{Whn} and obtain $\Wh_n$.




The consistency of estimation for the single trajectory sampling is defined as follows.
\begin{definition}
    Recall the estimation $\Wh_n$ of $\Wgrd$ in \eqref{Whn}. The estimation $\Wh_n$ is called consistent if and only if 
    \begin{align*}
        \P\left( \lim_{n \to \infty} \Wh_n = \Wgrd\right) = 1.
    \end{align*}
\end{definition}
Let $\Tc\subset[K]$ be the set of tokens that appear in $X_1$. Recall that, our attention model samples the next token from the input prompt, thus, all generated tokens will be within $\Tc$. Thus, from a single trajectory, we can at most learn the local Markov chain induced by $\Tc$ and consistency is only possible if $X_1$ contains all $[K]$ tokens. Thus, we assume $\Tc=[K]$ and $X_1$ contains the full vocabulary going forward.


In what follows, we study two critical behaviors:
\begin{itemize}[leftmargin=4mm, itemsep=1mm, partopsep=0pt,parsep=0pt]
\item \textbf{(Q1)} How does the distribution of generated tokens $y_i$ evolve as a function of the time index $i$?
\item \textbf{(Q2)} When is consistent estimation of the underlying attention model $\Wgrd$ possible?
\end{itemize}

\subsection{Empirical Investigation}\label{sec:emp-single}
\begin{figure*}[!tb]
\vspace{-5pt}
\centering
\begin{minipage}{0.42\linewidth}
\includegraphics[width=.9\linewidth]{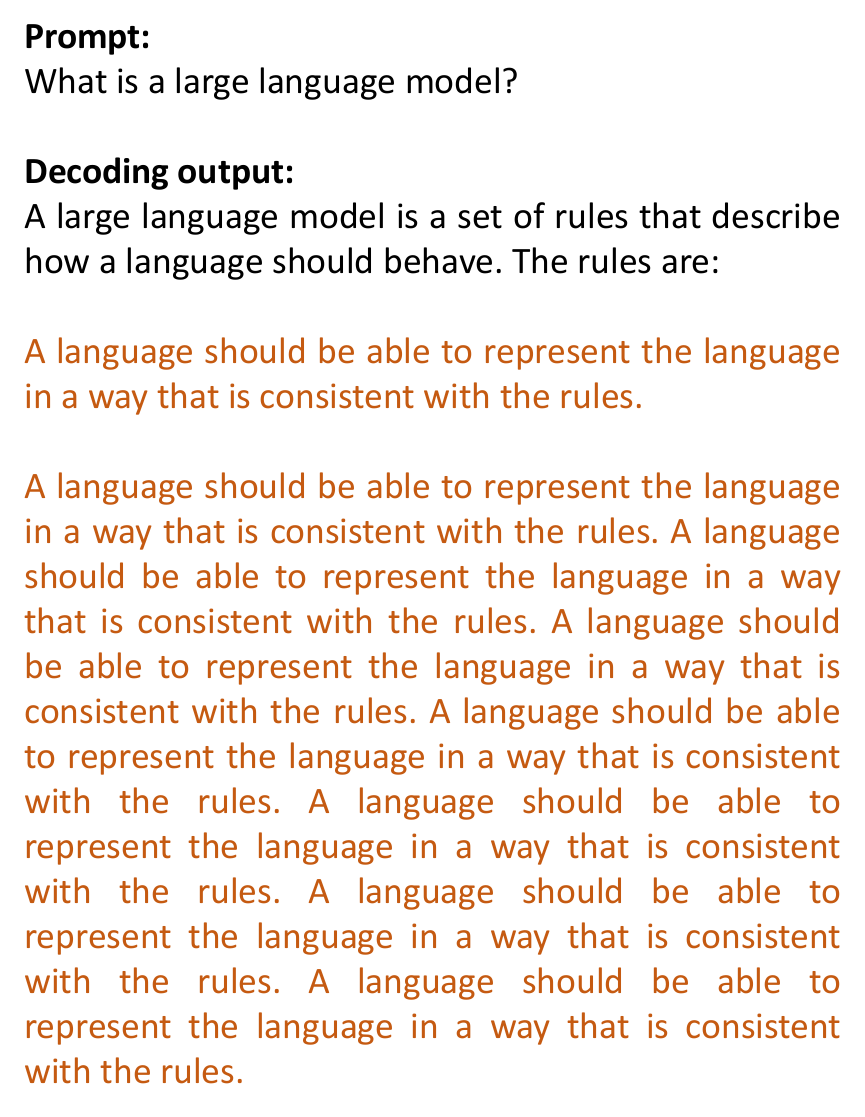}
\vspace{-7pt}
\end{minipage}
\rulesep
\begin{minipage}{0.55\linewidth}
\includegraphics[width=.9\linewidth]{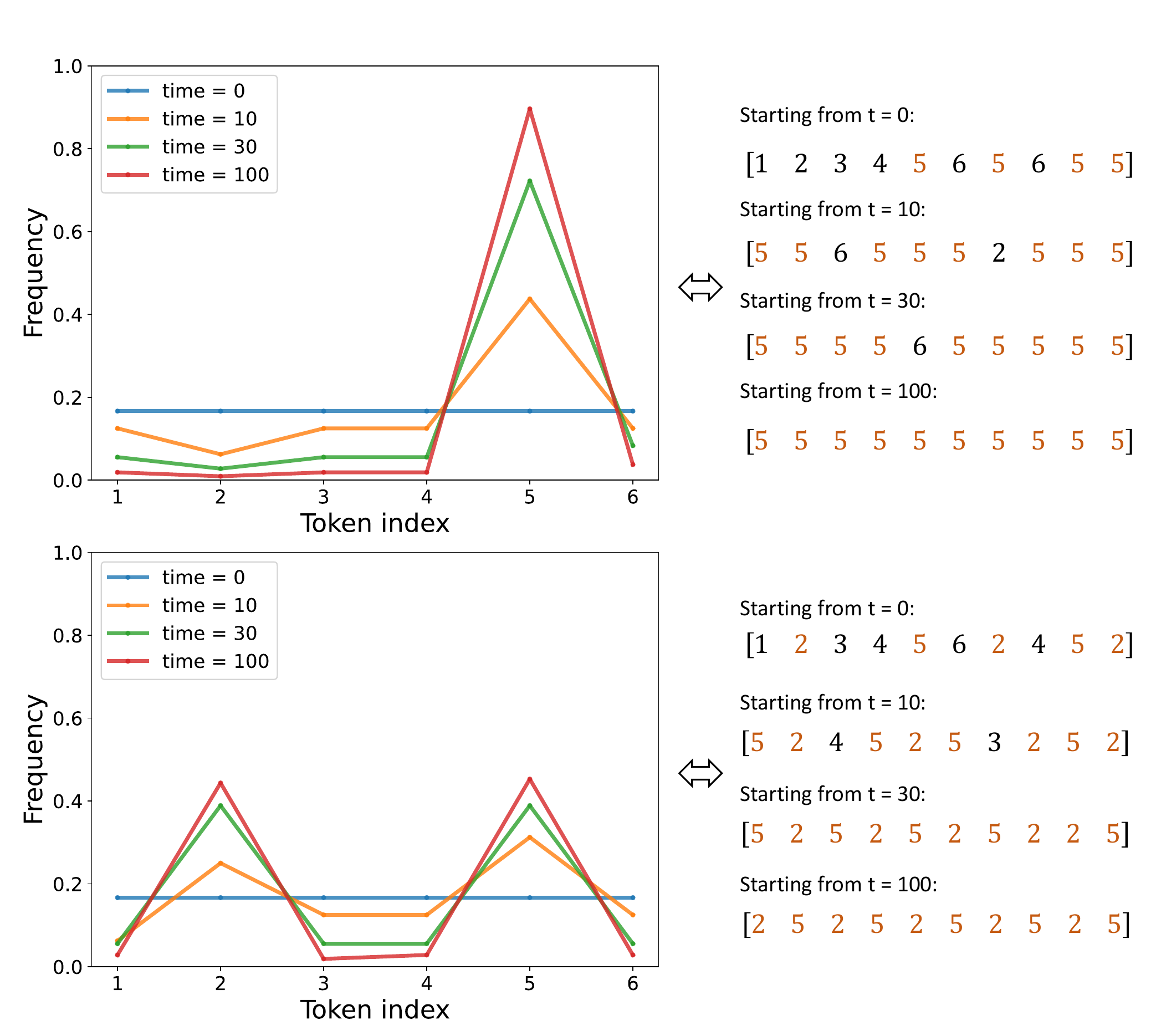}
\vspace{-7pt}
\end{minipage}
\caption{\small{Demonstration of Distribution Collapse/Repetition. 
\textbf{Left:} An example query where the GPT-2 response quickly degenerates into repetition. \textbf{Middle:} We generate two single self-attention trajectories with a vocabulary containing $K=6$ tokens and plot the empirical token frequencies (for each token in the vocabulary). The upper figure is generated using a randomly initialized transition matrix, while the lower one is generated using the same transition matrix as the upper one except that the diagonal entries are set to $0$, enforcing that the probability of query token $i \to \text{ next token }i$ is $0$. The frequency is calculated as the ratio of token occurrences to the sequence length at that time. \textbf{Right:} Trajectory snapshots with a 10-token window from time index $i$ revealing that token $5$ (upper) / tokens $2 \text{ and } 5$ (lower) dominate the trajectory. The lower right is dominated by two tokens because a single token cannot self-reinforce due to zero diagonals. }}\label{fig token freq}
\vspace{-10pt}
\end{figure*}
We first describe our experiments which elucidate why these questions are interesting and provide a strong motivation for the theory. Note that we have an equivalency between the attention models and the \PCMC model, which is true for any sampling method. Throughout this section, we discuss the consistency of $\Pgrd$, which directly implies the consistency of $\Wgrd$.

\textbf{The \emph{Distribution Collapse} phenomenon.}
To gain more motivation, we randomly initialize a one-layer self-attention model and generate a single trajectory with a length of $500$ starting from initial prompt $X_1 = [6]$. We track the evolution of token frequency as shown in the middle of Figure~\ref{fig token freq}. As illustrated in the right side of Figure~\ref{fig token freq}, when the generation time increases, the diversity of output tokens is greatly reduced, which eventually collapses to a singleton. This arises due to the \emph{self-reinforcement} of majority tokens within the trajectory. This phenomenon also corresponds to the repetition problem found in text generation by language models, as demonstrated in the left side of Figure~\ref{fig token freq}. 

\subsection{Theoretical Study of Single Trajectory Sampling}
To learn the underlying self-attention model from a single trajectory, we must visit each token/state infinitely many times. Otherwise, we could not learn the probability transitions from that last token choice. Let $S_{k, n}$ be the number of occurrences of token $k$ within $X_n$. Our first result shows that each token is guaranteed to be visited infinitely many times as the trajectory grows.
\begin{lemma} \label{lem inf token visit} Let $\Pgrd$ be a transition matrix with non-zero entries. We have that $\Pro(\lim_{n\rightarrow \infty}S_{k,n}=\infty)=1$ for all $k\in[K]$.
\end{lemma}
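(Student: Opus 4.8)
I would show that the conditional probability of generating token $k$ at each step is bounded below by a non-summable \emph{deterministic} sequence, and then invoke the conditional (Lévy) Borel--Cantelli lemma. Because $X_1$ is assumed to contain every token of $[K]$ (the standing assumption $\Tc=[K]$), this lower bound holds at \emph{every} step, which forces token $k$ to be regenerated infinitely often almost surely; since $S_{k,n}$ is nondecreasing in $n$, this is exactly the assertion $\Pro(\lim_{n\to\infty}S_{k,n}=\infty)=1$.

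\emph{Setup and per-step bound.} Write $L=|X_1|$, so $X_n$ has $L+n-1$ tokens and $\sum_{l\in[K]}S_{l,n}=L+n-1$; hence $\Fc(X_n)_l=S_{l,n}/(L+n-1)$ for each $l$. Let $\mathcal{F}_n=\sigma(y_1,\dots,y_n)$ be the natural filtration of the trajectory, let $q_n\in[K]$ be the last (query) token of $X_n$ (which is $\mathcal{F}_{n-1}$-measurable), and let $\pi_{\min}$ and $\pi_{\max}$ denote the smallest and largest entries of $\Pgrd$, both strictly positive and finite by hypothesis. Since the single-trajectory sampling proceeds according to the self-attention model, which coincides with the \PCMC dynamics of Definition \ref{def pcmc} (Theorem \ref{theorem equivalency ps}) with mask $\Fc(X_n)$, the conditional law of $y_n$ satisfies
\[
\Pro\!\left(y_n=k\mid\mathcal{F}_{n-1}\right)=\frac{\Fc(X_n)_k\,\pigrd_{q_n,k}}{\sum_{l\in[K]}\Fc(X_n)_l\,\pigrd_{q_n,l}}\ \ge\ \frac{\pi_{\min}/(L+n-1)}{\pi_{\max}}\ =\ \frac{\pi_{\min}}{\pi_{\max}\,(L+n-1)},
\]
where the numerator bound uses $\Fc(X_n)_k\ge 1/(L+n-1)$ (valid since $\Tc=[K]$ forces $S_{k,n}\ge1$) and the denominator bound uses $\sum_l\Fc(X_n)_l=1$.

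\emph{Conclusion.} Let $A_n:=\{y_n=k\}\in\mathcal{F}_n$. The display gives $\sum_{n\ge1}\Pro(A_n\mid\mathcal{F}_{n-1})\ge\frac{\pi_{\min}}{\pi_{\max}}\sum_{n\ge1}\frac1{L+n-1}=\infty$ surely. By the conditional second Borel--Cantelli lemma (Lévy's extension: for an adapted sequence, $\{A_n\ \text{i.o.}\}=\{\sum_n\Pro(A_n\mid\mathcal{F}_{n-1})=\infty\}$ almost surely), we get $\Pro(y_n=k\ \text{for infinitely many }n)=1$. Since $S_{k,n+1}=S_{k,n}+\mathbf{1}(y_n=k)$ is nondecreasing, the event $\{y_n=k\ \text{i.o.}\}$ equals $\{\lim_n S_{k,n}=\infty\}$, so $\Pro(\lim_n S_{k,n}=\infty)=1$ for each fixed $k$; a union bound over the finitely many $k\in[K]$ finishes the argument.

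\emph{Main obstacle.} There is no deep difficulty; the only point requiring care is that the events $A_n$ are dependent through the evolving empirical frequencies, so the ordinary second Borel--Cantelli lemma is inapplicable and one must use its conditional form. The clean divergence of $\sum_n\Pro(A_n\mid\mathcal{F}_{n-1})$ hinges on the observation that every token is already seeded in $X_1$, which makes the per-step lower bound $\pi_{\min}/(\pi_{\max}(L+n-1))$ hold deterministically (not merely after token $k$ first appears). A Lévy-lemma-free alternative would build a supermartingale out of $-\log S_{k,n}$ or track $\prod_n(1-p_n)\to0$ directly, but the conditional Borel--Cantelli route is the most economical.
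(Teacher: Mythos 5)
Your proposal is correct and follows essentially the same route as the paper's proof: lower-bound the conditional probability of emitting token $k$ at step $n$ by a quantity of order $1/n$ (using that every token is seeded in $X_1$ so $\Fc(X_n)_k\ge 1/(L+n-1)$), observe the harmonic divergence, and conclude via the second Borel--Cantelli lemma in its conditional (Lévy) form. If anything, your write-up is slightly more careful than the paper's, which names the plain second Borel--Cantelli lemma while actually summing conditional probabilities, and whose intermediate algebraic identity for the denominator is written in a form exact only for $K=2$ (though it remains valid as a bound in general).
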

The proof of this lemma can be found in Appendix~\ref{app sect single traj} and follows from an application of the Borel-Cantelli lemma.

This lemma is insightful in light of Figure~\ref{fig token freq}: Even if the distribution of the generated tokens collapses to a singleton, any token within the input prompt will keep appearing albeit with potentially vanishing frequencies. Next, we discuss the condition when the distribution collapse will happen in response to Figure~\ref{fig token freq}. To characterize the condition for the distribution collapse, we consider a \PCMC model with $K = 2$. Suppose the ground-truth transition matrix $\Pgrd = \begin{bmatrix} 
    1-p & 1- p \\ 
    p & p
\end{bmatrix}$. Without loss of generality, assume $p \leq 1/2$. For brevity, we define the \emph{weak token} as the token with a smaller transition probability $p$, which is Token $2$ in our setting. The next result bounds the frequency of the weak token as the trajectory grows.
\begin{lemma}[Distribution collapse]\label{lem dist col}
Consider the \PCMC model with $K = 2$ defined in Section \ref{sec:emp-single}. Suppose that $\X_1$ includes all vocabulary at least once. Recall that $\Fc(X_t)$ denotes the empirical frequency of individual states where $X_t$ is the state trajectory at time $t$. {For any $t > t_0$ with a sufficiently large $t_0$}, we have:
    \begin{align*}
        \E[\m(X_t)_2] <  t^{-q}
    \end{align*}
    where $q = 1 - p / (1-p)$. Furthermore, when $p < 1/2$, 
    \begin{align*}
        \lim_{t \to \infty} \E\left[\frac{\Fc(X_t)_2}{\Fc(X_t)_1}\right]=0.
    \end{align*}
\end{lemma}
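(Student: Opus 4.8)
The plan is to analyze the stochastic recursion governing the frequency vector $\Fc(X_t)$ in the $K=2$ case. Write $f_t := \Fc(X_t)_2$ for the frequency of the weak token $2$, so $\Fc(X_t)_1 = 1 - f_t$. At time $t$ the chain is in some state $x_t \in \{1,2\}$, and the next token is sampled according to the \PCMC transition \eqref{PCMC transition}: with $\m = \Fc(X_t) = (1-f_t, f_t)$, the probability of emitting token $2$ from either state $i$ is $\pi^{X_t}_2 = \frac{f_t \cdot p}{(1-f_t)(1-p) + f_t \cdot p}$, since both columns of $\Pgrd$ equal $(1-p, p)^\top$. Crucially this does not depend on the current state $x_t$, so the emitted token at each step is, conditionally on $f_t$, a Bernoulli with that success probability. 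The frequency then updates as $f_{t+1} = \frac{t f_t + B_{t+1}}{t+1}$ where $B_{t+1}$ is that Bernoulli variable. This is a generalized Pólya-urn / stochastic-approximation recursion, and the ODE heuristic $\dot f = \frac{pf}{(1-p)(1-f)+pf} - f$ shows $f=0$ is the stable fixed point when $p < 1/2$ (and $f=1$ is unstable), which is the source of collapse.

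**Bounding $\E[f_t]$.**

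For the quantitative bound I would take conditional expectations. From the recursion, $\E[f_{t+1} \mid \mathcal{G}_t] = f_t + \frac{1}{t+1}\left( g(f_t) - f_t \right)$ where $g(u) = \frac{pu}{(1-p)(1-u)+pu}$ and $\mathcal{G}_t$ is the natural filtration. Near $u = 0$ we have $g(u) \le \frac{p}{1-p} u$, so on the event that $f_t$ is small, $\E[f_{t+1}\mid \mathcal{G}_t] \le f_t\left(1 + \frac{1}{t+1}\left(\tfrac{p}{1-p} - 1\right)\right) = f_t\left(1 - \frac{q}{t+1}\right)$ with $q = 1 - \frac{p}{1-p}$. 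The main obstacle is making "on the event that $f_t$ is small" rigorous: one must first show $f_t \to 0$ almost surely (so that for $t > t_0$, $f_t$ lies below the threshold where $g(u) \le \frac{p}{1-p}u$ holds), and then control the exceptional event. I would establish a.s. convergence via standard stochastic-approximation / supermartingale convergence arguments (e.g. Robbins–Siegmund), using that $g(u) < u$ for all $u \in (0,1)$ when $p < 1/2$ strictly, plus Lemma~\ref{lem inf token visit} to rule out absorption issues. Once $f_t \le \epsilon$ for all $t \ge t_0$, iterating the contraction gives $\E[f_t \mid \mathcal{G}_{t_0}] \le f_{t_0}\prod_{s=t_0+1}^{t}\left(1 - \frac{q}{s}\right) \lesssim f_{t_0}\, t^{-q}$, and taking outer expectation yields $\E[\m(X_t)_2] = \E[f_t] < t^{-q}$ for $t$ large, possibly after absorbing the constant $f_{t_0}$ into the definition of "sufficiently large $t_0$".

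**The ratio limit.**

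For the final claim, write $\frac{\Fc(X_t)_2}{\Fc(X_t)_1} = \frac{f_t}{1 - f_t}$. Since $f_t \to 0$ a.s. (from the convergence step above) and $p < 1/2$ makes $q > 0$ so $\E[f_t] \to 0$, I would combine a.s. convergence of $\frac{f_t}{1-f_t} \to 0$ with a uniform integrability argument — or more simply observe that for $t$ large enough that $f_t \le 1/2$, $\frac{f_t}{1-f_t} \le 2 f_t$, hence $\E\!\left[\frac{f_t}{1-f_t}\right] \le 2\E[f_t] + \P(f_t > 1/2)$, and both terms vanish (the first by the bound just proved, the second since $f_t \to 0$ a.s. implies $\P(f_t > 1/2) \to 0$). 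This gives $\lim_{t\to\infty}\E\!\left[\frac{\Fc(X_t)_2}{\Fc(X_t)_1}\right] = 0$.

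**Main obstacle.** The delicate part is the a.s. convergence $f_t \to 0$ and the transition from "the drift is contractive near $0$" to a clean bound valid for all $t > t_0$; handling the boundary behavior (the unstable fixed point at $f = 1$, and ensuring the trajectory does not linger near it) is where I expect to spend the most care, leaning on Lemma~\ref{lem inf token visit} and the strict inequality $p < 1/2$.
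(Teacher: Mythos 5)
Your overall route is the same as the paper's: reduce to the scalar recursion for the token-$2$ frequency $f_t$, extract a multiplicative contraction of the form $1-q/t$ from the linearized drift, iterate, and compare $\prod(1-q/\tau)$ with $\exp(-q\ln t)=t^{-q}$. (The paper works with the count $S_t$ and length $L_t$ rather than $f_t$ directly, and hides the linearization inside an ``$\approx$'' justified by $L_t\gg S_{t-1}$, but it is the same calculation.) Your observation that the emission probability is state-independent because both columns of $\Pgrd$ coincide, so the process is a pure P\'olya-type urn, is correct and implicitly used by the paper as well. Your treatment of the ratio limit is actually cleaner than the paper's: the paper writes $\E[f_t/(1-f_t)]=\E[f_t]/(1-\E[f_t])$, which is not an identity, whereas your bound $f_t/(1-f_t)\le 2f_t+\mathbf{1}(f_t>1/2)$ avoids that; note also that you do not need almost-sure convergence for $\P(f_t>1/2)\to 0$ — Markov's inequality with $\E[f_t]\lesssim t^{-q}$ already gives it.

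The one genuine error is the direction of your linearization inequality. With $g(u)=\frac{pu}{(1-p)(1-u)+pu}$, the denominator equals $(1-p)-(1-2p)u\le 1-p$ for $p<1/2$, so in fact $g(u)\ge \frac{p}{1-p}\,u$ for all $u\in(0,1)$, with equality only at $u=0$ — the opposite of what you assert. Consequently $\E[f_{t+1}\mid\mathcal{G}_t]\le f_t\bigl(1-\frac{q}{t+1}\bigr)$ does not follow; the true conditional drift is slightly weaker than the $-qf_t/(t+1)$ you need. What is true is $g(u)=\frac{p}{1-p}\cdot\frac{u}{1-qu}\le \frac{p}{1-p}(1+2qu)\,u$ for $u$ small, which yields a contraction factor $1-\frac{q(1-\bar{p}\,O(f_t))}{t+1}$; iterating then gives $\E[f_t]\le C\,t^{-q}$ with a constant $C=\exp\bigl(O(\sum_\tau f_\tau/\tau)\bigr)$ that you must show is finite (e.g.\ by a bootstrap: a preliminary bound $f_\tau\lesssim\tau^{-q'}$ for some $q'>0$ makes $\sum_\tau f_\tau/\tau$ summable). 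This costs you a multiplicative constant relative to the bare $t^{-q}$ claimed in the statement, but that is a defect shared with the paper's own argument, whose ``$\approx$'' step errs in exactly the same direction. So: same approach, one reversed inequality to fix, and the remaining rigor gaps (a.s.\ convergence to justify restricting to the event $\{f_t\text{ small}\}$, and the constant in front of $t^{-q}$) are ones the paper itself does not close.
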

 
\begin{wrapfigure}{R}{0.5\textwidth}   
  \begin{center}
\includegraphics[width=0.48\textwidth]{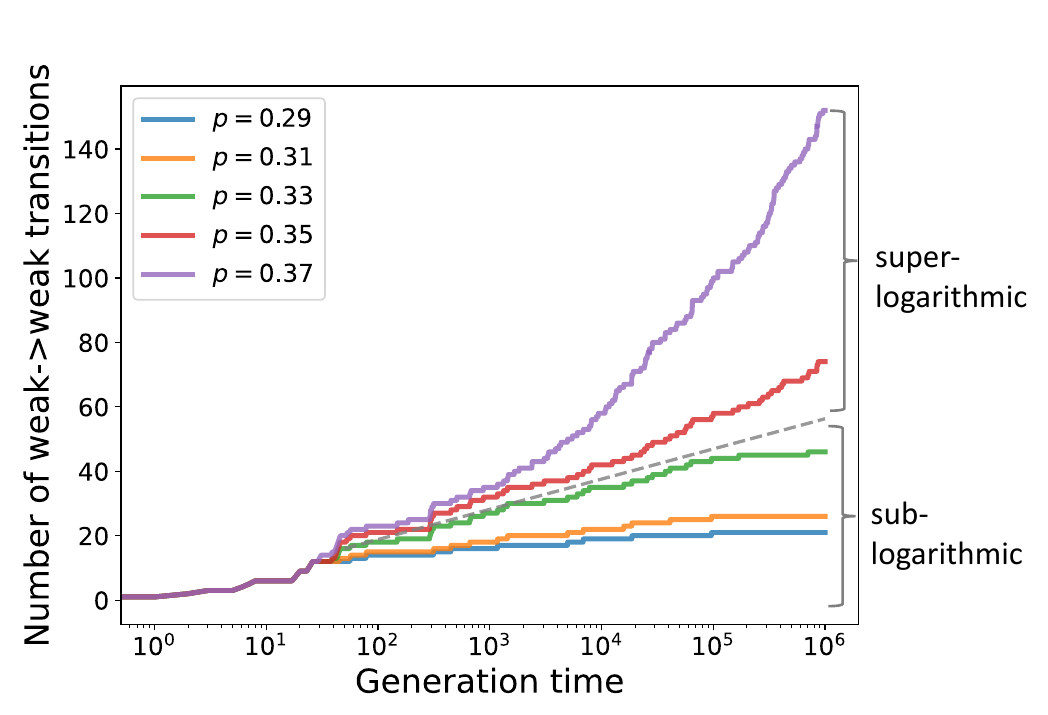}
  \end{center}
  \caption{Growth of weak to weak transition}
  \label{fig:weak-trans}
\end{wrapfigure}

The proof is provided in Appendix~\ref{app sect single traj}. Lemma~\ref{lem dist col} states that token $1$ will dominate the trajectory when $p < 1/2$. As a result, the distribution collapse will happen as long as $p \neq 1/2$ due to the symmetry. 
So far, we show that all tokens are visited infinitely many but their frequencies can vanish. On the other hand, to learn the underlying transition matrix, we should also visit each transition (from state/token $i$ to state/token $j$) infinitely many times (otherwise the estimated $p$ would be 0). To study this question, we ask: \emph{Will self-attention visit the transition from the weak token to itself forever?} Figure \ref{fig:weak-trans} shows the number of weak$\rightarrow$weak transitions for varying $p$ choices. We observe that this number grows super-logarithmic in trajectory length when $p$ exceeds $1/3$ and it is sub-logarithmic when $p$ is smaller than $1/3$. We argue that this sub-logarithmic (very slow) growth is actually an indicator of the fact that there are actually finitely many weak$\rightarrow$weak throughout the trajectory, which would in turn make estimation of the second column of $\Pgrd$ inconsistent.


To justify this, we utilize our theory to study the growth of weak$\rightarrow$weak transitions (albeit non-rigorously). Lemma \ref{lem dist col} shows that the expected density of the weak token is $t^{-q}$ throughout the trajectory. Let us treat this expectation as the true weak token probability at time $t$. Next, since the trajectory contains only $O(t^{-q})$ fraction weak tokens, due to the \PCMC model, the chance of transition to a weak token (from any token) is $O(t^{-q})$. Combining these, we find that $\Pro(\text{weak}\rightarrow\text{weak})=\Pro(\text{weak}|\text{weak})\Pro(\text{weak})\propto t^{-2q}$. With this estimate at hand, we can use Borel-Cantelli to study finiteness of weak$\rightarrow$weak transitions. Specifically $\int_{t=1}^\infty t^{-2q}$ is finite when $q\geq 1/2$ and infinite when $q<1/2$. This translates to $p\leq 1/3$ and $p>1/3$ respectively and remarkably coincides with the sub/super-logarithmic growth observed in Figure \ref{fig:weak-trans}.




\begin{figure}[tb] 
    \centering    
    \begin{minipage}{0.9\linewidth}
\vspace{-7pt}
\includegraphics[width=0.95\linewidth]{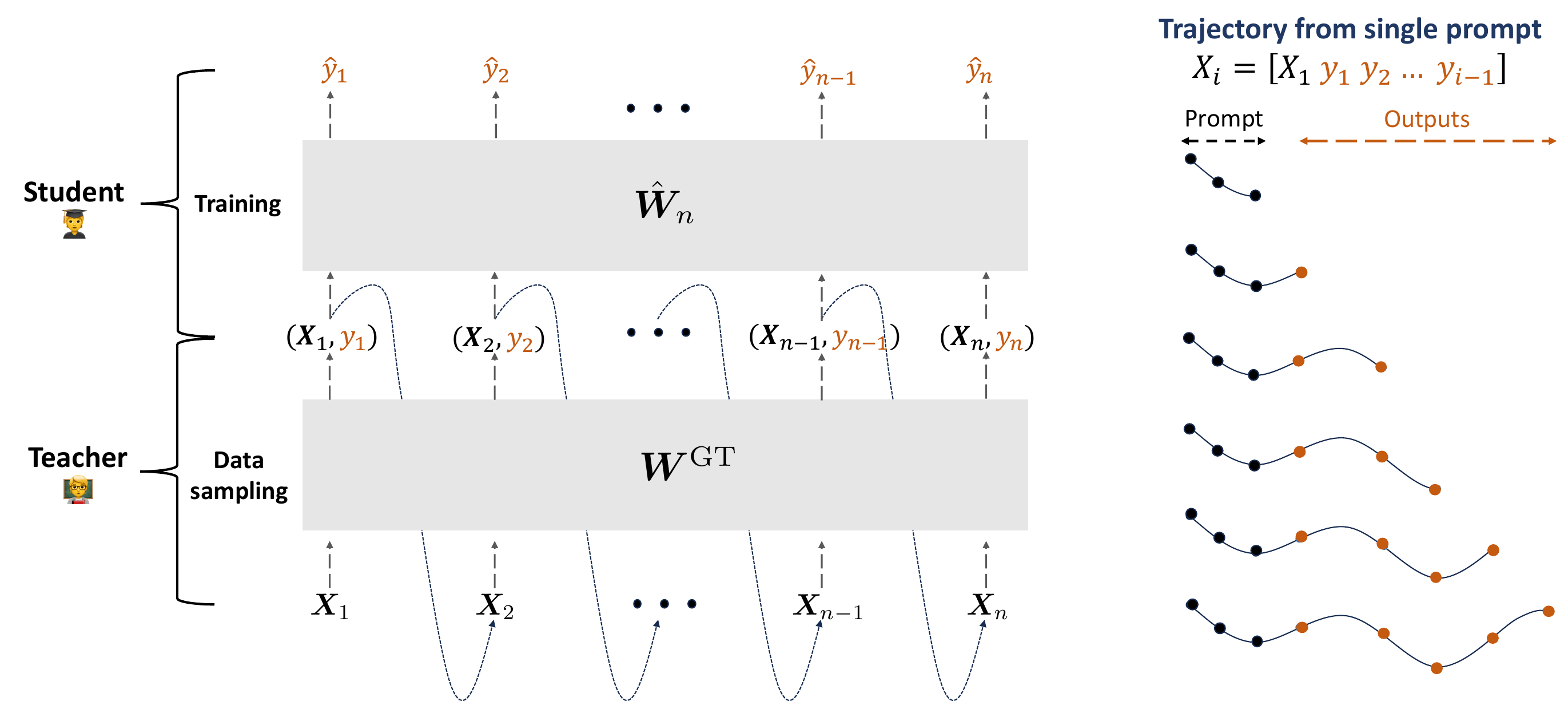}
    \vspace{-6pt}
    \centering
    \caption{\small{Illustration of single-trajectory learning where next tokens are sampled from a single trajectory. The setting is analogous to asking language models a broad question and constantly collecting the responses.
    }}
    \vspace{-12pt}
    \label{fig:single-learning}
\end{minipage}
\end{figure}

\section{The Role of Positional Encoding}\label{sect position embedding}
In Section \ref{sect setup}, we have established an equivalence between the attention models and the \PCMC model using a mask. This mask was defined by $\Fc(X)$ in \eqref{eq:freq-func}, which is associated with the occurrences of different tokens in an input prompt. In this section, we incorporate positional encodings into the \PCMC model to study its impact on the transition dynamics.

To proceed, suppose the input prompt is fixed to be length $L$. We will use absolute position encoding \citet{vaswani2017} which adds a position vector $\ub_i$ at position $i$ for $1\leq i\leq L$. Recalling $\eb_{x_i}$ is the vocabulary embedding, this leads to the following input embedding $\x_i = \ub_i + \eb_{x_i}$.
Let $\Ub := [\ub_1~\dots~\ub_L]^\top \in \R^{L \times d}$ be the positional embedding matrix with $\ub_i \in \R^d$. The linear classifier $\Cb$ in the attention models predicts the next token ID. In addition to Lemma \ref{assume iden}, we assume the following to ensure that there is no bias to the classifier output from position embeddings.
\begin{assumption}\label{assume positional embedding C}
    The projection of positional embedding onto the columns of $\Cb$ is zero, i.e., $\Cb \Ub^\top = \mathbf{0}$
\end{assumption}
To quantify the effect of positional embedding on the output of the attention model, we define the variables $\ab \in \R^L, \bb \in \R^K$, and $\Vb \in \R^{L \times K}$ as follows: 
\begin{equation*}
    \begin{split}
        \ab \coloneqq \exp(\Ub\W\ub_{L}) \quad & \quad
        \bb \coloneqq \exp(\Eb\W\ub_{L}) \\ 
        \Vb \coloneqq \exp&(\Ub\W\Eb^{\top}) 
    \end{split}
\end{equation*}
where $\exp(\cdot)$ represents the element-wise exponential function. Then, we define the probability distribution characterizing the \PCMC model as follows: $\P_{(\Pb,\Ub)}(x_{L+1} = j | X) =$
\begin{equation}\label{pos PCMC transition}
\begin{split}
     \hspace{-1cm} \frac{b_j \pi_{\xxl, j} \sum_{i = 1}^L a_i  V_{i,x_L} \cdot \boldsymbol{1}({x_i = j}) }{\sum_{k = 1}^K b_k \pi_{\xxl, k} \sum_{i = 1}^L a_i  V_{i,x_L} \cdot \boldsymbol{1}({x_i = k}) } 
\end{split}
\end{equation}
where $\pi_{ij}$ is based on $\Pb^{\W}$, defined in \eqref{pbw def}. The intuition behind \eqref{pos PCMC transition} is that, in Section \ref{sect setup}, the \PCMC model is constructed by a mask with $K-$dimension whereas $\eqref{pos PCMC transition}$ can be considered as a mask with $K\times L$ dimension. Now, we are ready to share our main results of this section:
\begin{lemma}\label{lemma positional embedding}
    Suppose Assumptions \ref{assume iden} and \ref{assume positional embedding C} hold. Then, for any $\W \in \R^{d \times d}$, there exists the transition matrix $\Pb^{\W}$ such that for any $(X, y)$ we have the following:
    \begin{align*}
        \P_{\Pb, \Ub}( y | X) = \cb_{y}^\top \X^\top \sft{\X \W \xl}
    \end{align*}
\end{lemma}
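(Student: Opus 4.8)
The plan is to prove the identity by a direct expansion of the attention logits, paralleling the argument behind Lemma~\ref{lemma identity} but now accounting for the additive position term in the embedding $\x_i = \ub_i + \eb_{x_i}$. First I would write the $i$-th logit of the softmax in \eqref{eq:sattn-def} as $\x_i^\top\W\xl = (\ub_i+\eb_{x_i})^\top\W(\ub_L+\eb_{x_L})$ and expand the bilinear form into the four cross terms $\ub_i^\top\W\ub_L$, $\ub_i^\top\W\eb_{x_L}$, $\eb_{x_i}^\top\W\ub_L$, $\eb_{x_i}^\top\W\eb_{x_L}$. Exponentiating turns this sum into a product, and by the definitions of $\ab,\bb,\Vb$ the first three factors are precisely $a_i$, $V_{i,x_L}$, and $b_{x_i}$. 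The fourth factor $\exp(\eb_{x_i}^\top\W\eb_{x_L})$ equals $Z_{x_L}\cdot\pi_{x_L,x_i}$, where $\pi_{x_L,x_i}$ is the entry of $\Pb^{\W}$ from \eqref{pbw def} and $Z_{x_L}=\sum_{k=1}^K\exp(\eb_k^\top\W\eb_{x_L})$ is the column normalizer of the base chain; the key point is that $Z_{x_L}$ depends only on the query token $x_L$, not on the position $i$, so it cancels in the softmax. Consequently $[\s_X]_i \propto a_i\,V_{i,x_L}\,b_{x_i}\,\pi_{x_L,x_i}$, with the normalization taken over positions $i\in[L]$.

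Next I would form $\cb_y^\top f_\W(X) = \sum_{i=1}^L \cb_y^\top(\ub_i+\eb_{x_i})\,[\s_X]_i$. Assumption~\ref{assume positional embedding C} ($\Cb\Ub^\top=\mathbf{0}$) eliminates the positional part since $\cb_y^\top\ub_i=0$, and Assumption~\ref{assume iden} ($\Cb\Eb^\top=\Iden_K$) gives $\cb_y^\top\eb_{x_i}=\boldsymbol{1}(x_i=y)$, so $\cb_y^\top f_\W(X)=\sum_{i:\,x_i=y}[\s_X]_i$. On the support of the indicator we may replace $b_{x_i},\pi_{x_L,x_i}$ by the constants $b_y,\pi_{x_L,y}$ and pull them out of the sum, producing exactly the numerator $b_y\,\pi_{x_L,y}\sum_{i=1}^L a_i V_{i,x_L}\boldsymbol{1}(x_i=y)$ of \eqref{pos PCMC transition}. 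For the denominator I would regroup the position sum $\sum_{i=1}^L a_i V_{i,x_L} b_{x_i}\pi_{x_L,x_i}$ by partitioning the positions according to which token $k\in[K]$ occupies them, which gives $\sum_{k=1}^K b_k\pi_{x_L,k}\sum_{i=1}^L a_i V_{i,x_L}\boldsymbol{1}(x_i=k)$ — precisely the denominator of \eqref{pos PCMC transition}. Matching numerator and denominator establishes $\cb_y^\top f_\W(X)=\P_{(\Pb,\Ub)}(y\mid X)$ with $j=y$.

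This is a bookkeeping identity rather than a deep statement, so I do not anticipate a real obstacle; the work is entirely in keeping the algebra organized. The two points that need care are: (i) distinguishing the two index ranges — positions $i\in[L]$ versus vocabulary tokens $k\in[K]$ — so that the regrouping of the denominator into a sum over $k$ is valid; and (ii) being explicit that the base-chain column normalizer $Z_{x_L}$ factors out of every position-$i$ term and hence cancels between the softmax numerator and denominator, which is what lets the four-way factorization collapse onto the single factor $\pi_{x_L,\cdot}$ of $\Pb^{\W}$. One should also record that the statement presumes a fixed prompt length $L$, so that $\ab\in\R^L$ and $\Vb\in\R^{L\times K}$ are well-defined, as already assumed in the surrounding text; the positionless Lemma~\ref{lemma identity} is recovered by taking the $\ub_i$ trivial, which makes $\ab,\bb,\Vb$ degenerate and reduces \eqref{pos PCMC transition} to Definition~\ref{def pcmc}.
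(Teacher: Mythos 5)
Your proposal is correct and follows essentially the same route as the paper's proof: expand the bilinear logit $(\ub_i+\eb_{x_i})^\top\W(\ub_L+\eb_{x_L})$ into the four cross terms identified with $a_i$, $V_{i,x_L}$, $b_{x_i}$, and the base-chain factor, use $\Cb\Ub^\top=\mathbf{0}$ and $\Cb\Eb^\top=\Iden_K$ to reduce the output to an indicator sum over positions, and regroup by token while the column normalizer of $\Pb^{\W}$ cancels between numerator and denominator. The only difference is cosmetic ordering (you expand the logits before applying $\cb_y$, the paper applies $\cb_y$ first), and your explicit remarks about the cancellation of $Z_{x_L}$ and the $[L]$-versus-$[K]$ regrouping make the bookkeeping slightly more transparent than the paper's.
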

The proof is provided in Appendix~\ref{app sect position}. Note that the one-to-one map, consistency of estimation, and finite sample guarantee can be built upon Lemma~\ref{lemma positional embedding} for the \PCMC model with positional embedding by following similar arguments to the previous sections.  

\section{Related Work}\label{sec related}

\noindent\textbf{Theoretical treatment of attention models.}~
\citet{yun2020_universal, edelman2022_inductive, fu2023_randomfeatures, baldi2023_quarks} focused on expressive power or inductive biases of attention-based models. \cite{jelassi2022_transformer,li2023_transformer,oymak23a_prompt, tarzanagh2023max, tarzanagh2023transformers} studied optimization and generalization dynamics of simplified attention models for supervised classification tasks. \citet{tian2023_scan, anonymous} explored the training dynamics of such models for the next-token prediction task. To the best of our knowledge, we are the first ones to establish the connection between (context-conditioned) Markov chains and self-attention models and leverage that to establish rigorous guarantees for learning the models. Although not directly related to this work, there is also a growing body of literature on the theoretical study of in-context learning \cite{xie2022_incontext}; \cite{garg2022_incontext}; \cite{li2023_incontext}; \cite{akyurek2023_incontext}; \cite{oswald2023_incontext}.


\noindent \textbf{Learning Markov chains.}~The problem of estimating the transition matrix of a Markov chain from a single trajectory generated by the chain is a classical problem~\citep{billingsley1961statistical}. Recently, \citet{wolfer2019minimax,wolfer2021statistical,hao2018learning} (nearly) characterize minimax estimation risk for ergodic Markov chains. 
\citet{pmlr-v132-chan21a} consider the same problem for irreducible and reversible Markov chains. There is also a large literature on estimating the transition matrix 
under some structural constraints on the transition matrix such as low-rank assumption~\citep{zhang2019spectral, shah2020sample, stojanovic2023spectral, bi2023low, li2018estimation, zhu2022learning}. Interestingly, \citet{stojanovic2023spectral} also considers matrix estimation from multiple transitions observed from IID sampled states. Note that the text generation from attention models is not quite Markovian due to the context-dependent masking of the base Markov chain in \PCMC. Furthermore, we are interested in recovering the model weights $\W$ instead of the transition probabilities. A concurrent work \cite{makkuva2024attention} also explores the connection between a 1-layer attention model and Markov chains. The authors aim to learn a standard Markov chain using 1-layer self-attention whereas we show that self-attention is a non-Markovian model and we construct a general mapping between self-attention and a modified Markov chain dynamics. They primarily focus on the optimization geometry of the 1-layer attention model whereas we also provide learnability guarantees including consistency and sample complexity. On the other hand, our guarantees require a weight-tying condition (Assumption \ref{assume iden}) that convexifies our problem formulation (Lemma \ref{lemma cvx app}) whereas their analysis allow for a more general attention layer.

\noindent \textbf{Shortcomings in neural text generation.}~Multiple works~\citep[see, e.g.,][]{see2017get, holtzman2019curious, welleck2020consistency, xu2022learning} have explored various issues with the language model generated texts, especially repetitive nature of such texts. 
\citet{xu2022learning} argue that the self-enforcing behavior of language models leads to repetitions. This aligns with our formal analysis of self-attention models using \PCMC. Several studies offer training-based solutions to mitigate repetition \citep{xu2022learning, welleck2019neural, lin2021straight}, while others modify the decoding process~\citep{fan2018hierarchical,holtzman2019curious,welleck2020consistency}. Rather than proposing a new solution, our work rigorously characterizes the conditions under which repetition becomes inevitable in self-attention models.


\citet{fu2021theoretical} analyzed repetition in text generated by Markov models, attributing it to high-inflow words. Our work diverges significantly. Instead of assuming a Markovian process, we establish an equivalence between self-attention mechanisms and context-conditioned Markov chains. This leads to the non-Markovian generation, where the entire context influences the next token.  Furthermore, our theoretical analysis extends beyond repetition, using this \PCMC equivalence to explore the learnability of self-attention models from generated data. 

\textbf{Reinforcement learning and data-driven control.} The coverage condition that we have found for the consistency of estimation is also related to the data coverage condition in offline reinforcement learning  \citep{chen2019informationtheoretic_coverage, xie2020q_coverage, zhan2022offline_coverage, jin2022pessimism_coverage, foster2022offline_coverage, rashidinejad2023bridging_coverage}. In these works, given a dataset collected according to offline policies, we wish to learn optimal policy, which raises a distribution shift challenge. Their statistical analysis relies on the data coverage conditions during dataset collection to withstand the distribution shift. Our work is related to these at a high-level since we provide necessary and sufficient conditions on the input prompt distribution for the consistent estimation of a ground truth attention matrix.  Furthermore, we establish finite sample complexity guarantees under the coverage conditions that provide the consistency of estimation.  

Our work also relates to the literature on the statistical aspects of time-series prediction \citep{kuznetsov2014generalization,kuznetsov2016time,simchowitz2018learning,mohri2008rademacher} and learning (non)linear dynamics \citep{dean2020sample,ziemannlearning,dean2020sample,tsiamis2022statistical,sarkar2019near,sun2022finite,mania2020active,oymak2021revisiting,block2023smoothed}. Learning dynamical systems from a single trajectory has attracted significant attention in the recent literature \citep{ziemann2022single,oymak2019stochastic,sattar2022non,oymak2019non,matni2019tutorial,foster2020learning,ziemann2024sharp}. As long as the stochastic process is mixing (e.g.~ergodic Markov chain, stable dynamical system), the samples from the trajectory are approximately independent, and the underlying hypothesis can be learned under suitable assumptions \citep{yu1994rates}. There is also a recent emphasis on mitigating the need for mixing \cite{simchowitz2018learning,ziemann2022learning}. Unlike dynamical systems, MDPs, or Markov chains, the token generation process of self-attention is non-Markovian as it depends on the whole past trajectory. Thus, our work initiates the statistical and consistency study of learning self-attention process by highlighting its unique nature and challenges.

\section{Discussion}

In this work, we have studied theoretical properties of the self-attention layer by formally linking its dynamics to (context-conditioned) Markov chains. Through this connection, we identify when a ground-truth self-attention layer is learnable by observing its output tokens. We develop consistency and finite sample learning guarantees for multiple prompts as well as for single trajectory learning, which reveal novel insights into the self-attention mechanism (such as prompt coverage conditions and distribution collapse in the single trajectory). 



An important future direction is relaxing Assumption \ref{assume iden} to more general and realistic conditions. This assumption implicitly necessitates that $d \geq K$ and also samples the next token from the tokens within the input sequence. An initial way to relax this assumption is to assume that $\Cb \Eb^\top$ is equal to a column stochastic matrix. More broadly, instead of linear classifier $\Cb$, it is possible to incorporate a Multi-Layer Perceptron (MLP) into the model. However, these relaxations may cause the loss of convexity but provide a deeper understanding of the self-attention layer. The stochastic matrix assumption provides flexibility to have a next token that is not inside the input prompt. In this model, we think that the equivalency between the \PCMC model and the attention can be constructed and consistency can be obtained with minor modifications in the definition of the co-occurrence graphs. However, the sample complexity guarantees should be established in a different way as the convexity does not hold for general stochastic matrices. In addition to relaxing the assumption on $\Cb$, it is also interesting to study the case of $d \ll K$, which is related to the low-rank adaption (LoRA) of the attention matrix. We believe that the correspondence of this attention matrix is a compressed version of the base Markov chain.

Other possible future directions are (1) studying the multi-layer attention models and their connection to hierarchical Markov models and representation learning, (2) characterizing the consistency and finite sample learnability of self-attention from the single trajectory, and (3) analysis of the impact of the End-Of-Sequence (EOS) token, which is utilized to terminate the generation of outputs in modern language models.
 


\section*{Acknowledgement} 
This work was supported in part by the NSF grants CCF-2046816 and CCF-2212426, UMich's MIDAS PODS program, a Google Research Scholar award, and an Adobe Data Science Research award. The authors thank Vijay Subramanian for helpful discussion.

\newpage
\bibliography{main}

\begin{thebibliography}{73}
\providecommand{\natexlab}[1]{#1}
\providecommand{\url}[1]{\texttt{#1}}
\expandafter\ifx\csname urlstyle\endcsname\relax
  \providecommand{\doi}[1]{doi: #1}\else
  \providecommand{\doi}{doi: \begingroup \urlstyle{rm}\Url}\fi

\bibitem[Aky{\"u}rek et~al.(2023)Aky{\"u}rek, Schuurmans, Andreas, Ma, and Zhou]{akyurek2023_incontext}
Aky{\"u}rek, E., Schuurmans, D., Andreas, J., Ma, T., and Zhou, D.
\newblock What learning algorithm is in-context learning? investigations with linear models.
\newblock In \emph{The Eleventh International Conference on Learning Representations}, 2023.
\newblock URL \url{https://openreview.net/forum?id=0g0X4H8yN4I}.

\bibitem[Baldi \& Vershynin(2023)Baldi and Vershynin]{baldi2023_quarks}
Baldi, P. and Vershynin, R.
\newblock The quarks of attention: Structure and capacity of neural attention building blocks.
\newblock \emph{Artificial Intelligence}, 319:\penalty0 103901, 2023.
\newblock ISSN 0004-3702.
\newblock \doi{https://doi.org/10.1016/j.artint.2023.103901}.
\newblock URL \url{https://www.sciencedirect.com/science/article/pii/S0004370223000474}.

\bibitem[Bartlett et~al.(2005)Bartlett, Bousquet, and Mendelson]{Bartlett_2005}
Bartlett, P.~L., Bousquet, O., and Mendelson, S.
\newblock Local rademacher complexities.
\newblock \emph{The Annals of Statistics}, 33\penalty0 (4), August 2005.
\newblock ISSN 0090-5364.
\newblock \doi{10.1214/009053605000000282}.
\newblock URL \url{http://dx.doi.org/10.1214/009053605000000282}.

\bibitem[Bi et~al.(2023)Bi, Yin, and Weng]{bi2023low}
Bi, S., Yin, Z., and Weng, Y.
\newblock A low-rank spectral method for learning markov models.
\newblock \emph{Optimization Letters}, 17\penalty0 (1):\penalty0 143--162, 2023.

\bibitem[Billingsley(1961)]{billingsley1961statistical}
Billingsley, P.
\newblock Statistical methods in markov chains.
\newblock \emph{The annals of mathematical statistics}, pp.\  12--40, 1961.

\bibitem[Block et~al.(2023)Block, Simchowitz, and Tedrake]{block2023smoothed}
Block, A., Simchowitz, M., and Tedrake, R.
\newblock Smoothed online learning for prediction in piecewise affine systems.
\newblock \emph{arXiv preprint arXiv:2301.11187}, 2023.

\bibitem[Brown et~al.(2020)Brown, Mann, Ryder, Subbiah, Kaplan, Dhariwal, Neelakantan, Shyam, Sastry, Askell, Agarwal, Herbert-Voss, Krueger, Henighan, Child, Ramesh, Ziegler, Wu, Winter, Hesse, Chen, Sigler, Litwin, Gray, Chess, Clark, Berner, McCandlish, Radford, Sutskever, and Amodei]{brown2020_gpt3}
Brown, T., Mann, B., Ryder, N., Subbiah, M., Kaplan, J.~D., Dhariwal, P., Neelakantan, A., Shyam, P., Sastry, G., Askell, A., Agarwal, S., Herbert-Voss, A., Krueger, G., Henighan, T., Child, R., Ramesh, A., Ziegler, D., Wu, J., Winter, C., Hesse, C., Chen, M., Sigler, E., Litwin, M., Gray, S., Chess, B., Clark, J., Berner, C., McCandlish, S., Radford, A., Sutskever, I., and Amodei, D.
\newblock Language models are few-shot learners.
\newblock In Larochelle, H., Ranzato, M., Hadsell, R., Balcan, M., and Lin, H. (eds.), \emph{Advances in Neural Information Processing Systems}, volume~33, pp.\  1877--1901. Curran Associates, Inc., 2020.
\newblock URL \url{https://proceedings.neurips.cc/paper_files/paper/2020/file/1457c0d6bfcb4967418bfb8ac142f64a-Paper.pdf}.

\bibitem[Chan et~al.(2021)Chan, Ding, and Li]{pmlr-v132-chan21a}
Chan, S.~O., Ding, Q., and Li, S.~H.
\newblock Learning and testing irreducible {M}arkov chains via the $k$-cover time.
\newblock In Feldman, V., Ligett, K., and Sabato, S. (eds.), \emph{Proceedings of the 32nd International Conference on Algorithmic Learning Theory}, volume 132 of \emph{Proceedings of Machine Learning Research}, pp.\  458--480. PMLR, 16--19 Mar 2021.

\bibitem[Chen \& Jiang(2019)Chen and Jiang]{chen2019informationtheoretic_coverage}
Chen, J. and Jiang, N.
\newblock Information-theoretic considerations in batch reinforcement learning, 2019.

\bibitem[Chowdhery et~al.(2022)Chowdhery, Narang, Devlin, Bosma, Mishra, Roberts, Barham, Chung, Sutton, Gehrmann, et~al.]{chowdhery2022_palm}
Chowdhery, A., Narang, S., Devlin, J., Bosma, M., Mishra, G., Roberts, A., Barham, P., Chung, H.~W., Sutton, C., Gehrmann, S., et~al.
\newblock Palm: Scaling language modeling with pathways.
\newblock \emph{arXiv preprint arXiv:2204.02311}, 2022.

\bibitem[Dean et~al.(2020)Dean, Mania, Matni, Recht, and Tu]{dean2020sample}
Dean, S., Mania, H., Matni, N., Recht, B., and Tu, S.
\newblock On the sample complexity of the linear quadratic regulator.
\newblock \emph{Foundations of Computational Mathematics}, 20\penalty0 (4):\penalty0 633--679, 2020.

\bibitem[Edelman et~al.(2022)Edelman, Goel, Kakade, and Zhang]{edelman2022_inductive}
Edelman, B.~L., Goel, S., Kakade, S., and Zhang, C.
\newblock Inductive biases and variable creation in self-attention mechanisms.
\newblock In Chaudhuri, K., Jegelka, S., Song, L., Szepesvari, C., Niu, G., and Sabato, S. (eds.), \emph{Proceedings of the 39th International Conference on Machine Learning}, volume 162 of \emph{Proceedings of Machine Learning Research}, pp.\  5793--5831. PMLR, 17--23 Jul 2022.
\newblock URL \url{https://proceedings.mlr.press/v162/edelman22a.html}.

\bibitem[Fan et~al.(2018)Fan, Lewis, and Dauphin]{fan2018hierarchical}
Fan, A., Lewis, M., and Dauphin, Y.
\newblock Hierarchical neural story generation.
\newblock \emph{arXiv preprint arXiv:1805.04833}, 2018.

\bibitem[Foster et~al.(2020)Foster, Sarkar, and Rakhlin]{foster2020learning}
Foster, D., Sarkar, T., and Rakhlin, A.
\newblock Learning nonlinear dynamical systems from a single trajectory.
\newblock In \emph{Learning for Dynamics and Control}, pp.\  851--861. PMLR, 2020.

\bibitem[Foster et~al.(2022)Foster, Krishnamurthy, Simchi-Levi, and Xu]{foster2022offline_coverage}
Foster, D.~J., Krishnamurthy, A., Simchi-Levi, D., and Xu, Y.
\newblock Offline reinforcement learning: Fundamental barriers for value function approximation, 2022.

\bibitem[Fu et~al.(2023)Fu, Guo, Bai, and Mei]{fu2023_randomfeatures}
Fu, H., Guo, T., Bai, Y., and Mei, S.
\newblock What can a single attention layer learn? a study through the random features lens.
\newblock \emph{arXiv preprint arXiv:2307.11353}, 2023.

\bibitem[Fu et~al.(2021)Fu, Lam, So, and Shi]{fu2021theoretical}
Fu, Z., Lam, W., So, A. M.-C., and Shi, B.
\newblock A theoretical analysis of the repetition problem in text generation.
\newblock In \emph{Proceedings of the AAAI Conference on Artificial Intelligence}, volume~35, pp.\  12848--12856, 2021.

\bibitem[Garg et~al.(2022)Garg, Tsipras, Liang, and Valiant]{garg2022_incontext}
Garg, S., Tsipras, D., Liang, P.~S., and Valiant, G.
\newblock What can transformers learn in-context? a case study of simple function classes.
\newblock In Koyejo, S., Mohamed, S., Agarwal, A., Belgrave, D., Cho, K., and Oh, A. (eds.), \emph{Advances in Neural Information Processing Systems}, volume~35, pp.\  30583--30598. Curran Associates, Inc., 2022.
\newblock URL \url{https://proceedings.neurips.cc/paper_files/paper/2022/file/c529dba08a146ea8d6cf715ae8930cbe-Paper-Conference.pdf}.

\bibitem[Hao et~al.(2018)Hao, Orlitsky, and Pichapati]{hao2018learning}
Hao, Y., Orlitsky, A., and Pichapati, V.
\newblock On learning markov chains.
\newblock \emph{Advances in Neural Information Processing Systems}, 31, 2018.

\bibitem[Holtzman et~al.(2019)Holtzman, Buys, Du, Forbes, and Choi]{holtzman2019curious}
Holtzman, A., Buys, J., Du, L., Forbes, M., and Choi, Y.
\newblock The curious case of neural text degeneration.
\newblock \emph{arXiv preprint arXiv:1904.09751}, 2019.

\bibitem[Jelassi et~al.(2022)Jelassi, Sander, and Li]{jelassi2022_transformer}
Jelassi, S., Sander, M.~E., and Li, Y.
\newblock Vision transformers provably learn spatial structure.
\newblock In Oh, A.~H., Agarwal, A., Belgrave, D., and Cho, K. (eds.), \emph{Advances in Neural Information Processing Systems}, 2022.
\newblock URL \url{https://openreview.net/forum?id=eMW9AkXaREI}.

\bibitem[Jin et~al.(2022)Jin, Yang, and Wang]{jin2022pessimism_coverage}
Jin, Y., Yang, Z., and Wang, Z.
\newblock Is pessimism provably efficient for offline rl?, 2022.

\bibitem[Kuznetsov \& Mohri(2014)Kuznetsov and Mohri]{kuznetsov2014generalization}
Kuznetsov, V. and Mohri, M.
\newblock Generalization bounds for time series prediction with non-stationary processes.
\newblock In \emph{International conference on algorithmic learning theory}. Springer, 2014.

\bibitem[Kuznetsov \& Mohri(2016)Kuznetsov and Mohri]{kuznetsov2016time}
Kuznetsov, V. and Mohri, M.
\newblock Time series prediction and online learning.
\newblock In \emph{Conference on Learning Theory}, pp.\  1190--1213. PMLR, 2016.

\bibitem[Li et~al.(2023{\natexlab{a}})Li, Wang, Liu, and Chen]{li2023_transformer}
Li, H., Wang, M., Liu, S., and Chen, P.-Y.
\newblock A theoretical understanding of shallow vision transformers: Learning, generalization, and sample complexity.
\newblock In \emph{The Eleventh International Conference on Learning Representations}, 2023{\natexlab{a}}.
\newblock URL \url{https://openreview.net/forum?id=jClGv3Qjhb}.

\bibitem[Li et~al.(2018)Li, Wang, and Zhang]{li2018estimation}
Li, X., Wang, M., and Zhang, A.
\newblock Estimation of markov chain via rank-constrained likelihood.
\newblock In \emph{International Conference on Machine Learning}, pp.\  3033--3042. PMLR, 2018.

\bibitem[Li et~al.(2023{\natexlab{b}})Li, Ildiz, Papailiopoulos, and Oymak]{li2023_incontext}
Li, Y., Ildiz, M.~E., Papailiopoulos, D., and Oymak, S.
\newblock Transformers as algorithms: Generalization and stability in in-context learning.
\newblock In Krause, A., Brunskill, E., Cho, K., Engelhardt, B., Sabato, S., and Scarlett, J. (eds.), \emph{Proceedings of the 40th International Conference on Machine Learning}, volume 202 of \emph{Proceedings of Machine Learning Research}, pp.\  19565--19594. PMLR, 23--29 Jul 2023{\natexlab{b}}.
\newblock URL \url{https://proceedings.mlr.press/v202/li23l.html}.

\bibitem[Li et~al.(2024)Li, Huang, Ildiz, Rawat, and Oymak]{anonymous}
Li, Y., Huang, Y., Ildiz, M.~E., Rawat, A.~S., and Oymak, S.
\newblock Mechanics of next token prediction with self-attention.
\newblock \emph{In International Conference on Artificial Intelligence and Statistics (AISTATS)}, 2024.

\bibitem[Lin et~al.(2021)Lin, Han, and Joty]{lin2021straight}
Lin, X., Han, S., and Joty, S.
\newblock Straight to the gradient: Learning to use novel tokens for neural text generation.
\newblock In \emph{International Conference on Machine Learning}, pp.\  6642--6653. PMLR, 2021.

\bibitem[Makkuva et~al.(2024)Makkuva, Bondaschi, Girish, Nagle, Jaggi, Kim, and Gastpar]{makkuva2024attention}
Makkuva, A.~V., Bondaschi, M., Girish, A., Nagle, A., Jaggi, M., Kim, H., and Gastpar, M.
\newblock Attention with markov: A framework for principled analysis of transformers via markov chains, 2024.

\bibitem[Mania et~al.(2020)Mania, Jordan, and Recht]{mania2020active}
Mania, H., Jordan, M.~I., and Recht, B.
\newblock Active learning for nonlinear system identification with guarantees.
\newblock \emph{arXiv preprint arXiv:2006.10277}, 2020.

\bibitem[Matni \& Tu(2019)Matni and Tu]{matni2019tutorial}
Matni, N. and Tu, S.
\newblock A tutorial on concentration bounds for system identification.
\newblock In \emph{2019 IEEE 58th Conference on Decision and Control (CDC)}, pp.\  3741--3749. IEEE, 2019.

\bibitem[Mohri \& Rostamizadeh(2008)Mohri and Rostamizadeh]{mohri2008rademacher}
Mohri, M. and Rostamizadeh, A.
\newblock Rademacher complexity bounds for non-iid processes.
\newblock \emph{Advances in Neural Information Processing Systems}, 21, 2008.

\bibitem[Oymak(2019)]{oymak2019stochastic}
Oymak, S.
\newblock Stochastic gradient descent learns state equations with nonlinear activations.
\newblock In \emph{conference on Learning Theory}, pp.\  2551--2579. PMLR, 2019.

\bibitem[Oymak \& Ozay(2019)Oymak and Ozay]{oymak2019non}
Oymak, S. and Ozay, N.
\newblock Non-asymptotic identification of lti systems from a single trajectory.
\newblock In \emph{2019 American control conference (ACC)}, pp.\  5655--5661. IEEE, 2019.

\bibitem[Oymak \& Ozay(2021)Oymak and Ozay]{oymak2021revisiting}
Oymak, S. and Ozay, N.
\newblock Revisiting ho--kalman-based system identification: Robustness and finite-sample analysis.
\newblock \emph{IEEE Transactions on Automatic Control}, 67\penalty0 (4):\penalty0 1914--1928, 2021.

\bibitem[Oymak et~al.(2023)Oymak, Rawat, Soltanolkotabi, and Thrampoulidis]{oymak23a_prompt}
Oymak, S., Rawat, A.~S., Soltanolkotabi, M., and Thrampoulidis, C.
\newblock On the role of attention in prompt-tuning.
\newblock In Krause, A., Brunskill, E., Cho, K., Engelhardt, B., Sabato, S., and Scarlett, J. (eds.), \emph{Proceedings of the 40th International Conference on Machine Learning}, volume 202 of \emph{Proceedings of Machine Learning Research}, pp.\  26724--26768. PMLR, 23--29 Jul 2023.
\newblock URL \url{https://proceedings.mlr.press/v202/oymak23a.html}.

\bibitem[Press \& Wolf(2017)Press and Wolf]{press2017using}
Press, O. and Wolf, L.
\newblock Using the output embedding to improve language models, 2017.

\bibitem[Radford et~al.(2018)Radford, Narasimhan, Salimans, Sutskever, et~al.]{radford2018_gpt1}
Radford, A., Narasimhan, K., Salimans, T., Sutskever, I., et~al.
\newblock Improving language understanding by generative pre-training.
\newblock \emph{OpenAI blog}, 2018.

\bibitem[Radford et~al.(2019)Radford, Wu, Child, Luan, Amodei, Sutskever, et~al.]{radford2019_gpt2}
Radford, A., Wu, J., Child, R., Luan, D., Amodei, D., Sutskever, I., et~al.
\newblock Language models are unsupervised multitask learners.
\newblock \emph{OpenAI blog}, 1\penalty0 (8):\penalty0 9, 2019.

\bibitem[Rashidinejad et~al.(2023)Rashidinejad, Zhu, Ma, Jiao, and Russell]{rashidinejad2023bridging_coverage}
Rashidinejad, P., Zhu, B., Ma, C., Jiao, J., and Russell, S.
\newblock Bridging offline reinforcement learning and imitation learning: A tale of pessimism, 2023.

\bibitem[Sarkar \& Rakhlin(2019)Sarkar and Rakhlin]{sarkar2019near}
Sarkar, T. and Rakhlin, A.
\newblock Near optimal finite time identification of arbitrary linear dynamical systems.
\newblock In \emph{International Conference on Machine Learning}, pp.\  5610--5618. PMLR, 2019.

\bibitem[Sattar \& Oymak(2022)Sattar and Oymak]{sattar2022non}
Sattar, Y. and Oymak, S.
\newblock Non-asymptotic and accurate learning of nonlinear dynamical systems.
\newblock \emph{The Journal of Machine Learning Research}, 23\penalty0 (1):\penalty0 6248--6296, 2022.

\bibitem[See et~al.(2017)See, Liu, and Manning]{see2017get}
See, A., Liu, P.~J., and Manning, C.~D.
\newblock Get to the point: Summarization with pointer-generator networks.
\newblock \emph{arXiv preprint arXiv:1704.04368}, 2017.

\bibitem[Shah et~al.(2020)Shah, Song, Xu, and Yang]{shah2020sample}
Shah, D., Song, D., Xu, Z., and Yang, Y.
\newblock Sample efficient reinforcement learning via low-rank matrix estimation.
\newblock \emph{Advances in Neural Information Processing Systems}, 33:\penalty0 12092--12103, 2020.

\bibitem[Simchowitz et~al.(2018)Simchowitz, Mania, Tu, Jordan, and Recht]{simchowitz2018learning}
Simchowitz, M., Mania, H., Tu, S., Jordan, M.~I., and Recht, B.
\newblock Learning without mixing: Towards a sharp analysis of linear system identification.
\newblock In \emph{Conference On Learning Theory}, pp.\  439--473. PMLR, 2018.

\bibitem[Srebro et~al.(2010)Srebro, Sridharan, and Tewari]{srebro_FastRate}
Srebro, N., Sridharan, K., and Tewari, A.
\newblock Smoothness, low noise and fast rates.
\newblock In Lafferty, J., Williams, C., Shawe-Taylor, J., Zemel, R., and Culotta, A. (eds.), \emph{Advances in Neural Information Processing Systems}, volume~23. Curran Associates, Inc., 2010.
\newblock URL \url{https://proceedings.neurips.cc/paper_files/paper/2010/file/76cf99d3614e23eabab16fb27e944bf9-Paper.pdf}.

\bibitem[Stojanovic et~al.(2023)Stojanovic, Jedra, and Proutiere]{stojanovic2023spectral}
Stojanovic, S., Jedra, Y., and Proutiere, A.
\newblock Spectral entry-wise matrix estimation for low-rank reinforcement learning.
\newblock \emph{arXiv preprint arXiv:2310.06793}, 2023.

\bibitem[Sun et~al.(2022)Sun, Oymak, and Fazel]{sun2022finite}
Sun, Y., Oymak, S., and Fazel, M.
\newblock Finite sample identification of low-order lti systems via nuclear norm regularization.
\newblock \emph{IEEE Open Journal of Control Systems}, 1:\penalty0 237--254, 2022.

\bibitem[Tarzanagh et~al.(2023{\natexlab{a}})Tarzanagh, Li, Thrampoulidis, and Oymak]{tarzanagh2023transformers}
Tarzanagh, D.~A., Li, Y., Thrampoulidis, C., and Oymak, S.
\newblock Transformers as support vector machines.
\newblock \emph{arXiv preprint arXiv:2308.16898}, 2023{\natexlab{a}}.

\bibitem[Tarzanagh et~al.(2023{\natexlab{b}})Tarzanagh, Li, Zhang, and Oymak]{tarzanagh2023max}
Tarzanagh, D.~A., Li, Y., Zhang, X., and Oymak, S.
\newblock Max-margin token selection in attention mechanism.
\newblock In \emph{Thirty-seventh Conference on Neural Information Processing Systems}, 2023{\natexlab{b}}.

\bibitem[Tian et~al.(2023)Tian, Wang, Chen, and Du]{tian2023_scan}
Tian, Y., Wang, Y., Chen, B., and Du, S.
\newblock Scan and snap: Understanding training dynamics and token composition in 1-layer transformer.
\newblock \emph{arXiv preprint arXiv:2305.16380}, 2023.

\bibitem[Touvron et~al.(2023)Touvron, Lavril, Izacard, Martinet, Lachaux, Lacroix, Rozi{\`e}re, Goyal, Hambro, Azhar, et~al.]{touvron2023_llama}
Touvron, H., Lavril, T., Izacard, G., Martinet, X., Lachaux, M.-A., Lacroix, T., Rozi{\`e}re, B., Goyal, N., Hambro, E., Azhar, F., et~al.
\newblock Llama: Open and efficient foundation language models.
\newblock \emph{arXiv preprint arXiv:2302.13971}, 2023.

\bibitem[Tsiamis et~al.(2022)Tsiamis, Ziemann, Matni, and Pappas]{tsiamis2022statistical}
Tsiamis, A., Ziemann, I., Matni, N., and Pappas, G.~J.
\newblock Statistical learning theory for control: A finite sample perspective.
\newblock \emph{arXiv preprint arXiv:2209.05423}, 2022.

\bibitem[Vaswani et~al.(2017)Vaswani, Shazeer, Parmar, Uszkoreit, Jones, Gomez, Kaiser, and Polosukhin]{vaswani2017}
Vaswani, A., Shazeer, N., Parmar, N., Uszkoreit, J., Jones, L., Gomez, A.~N., Kaiser, L.~u., and Polosukhin, I.
\newblock Attention is all you need.
\newblock In Guyon, I., Luxburg, U.~V., Bengio, S., Wallach, H., Fergus, R., Vishwanathan, S., and Garnett, R. (eds.), \emph{Advances in Neural Information Processing Systems}, volume~30. Curran Associates, Inc., 2017.
\newblock URL \url{https://proceedings.neurips.cc/paper_files/paper/2017/file/3f5ee243547dee91fbd053c1c4a845aa-Paper.pdf}.

\bibitem[Vershynin(2018)]{vershynin2018high}
Vershynin, R.
\newblock \emph{High-dimensional probability: An introduction with applications in data science}, volume~47.
\newblock Cambridge university press, 2018.

\bibitem[Von~Oswald et~al.(2023)Von~Oswald, Niklasson, Randazzo, Sacramento, Mordvintsev, Zhmoginov, and Vladymyrov]{oswald2023_incontext}
Von~Oswald, J., Niklasson, E., Randazzo, E., Sacramento, J., Mordvintsev, A., Zhmoginov, A., and Vladymyrov, M.
\newblock Transformers learn in-context by gradient descent.
\newblock In Krause, A., Brunskill, E., Cho, K., Engelhardt, B., Sabato, S., and Scarlett, J. (eds.), \emph{Proceedings of the 40th International Conference on Machine Learning}, volume 202 of \emph{Proceedings of Machine Learning Research}, pp.\  35151--35174. PMLR, 23--29 Jul 2023.
\newblock URL \url{https://proceedings.mlr.press/v202/von-oswald23a.html}.

\bibitem[Welleck et~al.(2019)Welleck, Kulikov, Roller, Dinan, Cho, and Weston]{welleck2019neural}
Welleck, S., Kulikov, I., Roller, S., Dinan, E., Cho, K., and Weston, J.
\newblock Neural text generation with unlikelihood training.
\newblock \emph{arXiv preprint arXiv:1908.04319}, 2019.

\bibitem[Welleck et~al.(2020)Welleck, Kulikov, Kim, Pang, and Cho]{welleck2020consistency}
Welleck, S., Kulikov, I., Kim, J., Pang, R.~Y., and Cho, K.
\newblock Consistency of a recurrent language model with respect to incomplete decoding.
\newblock \emph{arXiv preprint arXiv:2002.02492}, 2020.

\bibitem[Wolfer \& Kontorovich(2019)Wolfer and Kontorovich]{wolfer2019minimax}
Wolfer, G. and Kontorovich, A.
\newblock Minimax learning of ergodic markov chains.
\newblock In \emph{Algorithmic Learning Theory}, pp.\  904--930. PMLR, 2019.

\bibitem[Wolfer \& Kontorovich(2021)Wolfer and Kontorovich]{wolfer2021statistical}
Wolfer, G. and Kontorovich, A.
\newblock Statistical estimation of ergodic markov chain kernel over discrete state space.
\newblock \emph{Bernoulli}, 27\penalty0 (1):\penalty0 532--553, 2021.

\bibitem[Xie et~al.(2022)Xie, Raghunathan, Liang, and Ma]{xie2022_incontext}
Xie, S.~M., Raghunathan, A., Liang, P., and Ma, T.
\newblock An explanation of in-context learning as implicit bayesian inference.
\newblock In \emph{International Conference on Learning Representations}, 2022.
\newblock URL \url{https://openreview.net/forum?id=RdJVFCHjUMI}.

\bibitem[Xie \& Jiang(2020)Xie and Jiang]{xie2020q_coverage}
Xie, T. and Jiang, N.
\newblock Q* approximation schemes for batch reinforcement learning: A theoretical comparison, 2020.

\bibitem[Xu et~al.(2022)Xu, Liu, Yan, Cai, Li, and Li]{xu2022learning}
Xu, J., Liu, X., Yan, J., Cai, D., Li, H., and Li, J.
\newblock Learning to break the loop: Analyzing and mitigating repetitions for neural text generation.
\newblock \emph{Advances in Neural Information Processing Systems}, 35:\penalty0 3082--3095, 2022.

\bibitem[Yu(1994)]{yu1994rates}
Yu, B.
\newblock Rates of convergence for empirical processes of stationary mixing sequences.
\newblock \emph{The Annals of Probability}, pp.\  94--116, 1994.

\bibitem[Yun et~al.(2020)Yun, Bhojanapalli, Rawat, Reddi, and Kumar]{yun2020_universal}
Yun, C., Bhojanapalli, S., Rawat, A.~S., Reddi, S., and Kumar, S.
\newblock Are transformers universal approximators of sequence-to-sequence functions?
\newblock In \emph{International Conference on Learning Representations}, 2020.
\newblock URL \url{https://openreview.net/forum?id=ByxRM0Ntvr}.

\bibitem[Zhan et~al.(2022)Zhan, Huang, Huang, Jiang, and Lee]{zhan2022offline_coverage}
Zhan, W., Huang, B., Huang, A., Jiang, N., and Lee, J.~D.
\newblock Offline reinforcement learning with realizability and single-policy concentrability, 2022.

\bibitem[Zhang \& Wang(2019)Zhang and Wang]{zhang2019spectral}
Zhang, A. and Wang, M.
\newblock Spectral state compression of markov processes.
\newblock \emph{IEEE transactions on information theory}, 66\penalty0 (5):\penalty0 3202--3231, 2019.

\bibitem[Zhu et~al.(2022)Zhu, Li, Wang, and Zhang]{zhu2022learning}
Zhu, Z., Li, X., Wang, M., and Zhang, A.
\newblock Learning markov models via low-rank optimization.
\newblock \emph{Operations Research}, 70\penalty0 (4):\penalty0 2384--2398, 2022.

\bibitem[Ziemann \& Tu(2022{\natexlab{a}})Ziemann and Tu]{ziemann2022learning}
Ziemann, I. and Tu, S.
\newblock Learning with little mixing.
\newblock \emph{Advances in Neural Information Processing Systems}, 35:\penalty0 4626--4637, 2022{\natexlab{a}}.

\bibitem[Ziemann \& Tu(2022{\natexlab{b}})Ziemann and Tu]{ziemannlearning}
Ziemann, I. and Tu, S.
\newblock Learning with little mixing.
\newblock In \emph{Advances in Neural Information Processing Systems}, 2022{\natexlab{b}}.

\bibitem[Ziemann et~al.(2024)Ziemann, Tu, Pappas, and Matni]{ziemann2024sharp}
Ziemann, I., Tu, S., Pappas, G.~J., and Matni, N.
\newblock Sharp rates in dependent learning theory: Avoiding sample size deflation for the square loss.
\newblock \emph{arXiv preprint arXiv:2402.05928}, 2024.

\bibitem[Ziemann et~al.(2022)Ziemann, Sandberg, and Matni]{ziemann2022single}
Ziemann, I.~M., Sandberg, H., and Matni, N.
\newblock Single trajectory nonparametric learning of nonlinear dynamics.
\newblock In \emph{conference on Learning Theory}, pp.\  3333--3364. PMLR, 2022.

\end{thebibliography}
\bibliographystyle{icml2024}

\onecolumn
\appendix
\section{Proof of Theorems in Section \ref{sect setup}}\label{app sect setup}
In this section, we share the proofs of Lemmas \ref{lemma identity}, \ref{lemma Stoken}, and Theorem \ref{theorem equivalency ps}. 
\subsection{Proof of Lemma \ref{lemma identity}}\label{sec proof identity}
\begin{lemma}[Restated Lemma \ref{lemma identity}]\label{lemma app identity}
    Recall the probability vector $\bpi^X\in\R^K$ from Definition \ref{def pcmc}. We have that
\[ 
f_{\W}(X)=\X^\top\s_X=\Eb^\top \bpi^X.
\]
\end{lemma}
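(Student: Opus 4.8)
The plan is to prove the identity by a direct expansion of the softmax attention output, regrouping the sum over sequence positions according to token value. First I would write
$$f_{\W}(X) = \X^\top \s_X = \sum_{i=1}^L (\s_X)_i\, \x_i = \sum_{i=1}^L (\s_X)_i\, \eb_{x_i},$$
using $\x_i = \eb_{x_i}$, and record that the softmax weights take the form $(\s_X)_i = \exp(\eb_{x_i}^\top \W \eb_{x_L})/Z$ with $Z := \sum_{l=1}^L \exp(\eb_{x_l}^\top \W \eb_{x_L})$. The key observation is that this summand depends on the position $i$ only through the token value $x_i$. Hence I would collapse the sum over positions $i\in[L]$ into a sum over token values $j\in[K]$: each position with $x_i = j$ contributes $\eb_j\exp(\eb_j^\top\W\eb_{x_L})/Z$, and the number of such positions is exactly $|\{i\in[L]:x_i=j\}| = L\,\Fc(X)_j$ by \eqref{eq:freq-func}. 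This yields $f_{\W}(X) = \sum_{j=1}^K \eb_j\cdot L\,\Fc(X)_j\,\exp(\eb_j^\top\W\eb_{x_L})/Z$.

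Next I would bridge the two normalizations. Applying the same regrouping to $Z$ gives $Z = \sum_{k=1}^K L\,\Fc(X)_k\,\exp(\eb_k^\top\W\eb_{x_L})$. Dividing the numerator and denominator of each coefficient by the token-level normalizer $\sum_{l=1}^K\exp(\eb_l^\top\W\eb_{x_L})$ turns $\exp(\eb_j^\top\W\eb_{x_L})$ into the $(x_L,j)$ entry $\pi_{x_L,j}$ of $\Pbw$ from \eqref{pbw def} and cancels the factor $L$, so that the coefficient of $\eb_j$ becomes
$$\frac{\Fc(X)_j\,\pi_{x_L,j}}{\sum_{k=1}^K \Fc(X)_k\,\pi_{x_L,k}} = \frac{\mask_j\,\pi_{x_L,j}}{\maskb^\top\bpi_{x_L}} = \pi^X_j,$$
with $\maskb=\Fc(X)$ — exactly the CCMC transition probability of Definition \ref{def pcmc}. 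Summing over $j$ gives $f_{\W}(X) = \sum_{j=1}^K \eb_j\,\pi^X_j = \Eb^\top\bpi^X$, which is the claim.

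I do not expect any real obstacle here: the statement is a bookkeeping identity and needs no assumptions (not even Assumption \ref{assume iden}, since only the attention output $f_{\W}(X)$ is involved, not the classifier $\Cb$). The one point to state carefully is the mismatch between the two softmaxes — the one in \eqref{pbw def} normalizes over the $K$ vocabulary entries $\exp(\eb_l^\top\W\eb_{x_L})$, while the one in \eqref{eq:sattn-def} normalizes over the $L$ sequence positions $\exp(\eb_{x_l}^\top\W\eb_{x_L})$ — which is resolved precisely by the identity $\sum_{l=1}^L\exp(\eb_{x_l}^\top\W\eb_{x_L}) = \sum_{k=1}^K (L\,\Fc(X)_k)\exp(\eb_k^\top\W\eb_{x_L})$. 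Once this is in hand, matching the two expressions entry by entry completes the proof.
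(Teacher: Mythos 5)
Your proof is correct and follows essentially the same route as the paper: the paper writes $\X=\M\Eb$ for a position-to-token indicator matrix $\M$ and computes $(\M^\top\s_X)_k$, which is exactly your regrouping of the position sum into a token-value sum, followed by the same division by the token-level normalizer $\sum_{j\in[K]}\exp(\eb_j^\top\W\xl)$ to turn the exponentials into entries of $\bpi_{x_L}$. Your side remarks — that the factor $L$ (counts versus frequencies) cancels by scale invariance and that Assumption \ref{assume iden} is not needed for this identity — are also accurate.
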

\begin{proof}
Suppose $\X = \M\Eb$ where $\M \in \R^{T \times K}$ is a universal mapping matrix, which specifies the token index for each entry. Specifically, $M_{jk} = 
                \begin{cases}
                    1, & \x_j = \eb_k \\ 
                    0, & \x_j \neq \eb_k
                \end{cases}$. Note that $\M^{\top}\mathbf{1}_{L} = \m(X) = \m$. Then, we have:
\begin{equation}
\begin{split}
    \X^{\top}\s_X = \Eb^{\top}\M^{\top}\s_X
\end{split}
\end{equation}
Then, it is equivalent to prove $(\M^{\top}\s_X)_k = \bpi^X_k$ for any $k \in [K]$. Let $s_0 = \sum_{j \in [K]} \exp(\eb_j^{\top}\W\xl)$. To proceed, using the definition of $\s_X$, we get:
\begin{equation}
\begin{split}
    (\M^{\top}\s_X)_k &= \frac{m_k \cdot \exp (\eb_k^{\top}\W\xl)}{\sum_{j \in [K]} m_j \cdot \exp(\eb_j^{\top}\W\xl)} \\ 
    &= \frac{m_k \cdot \exp (\eb_k^{\top}\W\xl) / s_0}{\sum_{j \in [K]} m_j \cdot \exp(\eb_j^{\top}\W\xl) / s_0} \\ 
    &= \frac{m_k \cdot \pi_{\xxl, k}}{\sum_{j \in [K]} m_j \cdot \pi_{\xxl, j}} \\ 
    &= \bpi^X_k
\end{split}
\end{equation}
which completes the proof. 
\end{proof}
\subsection{Proof of Lemma \ref{lemma Stoken}}
\begin{lemma}[Restated Lemma \ref{lemma Stoken}]\label{lemma app Stoken}
    For all $\W \in \R^{d \times d}$ and $X$: $f_{\W}(X) = f_{\bPi_{\Sall}(\W)}(X)$.
\end{lemma}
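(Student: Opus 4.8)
The plan is to exploit the shift-invariance of the softmax: for any $\vct{v}\in\R^L$ and any scalar $c$ we have $\sft{\vct{v}+c\mathbf{1}_L}=\sft{\vct{v}}$. Given this, it suffices to show that the two logit vectors $\X\W\xl\in\R^L$ and $\X\,\bPi_{\Sall}(\W)\,\xl\in\R^L$ differ only by a constant multiple of $\mathbf{1}_L$; then $\s_X$ is unchanged when $\W$ is replaced by its projection onto $\Sall$, and left-multiplying the (now identical) softmax output by $\X^\top$ yields $f_{\W}(X)=f_{\bPi_{\Sall}(\W)}(X)$.

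First I would decompose $\W=\bPi_{\Sall}(\W)+\W^\perp$ with $\W^\perp\in\Sall^\perp$, orthogonality taken in the Frobenius inner product (the one implicit in $\bPi_{\Sall}$). Next I would make $\Sall^\perp$ explicit: since $\langle\W^\perp,(\eb_i-\eb_j)\eb_k^\top\rangle_F=(\eb_i-\eb_j)^\top\W^\perp\eb_k$, Definition~\ref{defn Stoken} says that $\W^\perp\in\Sall^\perp$ iff $\eb_i^\top\W^\perp\eb_k=\eb_j^\top\W^\perp\eb_k$ for all $i,j,k\in[K]$ — equivalently, for each fixed $k$ the scalar $\eb_i^\top\W^\perp\eb_k$ is independent of $i$; denote this common value $c_k$. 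Then I would substitute into the logits: for a fixed prompt $X=(x_t)_{t=1}^L$ the query token is $x_L=x_{\xxl}$, so the $i$-th entry of $\X\W^\perp\xl$ equals $\eb_{x_i}^\top\W^\perp\eb_{x_L}=c_{x_L}$, the same for every $i\in[L]$. Hence $\X\W^\perp\xl=c_{x_L}\mathbf{1}_L$, so $\X\W\xl=\X\,\bPi_{\Sall}(\W)\,\xl+c_{x_L}\mathbf{1}_L$, and the softmax shift-invariance closes the argument. (An equivalent route: the same computation gives $\sft{\Eb\W\eb_i}=\sft{\Eb\,\bPi_{\Sall}(\W)\,\eb_i}$ for every $i$, i.e. $\Pb^{\W}=\Pb^{\bPi_{\Sall}(\W)}$, after which Lemma~\ref{lemma identity} applies directly.)

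This is essentially routine; the only point needing care is the explicit description of $\Sall^\perp$ and the observation that the discarded term is a scalar that may depend on the query token $x_L$ — which is harmless precisely because $x_L$ is shared by all $L$ softmax logits for a fixed prompt. I would also note that no linear-independence assumption on the embeddings (Assumption~\ref{assume iden}) is needed for this lemma.
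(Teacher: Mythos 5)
Your proof is correct and follows essentially the same route as the paper's: decompose $\W$ into its $\Sall$ and $\Sall^\perp$ components, note the latter contributes only a constant vector to the logits, and invoke softmax shift-invariance. In fact your characterization of $\Sall^\perp$ is slightly more careful than the paper's — the constant $c_k=\eb_i^\top\W^\perp\eb_k$ may genuinely depend on the query index $k$ (the paper asserts a single $c$ for all $i,j$, which Definition~\ref{defn Stoken} does not force), but as you observe this is harmless because $x_L$ is fixed within a given prompt.
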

\begin{proof}
    Let $\Sall^{\perp}$ be the orthogonal complement of the subspace $\Sall$ in $\R^{d \times d}$. Then, for any $\W \in \R^{d \times d}$, we have $\W = \Pi_{\Sall}(\W) + \Pi_{\Sall^{\perp}}(\W)$. 
    
    By definition of $\Sall$, there exists $c \in \R$ such that $\eb_i^\top \Pi_{\Sall^{\perp}}(\W) \eb_j = c$ for every $i,j \in [K]$. 
    
    As a result, using the definition of $f_{\W}(X)$ in \eqref{eq:sattn-def}, we obtain that
    \begin{align*}
        f_{\W}(X) = \X^T \sft{\X \W \xl} &= \X^T \sft{\X (\Pi_{\Sall}(\W) + \Pi_{\Sall^{\perp}}(\W)) \xl} \\
        &= \X^T \sft{\X \Pi_{\Sall}(\W) \xl + c\boldsymbol{1}_L } \\
        &= \X^T \sft{\X \Pi_{\Sall}(\W) \xl} \\
        &= f_{\Pi_{\Sall}(\W)}(X)
    \end{align*}
    which completes the proof.
\end{proof}
\subsection{Proof of Theorem \ref{theorem equivalency ps}}
\begin{theorem}[Restated Theorem \ref{theorem equivalency ps}]\label{theorem app equivalency ps}
    Suppose Assumption \ref{assume iden} holds. Let $\Pc$ be the set of transition matrices with non-zero entries. For each $\Pb\in \Pc$, there is a unique $\W\in \Sall$ with $\Pbw=\Pb$. Thus, for any prompt $X\in[K]^L$ and next token $y=x_{L+1}\in[K]$
    \begin{align*}
        \P_{\Pb}(y | X ) = \cb_{y}^\top \X^\top \sft{\X \W \xl}
    \end{align*}
    where $\cb_y$ is the $y^{\text{th}}$ row of the linear prediction head $\Cb$.
\end{theorem}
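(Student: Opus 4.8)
The plan is to reduce the theorem to one purely linear-algebraic claim: \emph{the map $\W\mapsto\Pbw$ restricts to a bijection $\Sall\to\Pc$.} Granting this, fix $\Pb\in\Pc$, let $\W\in\Sall$ be its unique preimage (so $\Pbw=\Pb$), and apply Lemma~\ref{lemma identity}: $\X^\top\sft{\X\W\xl}=\Eb^\top\bpi^X$, where $\bpi^X$ is the \PCMC transition built from $\Pbw=\Pb$ through Definition~\ref{def pcmc}. Left-multiplying by $\cb_y^\top$ and using $\Cb\Eb^\top=\Iden_K$ from Assumption~\ref{assume iden} gives $\cb_y^\top\X^\top\sft{\X\W\xl}=(\Cb\Eb^\top\bpi^X)_y=(\bpi^X)_y=\P_\Pb(y\mid X)$, which is exactly the asserted identity. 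Hence all the content is in the bijection.

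To prove the bijection, I would factor $\W\mapsto\Pbw$ as $\W\mapsto\phi(\W):=\Eb\W\Eb^\top$ followed by the column-wise softmax $\sigma$, since the $i$-th column of $\Pbw$ equals $\sft{\Eb\W\eb_i}$ and $\Eb\W\eb_i$ is the $i$-th column of $\phi(\W)$; thus $\Pbw=\sigma(\phi(\W))$. Two standard facts frame the argument. First, since the $\eb_k$ are linearly independent, $\Eb\in\R^{K\times d}$ has full row rank, so $G:=\Eb\Eb^\top$ is positive definite and $\phi:\R^{d\times d}\to\R^{K\times K}$ is surjective. Second, $\sigma$ maps $\R^{K\times K}$ onto the set of matrices whose columns are strictly positive and sum to $1$ (i.e.\ onto $\Pc$), with $\sigma(M)=\sigma(M')$ if and only if $M-M'\in\Zc:=\{\onebb\vct c^\top:\vct c\in\R^K\}$, the subspace of directions along which softmax is invariant.

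Next I would restrict to $\Sall$. Writing a general element of $\Sall$ as $\W=\sum_{k=1}^K\vct v_k\eb_k^\top$ with $\vct v_k\in\mathrm{span}\{\eb_i-\eb_j:i,j\in[K]\}$, one sees that both the column and row spaces of any $\W\in\Sall$ lie inside $\mathrm{span}\{\eb_1,\dots,\eb_K\}=\mathrm{rowspace}(\Eb)$; since $\mathrm{rowspace}(\Eb)\cap\ker(\Eb)=\{0\}$, this yields (a) $\phi$ is injective on $\Sall$, so $\dim\phi(\Sall)=\dim\Sall=K(K-1)$. Substituting $\Eb\vct v_k=G\vct w_k$ with $\onebb^\top\vct w_k=0$ also identifies $\phi(\Sall)=\{N\in\R^{K\times K}:(G^{-1}\onebb)^\top N=0\}$, a codimension-$K$ subspace. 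Then (b) $\phi(\Sall)\cap\Zc=\{0\}$: if $\onebb\vct c^\top\in\phi(\Sall)$ then $(\onebb^\top G^{-1}\onebb)\,\vct c^\top=0$, and $\onebb^\top G^{-1}\onebb>0$ because $G^{-1}$ is positive definite, forcing $\vct c=0$. Since $\dim\phi(\Sall)+\dim\Zc=K(K-1)+K=K^2$, we get $\R^{K\times K}=\phi(\Sall)\oplus\Zc$. Now injectivity of $\W\mapsto\Pbw$ on $\Sall$ follows: $\Pb^{\W_1}=\Pb^{\W_2}$ forces $\phi(\W_1-\W_2)\in\Zc\cap\phi(\Sall)=\{0\}$, hence $\W_1=\W_2$ by (a). And surjectivity onto $\Pc$: given $\Pb\in\Pc$, take the entrywise logarithm (finite since all entries are positive) and write $\log\Pb=N+\onebb\vct c^\top$ with $N\in\phi(\Sall)$, let $\W\in\Sall$ be the unique element with $\phi(\W)=N$, and conclude $\Pbw=\sigma(\phi(\W))=\sigma(\log\Pb-\onebb\vct c^\top)=\sigma(\log\Pb)=\Pb$.

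I expect the main obstacle to be step (b): showing $\Sall$ is transversal to the softmax-invariance directions $\Zc$. This is precisely where the ``difference'' structure $\eb_i-\eb_j$ defining $\Sall$ is essential, and it reduces to the strict positivity of $\onebb^\top G^{-1}\onebb$, i.e.\ to $\Eb\Eb^\top\succ0$ — the linear-independence part of Assumption~\ref{assume iden}. The remaining ingredients (surjectivity of $\phi$, the softmax fiber description, injectivity of $\phi|_{\Sall}$, and the dimension bookkeeping) are routine.
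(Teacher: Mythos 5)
Your proposal is correct, and its overall skeleton — reduce the theorem to the claim that $\W\mapsto\Pbw$ is a bijection from $\Sall$ onto $\Pc$, then conclude via Lemma~\ref{lemma identity} and $\Cb\Eb^\top=\Iden_K$ — matches the paper's plan. However, your proof of the bijection itself takes a genuinely different route. The paper proves existence by solving, for each query token $k$, the linear system $\Eb\w=\log\bpi_k$ (solvable because $\Eb$ is right-invertible under Assumption~\ref{assume iden}), and proves uniqueness by contradiction: given distinct $\W_1,\W_2\in\Sall$, it picks $i,j,k$ with $\langle(\eb_i-\eb_j)\eb_k^\top,\W_1-\W_2\rangle\neq0$ and exhibits the concrete two-token prompt $X=[i,j]$ with query $k$ on which the two models must disagree. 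You instead factor the map as a column-wise softmax composed with $\phi(\W)=\Eb\W\Eb^\top$, identify the softmax fibers with $\Zc=\{\onebb\vct c^\top\}$, and establish the direct-sum decomposition $\R^{K\times K}=\phi(\Sall)\oplus\Zc$ via the explicit characterization $\phi(\Sall)=\{N:(G^{-1}\onebb)^\top N=0\}$ with $G=\Eb\Eb^\top\succ0$. This delivers injectivity and surjectivity in one stroke and makes transparent exactly why the ``difference'' structure of $\Sall$ is the right complement to the softmax-invariant directions — something the paper's adversarial-prompt argument leaves implicit. The trade-off is more linear-algebraic bookkeeping versus the paper's shorter, more hands-on construction; both rest on the same two ingredients (linear independence of the $\eb_k$ and $\Cb\Eb^\top=\Iden_K$), and the steps you flag as routine (surjectivity of $\phi$, the fiber description, the dimension count) are indeed routine.
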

\begin{proof}
1) We prove that  there exists $\W \in \Sall$ satisfying the lemma. Using Lemma \ref{lemma identity}, for any $\W \in \Sall$, let $\bpi^{\W}_i = \sft{\Eb\W\eb_i}$, we have:
\begin{equation}
    \cb_{y}^\top \X^\top \sft{\X \W \xl} = \cb_{y}^\top \Eb^{\top}\bpi^X = \bpi^X_y = \frac{m_y \cdot \pi_{\xxl, y}^{\W}}{\m^{\top}\bpi_{\xxl}^{\W}}
\end{equation}
Comparing the equation above with
\[\P_{\Pb}(y | X ) = \frac{m_y \cdot \pi_{\xxl, y}}{\m^{\top}\bpi_{\xxl}},
\]
it is sufficient to prove that for any given $\bpi$, there exists a solution $\W$ for the following problem:
    \begin{equation}
         \bpi =  \exp(\Eb\W\xb)
    \end{equation}
    It is equivalent to solving the following linear system:
    \begin{equation}
        \Eb\w = \dot \bpi
    \end{equation}
    where $\w = \W \xb, \dot \bpi = \log \bpi$.
    Since the rows of $\Eb$ are linearly independent from Assumption~\ref{assume iden}, $\Eb$ is right invertible, implying that there exists at least one solution to the problem above. 

    2) We prove the uniqueness as follows: Let's assume the inverse. There exists $\W_1 , \W_2 \in \Sall$ such that $\W_1 \neq \W_2$ and we have the following for any $(X,y)$:
    \begin{align}\label{uniqueness}
         \cb_{y}^\top \X^\top \sft{\X \W_1 \xl} = \cb_{y}^\top \X^\top \sft{\X \W_2 \xl}
    \end{align}
    As $\W_1, \W_2 \in \Sall$ and $\W_1 \neq \W_2$, there exists $i,j,k \in [K]$ such that $\Pi_{(\eb_i - \eb_j) \eb_k^\top}(\W_1 - \W_2) > 0$. 
    Then, let's consider $X = [i, j]$ and $y = k$ and $\xl = k$. (If the attention model is self-attention, then we can include $k$ into $X$ as well). Let $\s_{X, \W_1} = \sft{\X \W_1 \xl}$ and $\s_{X, \W_2} = \sft{\X \W_2 \xl}$. Let $u_{mn} = \eb_m^\top \W_n \eb_k$ for $n \in \{1,2\} m \in \{i,j\}$. As we have $\Pi_{(\eb_i - \eb_j) \eb_k^\top}(\W_1 - \W_2) > 0$, then we obtain that $u_{j1} - u_{i1} \neq u_{j2} - u_{i2}$, which implies that $\s_{X, \W_1} = \s_{X, \W_2}$. As $\Cb \Eb^\top = \boldsymbol{I}$ by Assumption \ref{assume iden}, \eqref{uniqueness} cannot hold, which is a contradiction. This completes the proof. 
\end{proof}
\section{Proof of Theorems in Section \ref{sect consistency}}\label{app sect consistency}
\begin{figure*}[tb] 
    \centering    \includegraphics[width=0.9\textwidth]{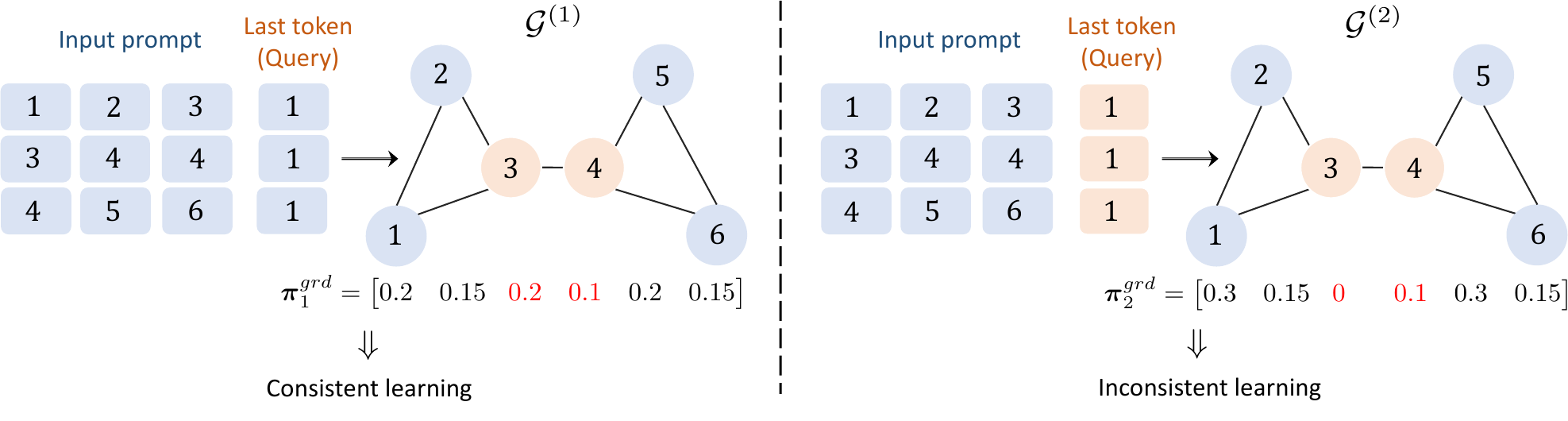}
    \vspace{-10pt}
    \caption{\small{{Illustration of the Co-occurrence Graphs for the Cross-attention Models with Possible Zero Transition Probability in $\Pgrd$. In this figure, we draw the co-occurrence graphs of the input prompt distribution whose support consists of three elements under the cross-attention model. In the left figure, the first column of the transition matrix $\Pgrd$ does not include any zero transition whereas, in the right figure, the first column of the transition matrix $\Pgrd$ includes zero transition. For the left figure, the co-occurrence graph $\Gc^{(1)}$ is connected with respect to $\Pgrd$. However, in the right figure, the co-occurrence graph $\Gc^{(1)}$ is not connected with respect to $\Pgrd$ as there is no path between $2^{\text{nd}}$ vertex and the $5^{\text{th}}$ vertex using non-zero vertex. }
    }}
    \label{fig:consistency_app}
\end{figure*}
In this section, we will analyze the case where the ground-truth transition matrix $\Pgrd$ has zero transitions. Note that the transition matrix that has zero probability transitions is not important for the attention models as the equivalency between the attention models and the \PCMC model is constructed for the non-zero transitions. We provide a detailed analysis for the sake of the Markov chain community. First, we share a supplementary lemma for this section: 
\begin{lemma}\label{lemma supporting observability}
    Define the function $f : \R_{>0}^{n} \xrightarrow[]{} \R$ for fixed $(y_i)_{i=1}^n$ positive variables and define $\x^*$ as follows:
    \begin{align}
        \x^* = \arg &\min_{\x \in \R_{>0}^n} f(\x) := \arg \min_{\x \in \R_{>0}^n} -\sum_{i=1}^n y_i \log(x_i) \\
        &\text{subject to} \quad \sum_{i=1}^n x_i = 1, \qquad 
    \end{align}
    Then, we have $c \in \R$ such that $c = \frac{x^*_i}{y_i}$ for all $i \in [n]$.
\end{lemma}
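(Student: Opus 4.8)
The plan is to treat this as a constrained strictly convex optimization problem and exhibit the minimizer explicitly. First I would record that $f(\x)=-\sum_{i=1}^n y_i\log x_i$ is strictly convex on $\R_{>0}^n$ — each summand $-y_i\log x_i$ is strictly convex because $y_i>0$ — and that the feasible set $\{\x\in\R_{>0}^n:\sum_i x_i=1\}$ is convex, so the minimizer, once it exists, is unique. To handle the fact that the domain $\R_{>0}^n$ is open, I would note that $-y_i\log x_i\to+\infty$ as $x_i\to 0^+$, so $f$ blows up near the boundary of the closed simplex; together with continuity on the compact closed simplex this forces the infimum to be attained at an interior point, which justifies the definition of $\x^*$.

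The quickest route to the identity is then Lagrange multipliers: at the interior minimizer there is $\lambda\in\R$ with $\nabla f(\x^*)=\lambda\,\nabla\!\big(\sum_i x_i-1\big)$, i.e. $-y_i/x_i^*=\lambda$ for every $i$. Hence $x_i^*=-y_i/\lambda$, so $x_i^*/y_i=-1/\lambda$ is the same constant $c$ for all $i\in[n]$, which is exactly the claim. (Substituting into the constraint also gives $\lambda=-\sum_j y_j$ and thus $c=1/\sum_j y_j$, though the lemma only requires constancy of the ratio.)

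Since the statement concerns an open domain, I would prefer to record a self-contained argument via Gibbs' inequality that avoids first-order conditions entirely: writing $Y=\sum_j y_j$ and $p_i=y_i/Y$, one has $f(\x)=Y\big(-\sum_i p_i\log x_i\big)=Y\big(H(p)+\mathrm{KL}(p\,\|\,\x)\big)\ge Y\,H(p)$, where $H(p)=-\sum_i p_i\log p_i$, with equality if and only if $\x=p$. As $p$ lies in the feasible set, it is therefore the unique minimizer $\x^*$, and $x_i^*/y_i=p_i/y_i=1/Y$ is constant in $i$.

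I do not anticipate a genuine obstacle here; the only point requiring a little care is the open-domain/boundary issue, which the coercivity remark disposes of cleanly. I would present the Gibbs-inequality computation as the main proof and keep the Lagrange-multiplier calculation as the motivating derivation.
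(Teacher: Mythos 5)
Your proposal is correct, and it actually contains the paper's argument as one of its two routes: the paper's own proof is exactly the Lagrange-multiplier/KKT computation you give ($-y_i/x_i^*-\lambda=0$ for all $i$, hence $x_i^*/y_i$ is constant), stated without any discussion of existence or of the open domain. Where you genuinely go beyond the paper is in (i) noting that $f$ blows up at the boundary of the simplex, so the minimizer exists and is interior (the paper silently assumes a minimizer in $\R_{>0}^n$ and that first-order conditions apply), and (ii) the Gibbs-inequality argument, which identifies the minimizer in closed form as $x_i^*=y_i/\sum_j y_j$ and proves optimality and uniqueness directly via $f(\x)=Y\bigl(H(p)+\mathrm{KL}(p\,\|\,\x)\bigr)$ without invoking KKT at all. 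The KL route is the cleaner and more self-contained of the two, and it additionally pins down the constant $c=1/\sum_j y_j$, which is more than the lemma asks for but costs nothing; the paper's version is shorter but relies on unstated regularity. Either write-up suffices for how the lemma is used downstream (in the consistency proof only the constancy of the ratio matters).
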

\begin{proof}
    Let $\mathcal{L}(\x, \lambda)$ be the Lagrangian function:
    \begin{align*}
        \mathcal{L}(\x, \lambda) = -\sum_{i=1}^n y_i \log(x_i) + \lambda\left( 1- \sum_{i=1}^n x_i\right) 
    \end{align*}
    As a result of KKT condition, we have
    \begin{align}
        -\frac{y_i}{x^*_i} - \lambda = 0 \quad \forall i \in [n] 
    \end{align}
    which completes the proof.
\end{proof}
\begin{definition}\label{defn app connected}
     \noindent $(i) $ Let $\Omega_k$ be the set of input prompts that are inside the support of $\Dcx$ and whose queries are the $k^{\text{th}}$ token. Then, we define an undirected co-occurrence graph $(\Gc^{(k)})_{k=1}^K$ with $K$ vertices such that the vertices $i, j \in [K]$ are connected in $\Gc^{(k)}$ if there exists an input prompt in $\Omega_k$ that includes both the $i^{\text{th}}$ and $j^{\text{th}}$ tokens.

     \noindent $(ii) $ Let $\Pgrd$ be the ground-truth transition matrix. For an arbitrary query $k \in [K]$, the co-occurrence graph $\Gc^{(k)}$ is said to be `connected with respect to $\Pgrd$' if it satisfies the following: For every pair of vertices $(i,j) \in [K] \times [K], \Pgrdnb_{ki} \neq 0, \Pgrdnb_{kj} \neq 0$, there exists a path of $(v_m)_{m=1}^M$ such that $v_1 = i$, $v_M = j$, and $\Pgrdnb_{kv_m} \neq 0$ for every $m \in [M]$.
\end{definition}

To explain the notion of the `connected graph with respect to $\Pgrd$', we demonstrate a dataset for different $\Pgrd$ in Figure \ref{fig:consistency_app}. Note that the following theorem reduces to Theorem \ref{theorem consistency} if $\Pgrd$ has non-zero transition probabilities. 
\begin{theorem}[Stronger version of Theorem \ref{theorem consistency}] \label{theorem consistency app}
    Let $\Pgrd$ be the transition matrix of a Markov chain that determines the next token under the \PCMC model. Let $(\Gc^{(k)})_{k=1}^K$ be the co-occurrence graphs based on the input prompt distribution $\Dcx$. Then, the estimation of $\Pgrd$ in \eqref{defn Pstr} with the prompt distribution $\Dcx$ is consistent if and only if $\Gc^{(k)}$ is connected with respect to $\Pgrd$ (see Definition \ref{defn app connected} (ii)) for every $k \in [K]$. 
\end{theorem}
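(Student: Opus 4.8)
The plan is to reduce the statement to $K$ decoupled one-column problems, characterize \emph{all} minimizers of each, and then recognize consistency---that is, uniqueness of the minimizer---as a connectivity property of $\Gc^{(k)}$. First I would observe, from Definition~\ref{def pcmc}, that $\P_{\Pb}(y\,|\,X)$ depends on $\Pb$ only through the column $\bpi_{x_L}$ picked out by the last token, and on that column only through its entries indexed by the key tokens of $X$. Writing $\Omega$ for the finite support of $\Dcx$, $\Omega_k\subseteq\Omega$ for the prompts whose last token is $k$, and $T_X\subseteq[K]$ for the key tokens of $X$ (all tokens for self-attention, all but the last for cross-attention), the loss in~\eqref{defn Pstr} splits as $\Lc(\Pb)=\sum_{k=1}^{K}L_k(\bpi_k)$ with $L_k(\bpi_k)=\sum_{X\in\Omega_k}\Pro_{\Dcx}(X)\,\ell_X(\bpi_k)$, where $\ell_X(\bpi_k):=\E[-\log\P_{\Pb}(y\,|\,X)]$ with the expectation over $y$ drawn from $\P_{\Pgrd}(\cdot\,|\,X)$. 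Since these $K$ summands involve disjoint blocks of variables, $\Pst=\Pgrd$ holds iff for every $k$ the vector $\bpigrd_k$ is the \emph{unique} minimizer of $L_k$ over the simplex on $[K]$; I fix $k$, noting that from here self- and cross-attention differ only through the sets $T_X$.

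Next I would pin down the minimizers of $L_k$. Rewriting $\ell_X(\bpi_k)=H\big(\P_{\Pgrd}(\cdot\,|\,X)\big)+D_{\mathrm{KL}}\big(\P_{\Pgrd}(\cdot\,|\,X)\,\big\|\,\P_{\Pb}(\cdot\,|\,X)\big)$ gives $\ell_X\ge H(\P_{\Pgrd}(\cdot\,|\,X))$, with equality iff $\P_{\Pb}(\cdot\,|\,X)=\P_{\Pgrd}(\cdot\,|\,X)$; as $\Pb=\Pgrd$ realizes equality for \emph{every} $X$, the vector $\bpigrd_k$ minimizes $L_k$, and---each $X\in\Omega_k$ carrying positive mass---\emph{any} minimizer $\bpi_k$ must satisfy $\P_{\Pb}(\cdot\,|\,X)=\P_{\Pgrd}(\cdot\,|\,X)$ for all $X\in\Omega_k$. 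For a fixed such $X$, the weights $m_j$ are strictly positive exactly for $j\in T_X$, the summand $-\P_{\Pgrd}(j\,|\,X)\log\P_{\Pb}(j\,|\,X)$ vanishes whenever $\P_{\Pgrd}(j\,|\,X)=0$, and the map $\bpi_k|_{T_X}\mapsto\P_{\Pb}(\cdot\,|\,X)$ is scale-invariant and surjective onto the simplex on $T_X$; hence matching the two distributions first forces $\pi_{kj}=0$ for every $j\in T_X$ with $\Pgrdnb_{kj}=0$, and then, by Lemma~\ref{lemma supporting observability} applied on $\{j\in T_X:\Pgrdnb_{kj}\neq0\}$, forces one scalar $c_X>0$ with $\pi_{kj}=c_X\,\Pgrdnb_{kj}$ for every $j\in T_X\cap S_k$, where $S_k:=\{j:\Pgrdnb_{kj}\neq0\}$. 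In words: a minimizer of $L_k$ is a prompt-wise rescaling of $\bpigrd_k$ on $S_k$, and vanishes off $S_k$ wherever a prompt sees it.

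The connectivity dichotomy then finishes it. \emph{If} $\Gc^{(k)}$ is connected with respect to $\Pgrd$, i.e.\ the subgraph of $\Gc^{(k)}$ induced on $S_k$ is connected, then for any two prompts of $\Omega_k$ sharing a token $j\in S_k$ we get $c_X\Pgrdnb_{kj}=\pi_{kj}=c_{X'}\Pgrdnb_{kj}$, so $c_X=c_{X'}$; propagating this equality along paths inside $\Gc^{(k)}|_{S_k}$ produces one common $c$ with $\pi_{kj}=c\,\Pgrdnb_{kj}$ on all of $S_k$ (connectivity with $|S_k|\ge2$ forces every $j\in S_k$ into some prompt; the case $|S_k|=1$ is immediate), and since $\pi_{kj}=0$ off $S_k$ this gives $1=\sum_j\pi_{kj}=c\sum_{j\in S_k}\Pgrdnb_{kj}=c$, hence $\bpi_k=\bpigrd_k$. \emph{If not}, I would take a connected component $A$ of $\Gc^{(k)}|_{S_k}$, set $B=S_k\setminus A\neq\emptyset$, and note that no prompt of $\Omega_k$ contains a token of $A$ together with one of $B$, so $T_X\cap S_k$ lies entirely in $A$ or entirely in $B$ for each $X$. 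Then picking $\alpha\neq\beta>0$ with $\alpha\sum_{j\in A}\Pgrdnb_{kj}+\beta\sum_{j\in B}\Pgrdnb_{kj}=1$ (possible since both partial sums are positive and together equal $1$) and defining $\pi_{kj}=\alpha\Pgrdnb_{kj}$ on $A$, $\pi_{kj}=\beta\Pgrdnb_{kj}$ on $B$, and $\pi_{kj}=0$ elsewhere yields a probability vector $\neq\bpigrd_k$ with $\P_{\Pb}(\cdot\,|\,X)=\P_{\Pgrd}(\cdot\,|\,X)$ for all $X\in\Omega_k$ (the common scalar cancels in the \PCMC ratio), so $L_k$ has more than one minimizer and estimation is inconsistent. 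Assembling over all $k$ proves the theorem.

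The step I expect to be the main obstacle is the minimizer characterization of the second paragraph: extracting from ``the \PCMC outputs agree on every prompt'' the precise statement ``$\bpi_k$ is a prompt-wise rescaling of $\bpigrd_k$ that vanishes off its support'' hinges on the scale-invariance and surjectivity of the \PCMC map together with careful bookkeeping of the zero entries of $\Pgrd$---which is exactly where Lemma~\ref{lemma supporting observability} and the qualifier ``with respect to $\Pgrd$'' do their work. Two side conditions should be flagged. First, a minimizer of the sum $L_k$ must minimize each summand $\ell_X$, which is why the realizability of $\Pgrd$ (it attains every per-prompt minimum simultaneously) is invoked. Second, to conclude $\bpi_k=\bpigrd_k$ exactly one also needs every token of $[K]$ to occur as a key token in some prompt of $\Omega_k$; the connectivity hypothesis already guarantees this for $j\in S_k$, and it holds for all tokens automatically in the zero-free setting of Theorem~\ref{theorem consistency}, where $S_k=[K]$.
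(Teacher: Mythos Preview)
Your proposal is correct and follows essentially the same route as the paper: decouple by query token $k$, use the cross-entropy/KL identity together with Lemma~\ref{lemma supporting observability} to reduce minimality to prompt-wise proportionality on $S_k$, then propagate the proportionality constants along paths in $\Gc^{(k)}|_{S_k}$ for the ``if'' direction and build a two-block rescaling for the ``only if'' direction. Your explicit flag about tokens $j\notin S_k$ that never appear as keys in $\Omega_k$ is a genuine subtlety the paper's proof glosses over (its contradiction argument tacitly assumes $\bar i,\bar j\in S_k$), so your treatment is in fact slightly more careful.
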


\begin{proof}
     For an arbitrary $k \in [K]$, we are going to prove that the $k^{\text{th}}$ column of $\Pgrd$ and $\Pst$ are equivalent if and only if $\Gc^{(k)}$ is connected with respect to $\Pgrd$. The proof of this statement is sufficient to prove Theorem \ref{theorem consistency app}.

      Let $\bpist$ and $\bpigrd$ be the $k^{\text{th}}$ column of $\Pst$ and $\Pgrd$, respectively. Let $\Omega_k$ be the set of input prompts such that the query is the $k^{\text{th}}$ token. Let $\Sp$ be the set of token IDs $i$ such that $\Pgrdnb_{ki} \neq 0$. Let $X$ be an arbitrary token inside the set $\Omega_k$. Let $[X]$ represent the set of token IDs inside the input prompt $X$. We change the notation of $\P_{\Pb}(y | X)$ to $\P_{\bpi}(y|X)$ as we will deal with the input prompts whose queries are the same. We first minimize the population risk for this specific input sequence $X$. 
    \begin{align}
        \bpist^X &= \arg\min_{\bpi} \E_y \left[- \log \P_{\bpi}(y|X)\right] \nonumber \\
        &=\arg\min_{\bpi} - \sum_{i \in \Sp} \P_{\bpigrd}(y = i | X) \log \left( \P_{\bpi}(y = i| X)\right) \label{lemma apply}
    \end{align}
    Applying Lemma \ref{lemma supporting observability} on \eqref{lemma apply} where $K= n$, $y_i = \P_{\bpigrd}(y = i | X)$ and $x_i = \P_{\bpi}(y=i| X)$, we know that there exists $c_1 \in \R$ such that
    \begin{align}
        \frac{\P_{\bpigrd }(y = i | X)}{\P_{\bpist^X}(y=i| X)} = c \qquad \forall i \in \Sp
    \end{align}
    From \eqref{eqn masked markov chain}, we know that the output probabilities of the \PCMC model are a linear transformation of the transition probabilities based on the occurrences of tokens inside an input prompt. As a result, the linear transformation is one-to-one for the tokens IDs $i$ such that $i \in \Sp \cap [X]$. Then, there exists $c_2 > 0$ such that 
    \begin{align}\label{Socc equal}
        \frac{(\pist^\X)_i }{(\pigrd)_i } = c_2 \qquad \forall i \in \Sp \cap [X]
    \end{align}

    Note that $\bpist = \pigrd$ satisfies \eqref{Socc equal} for all $i \in [K]$. This means that $\bpigrd$ is a solution to the population risk minimization problem in \eqref{defn Pstr}. What is remaining is that there is no other $\bpi \neq \bpigrd$ such that $\bpi$ is a solution to the population risk minimization problem. As $\bpi = \bpigrd$ satisfies \eqref{Socc equal} for all $i \in \Sp \cap [X]$, if there exists any other $\bpi$ that minimizes \eqref{defn Pstr}, then this $\bpi$ should satisfy \eqref{Socc equal} for all possible $\Omega_k$. 
    
    \textbf{Proof of $\Leftarrow$}: Now, we know that $\Gc^{(k)}$ is connected with respect to $\Pgrd$ and we want to prove the consistency of estimation. Let's assume the inverse: There exists a probability vector $\bpi \in \R^K$ such that for an arbitrary index $\bar{i} \in [K]$ we have $\pi_{\bar{i}} > \pigrd_{\bar{i}}$ and $\pi$ minimizes \eqref{defn Pstr}. Then, there must exist $\bar{j} \in [K]$ such that $\pi_{\bar{j}} < \pigrd_{\bar{j}}$ as both $\bpi$ and $\bpigrd$ are probability vectors. This implies that 
    \begin{align}\label{eqn app consistency conradiction}
        \frac{\pi_{\bar{i}}}{\pigrd_{\bar{i}} } > 1 > \frac{\pi_{\bar{j}}}{\pigrd_{\bar{j}} }
    \end{align}
    Since $\Gc^{(k)}$ is connected with respect to $\Pgrd$, there exists a path of $(v_m)_{m=1}^M  \in \Gc^{(k)}$ such that $v_1 = \bar{i}$ and $v_M = \bar{j}$ and $v_m \in \Sp$ for every $m \in [M]$. Since the path $(v_m)_{m=1}^M$ is inside the co-occurrence graph $\Gc^{(k)}$, there exists an input prompt sequence $(X_m)_{m=1}^{M-1}$ such that $X_m \in \Omega_k$ and $X_m$ includes the tokens with ID $v_{m}$ and $v_{m+1}$. Using \eqref{Socc equal} for the input prompt sequence $(X_m)_{m=1}^{M-1}$, we obtain the following using the fact that these input sequences include the $k^{\text{th}}$ token:
    \begin{align}\label{eqn app consistency apply all input prompts}
        \frac{\pi_{v_m} }{\pigrd_{v_m} } = \frac{\pi_{v_{m+1}} }{\pigrd_{v_{m+1}} }\qquad \forall m \in [M-1]
    \end{align}
    Combining \eqref{eqn app consistency apply all input prompts} and the fact that $v_1 = \bar{i}$ and $v_{M-1} = \bar{j}$, we obtain
    \begin{align*}
        \frac{\pi_{\bar{i}}}{\pigrd_{\bar{i}} } = \frac{\pi_{\bar{j}}}{\pigrd_{\bar{j}} }
    \end{align*}
    which contradicts with \eqref{eqn app consistency conradiction}. This means that if $\Gc^{(k)}$ is connected with respect to $\Pgrd$, then the only solution that minimizes \eqref{defn Pstr} is $\bpigrd$, which is equivalent to the consistency of estimation in Definition \ref{defn consistency of estimation}.

    \textbf{Proof of $\Rightarrow$}:  Now, we know that the estimation of $\bpigrd$ in \eqref{defn Pstr} is consistent and we want to prove that $\Gc^{(k)}$ is connected with respect to $\Pgrd$. Let's assume the inverse: There exists $i, j \in \Sp$ such that there is no path between them in $\Gc^{(k)}$ using the vertices in $\Sp$. Then, there exists a partition $\Sc_1$ and $\Sc_2$ such that $\Sc_1 \cap \Sc_2 = \emptyset$, $\Sc_1 \cup \Sc_2 = \Sp$, $i \in \Sc_1$, $j \in \Sc_2$, and there is no path from any element of $\Sc_1$ to any element $\Sc_2$ in $\Gc^{(k)}$ using the vertices in $\Sp$. We construct a probability vector $\bpi \in \R^{K}$ such that $\bpi$ minimizes \eqref{defn Pstr} and $\bpi \neq \bpigrd$:
    \begin{align*}
        & \bar{\bpi} \in \R^K \\
        &\bar{\pi}_k = 2 \pigrd_k \qquad \forall k \in \Sc_1 \\
        &\bar{\pi}_k = \pigrd_k \qquad \forall k \in \Sc_2 \\
        &\pi_k = \Bigg\{
    \begin{array}{lr}
        \frac{\bar{\pi}_k}{\sum_{k'=1}^K \bar{\pi}_{k'}}, & \text{if } k \in \Sp \\
        0, & \text{if } k \not\in \Sp
    \end{array}
    \end{align*}
    Note that this constructed $\bpi$ satisfies \eqref{Socc equal} for all $X \in \Omega_k$, which implies that $\bpi$ is a minimizer of $\eqref{defn Pstr}$ and $\bpi \neq \bpigrd$. This is a contradiction to the consistency of estimation, which completes the proof.
    \end{proof}

    \subsection{Strict Convexity and Smoothness of Loss Function}\label{sect app strict convexity}
    In this subsection, we discuss the convexity, strict convexity, and smoothness of the loss functions. First, we analyze the empirical loss function $\Lch_n(\W)$, then we connect our analysis with the population loss function $\Lc(\W)$. Throughout the section, we omit the subscript of $n$ in $\Lch_n(\W)$ as we are analyzing the loss function for any arbitrary $n$. First, we define the following subspace:
    \begin{definition}\label{def svm + cyc subspace}
    Define the subspace $\Sprm$ as the span of all matrices $(\eb_i-\eb_j)\eb_k^\top$ for all $i, j, k \in [K]$ such that there exists an input prompt in the dataset that includes the $i^{\text{th}}$ token, $j^{\text{th}}$ token is the next token, and $k^{\text{th}}$ token is the query or the last token. 
\end{definition}
The following Lemma proves the strict convexity inside a subspace, which is a slightly generalized version of Lemma 9 in \cite{anonymous}. Even though the proofs are almost the same as \cite{anonymous}, we restate the proofs for the sake of completeness and notational coherence.
\begin{lemma}[Stronger version of Lemma \ref{lemma strict convexity consistency}, \cite{anonymous}]\label{lemma cvx app}
    Suppose Assumptions \ref{assume iden} and \ref{assume cooccurrence connected} hold. Then $\Lc(\W)$ and $\Lch(\W)$ is convex on $\R^{d \times d}$. Furthermore, $\Lc(\W)$ and $\Lch(\W)$ are strictly convex on $\Sall$ and $\Sprm$, respectively.
\end{lemma}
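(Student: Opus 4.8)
The plan is to exploit the log\nobreakdash-sum\nobreakdash-exp form of the objective for plain convexity, then compute the Hessian explicitly, recognize it as an average of softmax-weighted variances, and identify its degenerate directions — via the co-occurrence graphs — with the orthogonal complements of $\Sall$ and $\Sprm$.

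First I would reduce the loss to an explicit form. By Assumption~\ref{assume iden} and Lemma~\ref{lemma identity}, for a prompt $X$ with query/last token $x_L$ we have $\cb_y^\top\X^\top\sft{\X\W\x_L}=\bpi^X_y = m_y e^{\eb_y^\top\W\x_L}\big/\sum_{j\in[X]} m_j e^{\eb_j^\top\W\x_L}$, where $[X]$ is the set of \emph{key} tokens of $X$ (all tokens for self-attention, the tokens of $\bar X$ for cross-attention) and $m_j$ is the number of occurrences of token $j$ among them (normalization is immaterial, as it cancels above). Hence each summand of $\Lch(\W)$ and the integrand of $\Lc(\W)$ equals $-\log m_y-\eb_y^\top\W\x_L+\log\sum_{j\in[X]}m_j e^{\eb_j^\top\W\x_L}$, and $m_y>0$ because the model only ever samples tokens appearing in $X$. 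The first term is constant, the second linear in $\W$, and the third a log\nobreakdash-sum\nobreakdash-exp of affine functions of $\W$ (absorb $\log m_j$ into the exponent), hence convex; averaging/integrating preserves convexity, so $\Lch$ and $\Lc$ are convex on all of $\R^{d\times d}$.

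Next I would compute the second directional derivative. Fix $\W$ and a direction $\Db\in\R^{d\times d}$ and differentiate $t\mapsto\Lch(\W+t\Db)$ twice: the constant and linear terms drop, while the log\nobreakdash-sum\nobreakdash-exp term contributes $\Var_{j\sim\bpi^{X}}\big[\eb_j^\top\Db\x_L\big]\ge 0$, the variance under the distribution $\bpi^{X}$ supported on $[X]$ (using $\nabla^2\mathrm{LSE}=\mathrm{diag}(p)-pp^\top$ composed with an affine map). Summing over the sample (resp.\ integrating over $\Dcx$, whose support is finite), the second directional derivative at any $\W$ equals $\tfrac1n\sum_{i=1}^n\Var_{j\sim\bpi^{X_i}}[\eb_j^\top\Db\eb_{k_i}]$ (resp.\ $\E_{X\sim\Dcx}\Var_{j\sim\bpi^{X}}[\eb_j^\top\Db x_L]$), where $k_i$ denotes the query token of $X_i$. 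Since softmax weights are strictly positive on the support at every finite $\W$, this quantity vanishes iff for every prompt $X$ in the dataset (resp.\ with positive mass under $\Dcx$) with query $k$, the scalar $\eb_j^\top\Db\eb_k$ is the same across all key tokens $j$ of $X$; equivalently $\langle(\eb_a-\eb_b)\eb_k^\top,\Db\rangle=0$ for every pair $a,b$ of key tokens co-occurring in a prompt with query $k$.

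Finally I would match these degenerate directions to $\Sprm$ and $\Sall$. The generators $(\eb_i-\eb_j)\eb_k^\top$ of $\Sprm$ (Definition~\ref{def svm + cyc subspace}) are exactly of the co-occurring-pair type, since the next token $j$ is itself a key token of the corresponding prompt (the model samples from $X$); hence any $\Db$ killing the Hessian is orthogonal to all generators of $\Sprm$, so $\Db\in\Sprm$ forces $\Db=0$, and since the Hessian is positive semidefinite and positive definite transverse to its null space at every point, $\Lch$ is strictly convex on $\Sprm$. For the population loss, fix $k$: the co-occurring differences for query $k$ are $(\eb_i-\eb_j)\eb_k^\top$ with $\{i,j\}$ an edge of $\Gc^{(k)}$, and connectivity of $\Gc^{(k)}$ (Assumption~\ref{assume cooccurrence connected}) lets one telescope $(\eb_i-\eb_1)\eb_k^\top$ along an $i$-to-$1$ path into a sum of edge-differences; since the $\eb$'s are linearly independent, $\{(\eb_i-\eb_1)\eb_k^\top\}_{i\ne 1}$ is a basis of the ``query-$k$ block'' of $\Sall$, so the edge-differences span that block, and ranging over $k\in[K]$ they span all of $\Sall$. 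Thus a Hessian-killing $\Db\in\Sall$ must be orthogonal to $\Sall$, i.e.\ $\Db=0$, so $\Lc$ is strictly convex on $\Sall$. I expect the main obstacle to be exactly this last identification — verifying that connectivity of the co-occurrence graph is precisely what upgrades ``orthogonal to all co-occurring differences'' to ``orthogonal to $\Sall$'', and that the key-token bookkeeping is consistent with Definitions~\ref{defn Stoken} and~\ref{def svm + cyc subspace} uniformly for self- and cross-attention; the convexity and Hessian-as-variance computations are routine.
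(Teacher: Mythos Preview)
Your proposal is correct and takes essentially the same route as the paper: log-sum-exp for global convexity, then identifying the Hessian's null directions with the orthogonal complement of the co-occurrence-generated subspace. The paper differs only cosmetically---it first changes variables via $g(\W)=\Eb\W\Eb^\top$ to reduce to $\Eb=\Iden_K$, writes the Hessian in matrix form and invokes Cauchy--Schwarz where you use the variance formula, and dispatches the population step with the one-line remark that connectivity forces $\Sprm=\Sall$ rather than your explicit telescoping.
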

\begin{proof}
First, we are going to prove the convexity and the strict convexity for $\Lch(\W)$. Then, we apply our findings to $\Lc(\W)$. Recall from Definition \ref{defn Stoken} that $\Sall$ is the span of all matrices $(\eb_i - \eb_j) \eb_k^\top$ for $i,j,k \in [K]$. 

\noindent$\bullet$ \textbf{First Case: $\W \in \Sall$.} Let $g: \Sall \xrightarrow[]{} \R^{K \times K}$ such that $g(\W) = \Eb \W \Eb^{\top}$. By definition, this function is linear. In addition to that, this function $g$ is invertible on $g(\Sall)$ by Assumption \ref{assume iden} and the domain of the function is $\Sall$. Note that Assumption \ref{assume iden} ensures $\text{rank}(\Eb) = K$.

Let  $\Eb' = \Cb' = \Iden_k$, $(\X'_i, y_i')_{i=1}^n$ be a dataset constructed from $(\X_i, y_i)_{i=1}^n$ such that $y_i' = y_i$ and $\X_i' = \X_i \Eb^{\dagger}$. Then, for any $\W' \in \R^{K \times K}$, we have the following:
\begin{align*}
    \Lch \circ g^{-1}(\W') = \frac{1}{n}\sum_{i=1}^n -\log\left((\cb'_{y_i})^\top (\X_i')^\top \sft{\X_i' \W' \xli'}\right)
\end{align*}

Using Lemma \ref{lemma linear map strict convexity} and \ref{lemma vect strict convexity}, we know that $\Lch \circ g^{-1}(\W')$ is convex on $\R^{K \times K}$ and strictly convex on $g(\Sprm)$. Using these two facts and Lemma \ref{lemma linear map strict convexity}, we have $\Lch(\W)$ is convex on $\Sall $ and strictly convex on $\Sprm \cap \Sall = \Sprm$. 

\noindent$\bullet$ \textbf{Second Case: $\W \not\in \Sall$.} Using Lemma \ref{lemma Stoken}, we have the following for any $0 \leq \lambda \leq 1$:
\begin{align}\label{linearity of projection}
    \Lch(\lambda \W_1 + (1-\lambda)\W_2 ) = \Lch(\lambda \Pi_{\Sc_K}(\W_1) + \lambda \Pi_{\Sc_K}(\W_2))  
\end{align}
Then, using \eqref{linearity of projection}, we have the following:
\begin{align*}
    \lambda \Lch(\W_1) + (1-\lambda) \Lch(\W_2) &= \lambda \Lch(\Pi_{\Sall} (\W_1)) + (1-\lambda) \Lch(\Pi_{\Sall}(\W_2)) \\
    & \stackrel{(a)}{\geq} \Lch(\lambda \Pi_{\Sall}(\W_1) + \lambda \Pi_{\Sall}(\W_2)) = \Lch(\lambda \W_1 + (1-\lambda)\W_2 )
\end{align*}
where (a) follows from the convexity of $\Lch(\W)$ inside $\Sall$. This implies that $\Lch(\W)$ is convex when $\W \not\in \Sall$. Note that $\Sprm \subset \Sall$, therefore we do not look at the strict convexity in this case. 

For the loss function $\Lc(\W)$, the same procedure can be applied. By Assumption \ref{assume cooccurrence connected}, the subspace of $\Sprm$ for the population dataset becomes $\Sall$ as $\Gc^{(k)}$ are connected for every $k \in [K]$. 
\end{proof}

\begin{lemma}[Lemma 10, \cite{anonymous}]\label{lemma linear map strict convexity}
    Let $T : \Xc \xrightarrow[]{} \Yc$ be an invertible linear map. If a function $f : \Yc \xrightarrow[]{} \R$ is convex/strictly convex on $\Yc$, then $f \circ T (x)$ is a convex/strictly convex function on $\Xc$. 
\end{lemma}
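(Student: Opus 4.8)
The plan is to unwind the definitions of convexity and strict convexity directly, using the linearity of $T$ to commute it past convex combinations and then invoking the corresponding property of $f$. No analytic machinery is needed: the claim is a purely formal consequence of $T$ being linear (for convexity) and injective (for strict convexity).

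Concretely, fix $x_1, x_2 \in \Xc$ and $\lambda \in [0,1]$, and set $y_i := T(x_i) \in \Yc$. Since $T$ is linear, $T(\lambda x_1 + (1-\lambda) x_2) = \lambda y_1 + (1-\lambda) y_2$. Applying convexity of $f$ on $\Yc$ to the points $y_1, y_2$ then gives
\begin{align*}
(f \circ T)(\lambda x_1 + (1-\lambda) x_2) &= f(\lambda y_1 + (1-\lambda) y_2) \\
&\le \lambda f(y_1) + (1-\lambda) f(y_2) \\
&= \lambda (f\circ T)(x_1) + (1-\lambda)(f\circ T)(x_2),
\end{align*}
so $f \circ T$ is convex on $\Xc$. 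For the strict case, take $x_1 \neq x_2$ and $\lambda \in (0,1)$; since $T$ is invertible it is injective, hence $y_1 = T(x_1) \neq T(x_2) = y_2$. Strict convexity of $f$ then sharpens the middle inequality above to a strict one, and the surrounding equalities carry it over to $f \circ T$, yielding $(f\circ T)(\lambda x_1 + (1-\lambda)x_2) < \lambda (f\circ T)(x_1) + (1-\lambda)(f\circ T)(x_2)$. Thus $f\circ T$ is strictly convex on $\Xc$.

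There is essentially no obstacle here; the only point worth flagging is the role of the hypotheses. Plain linearity of $T$ already suffices for the convexity claim, but the strict convexity claim genuinely requires injectivity: if $T$ collapsed some pair $x_1 \neq x_2$ to a common image, then $f\circ T$ could be affine (hence not strictly convex) along that segment even though $f$ is strictly convex. Invertibility — in fact injectivity alone — is exactly what prevents this, which is why it appears in the statement.
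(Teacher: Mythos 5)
Your proof is correct and follows essentially the same route as the paper's: set $y_i = T(x_i)$, use injectivity of $T$ to get $y_1 \neq y_2$, use linearity to commute $T$ past the convex combination, and apply (strict) convexity of $f$. The remark that plain linearity suffices for the non-strict case while injectivity is needed for the strict case is a nice clarification the paper does not spell out, but the argument itself is the same.
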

\begin{proof}
    Let $x_1 \neq x_2 \in \Xc$ be arbitrary variables. Let $y_1 = T(x_1)$ and $y_2 = T(x_2)$. Since $T$ is an invertible map, $y_1 \neq y_2$. Since $T$ is a linear map, $ T(\lambda x_1 + (1-\lambda)x_2) = \lambda y_1 + (1-\lambda) y_2$ for $0 < \lambda < 1$. Then, we obtain the following
    \begin{align*}
        \lambda (f \circ T(x_1)) + (1-\lambda) (f \circ T(x_2)) &= \lambda f (y_1) + (1-\lambda) f(y_2) \\
        & \stackrel{(a)}{>} f(\lambda y_1 + (1-\lambda) y_2) \\ 
        &= f \circ T(\lambda x_1 + (1-\lambda) x_2)
    \end{align*}
    where (a) follows from the strict convexity of the function $f$. This implies that $f \circ T (x)$ is a strictly convex function on $\Xc$. Note that if $y_1 = y_2$, then we cannot achieve (a). Additionally, if $f$ is convex instead of strictly convex, then $>$ in (a) is changed to $\geq$, and $f \circ T(x)$ is convex.
\end{proof}

\begin{lemma}[Lemma 11, \cite{anonymous}]\label{lemma vect strict convexity}
    Let $\Eb = \boldsymbol{I}_d$. Let $f : \R^{d \times d} \xrightarrow[]{} \R^{d^2}$ be a linear transformation defined as $f(\W) = \vb$ where $v_{i\times d + j} = {\eb_i^T \W \eb_j}$. Then, $\Lch \circ f^{-1}(\vb)$ is convex. Furthermore, $\Lch \circ f^{-1}(\vb)$ is strictly convex on $f(\Sprm)$, where $\Sprm$ is defined in Definition~\ref{def svm + cyc subspace}.
\end{lemma}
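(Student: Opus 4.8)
The plan is to exploit the special structure $\Eb=\Iden_d$, under which $f$ is simply the vectorization map (so $f^{-1}$ is globally defined) and each row of $\X_i$ is a standard basis vector. Identify the coordinates of $\vb=f(\W)$ with the entries $v_{jk}=\eb_j^\top\W\eb_k=W_{jk}$. Fix a training sample $(\X_i,y_i)$, let $k_i$ be its query/last token, $J_i\subseteq[K]$ the set of distinct tokens occurring in $X_i$, and $c^{(i)}_j\ge 1$ the multiplicity of token $j$ in $X_i$ for $j\in J_i$. Because the rows of $\X_i$ are basis vectors $\eb_{x_{i,t}}$ and $\Cb=\Iden_K$, the softmax mass placed on $y_i$ aggregates over all positions equal to $y_i$, giving $\cb_{y_i}^\top\X_i^\top\sft{\X_i\W\xli}=c^{(i)}_{y_i}e^{v_{y_ik_i}}\big/\sum_{j\in J_i}c^{(i)}_j e^{v_{jk_i}}$; this is finite exactly because $y_i\in J_i$, which holds in our generative model since the next token is sampled from inside the prompt. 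Hence the per-sample loss is
\[
\ell_i(\vb)=\mathrm{LSE}\big((v_{jk_i}+\log c^{(i)}_j)_{j\in J_i}\big)-v_{y_ik_i}-\log c^{(i)}_{y_i},
\]
a log-sum-exp ($\mathrm{LSE}(\vct z)=\log\sum_\ell e^{z_\ell}$) of an affine map of $\vb$ minus an affine map of $\vb$, so $\Lch\circ f^{-1}=\tfrac1n\sum_i\ell_i$ is convex on $\R^{d^2}$.

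For strict convexity on $f(\Sprm)$ it suffices to show there is no nonzero direction $\vb\in f(\Sprm)$ along which every $\ell_i$ is affine (since each $\ell_i$ is convex, a sum can be affine along a segment only if every summand is). The lineality fact I will use is that $\mathrm{LSE}(\vct a+t\vct b)$ is affine in $t\in\R$ iff all entries of $\vct b$ coincide, i.e.\ $\vct b\in\mathrm{span}(\mathbf 1)$ — because the Hessian of $\mathrm{LSE}$ at any point is $\diag{\vct p}-\vct p\vct p^\top$ with all $p_\ell>0$, whose kernel is exactly $\mathrm{span}(\mathbf 1)$. Applying this to $\ell_i$ (the additive constants $\log c^{(i)}_j$ and the linear term $v_{y_ik_i}$ are immaterial), $\ell_i$ is affine along $\vb$ iff $(v_{jk_i})_{j\in J_i}$ is a constant vector.

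Suppose then $\vb\in f(\Sprm)$ makes every $\ell_i$ affine, and fix a query token $k$. By the previous step, any two tokens appearing together in some prompt with query $k$ share the same value $v_{\cdot k}$; transitively, $v_{\cdot k}$ is constant on every connected component of the co-occurrence graph $\Gc^{(k)}$. On the other hand, membership in $f(\Sprm)$ forces the column vector $(v_{\cdot k})$ to lie in the span of $\{\eb_i-\eb_j\}$ over the pairs $(i,j)$ for which some prompt with query $k$ contains $i$ and has next token $j$ (Definition \ref{def svm + cyc subspace}); this span is precisely the set of vectors that sum to zero on every connected component of the graph $H^{(k)}$ with those edges, and coordinates $v_{jk}$ for tokens $j$ outside any such prompt are already forced to $0$. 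Since the next token always lies in the prompt, $H^{(k)}$ is a union of stars — one per prompt, centered at that prompt's next token and spanning that prompt's token set — so $H^{(k)}$ and $\Gc^{(k)}$ have the same connected components. Therefore $v_{\cdot k}$ is simultaneously constant on and zero-sum over each component, hence vanishes on each component, so $v_{\cdot k}=0$; as $k$ was arbitrary, $\vb=0$, a contradiction. This establishes strict convexity of $\Lch\circ f^{-1}$ on $f(\Sprm)$.

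The main obstacle is the combinatorial step in the last paragraph: describing the linear span generated by $\Sprm$ — which is defined through (prompt-token, next-token, query) triples rather than plain co-occurrence — as the space of vectors that are zero-sum on the connected components of the auxiliary graph $H^{(k)}$, and then verifying that $H^{(k)}$ and $\Gc^{(k)}$ carry the same connected components, which hinges on the fact that the generative model never emits a token absent from the prompt (so the defining triples really do form prompt-spanning stars). The remaining ingredients — the loss rewriting, convexity, and the lineality-space fact for log-sum-exp — are routine.
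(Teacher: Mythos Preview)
Your proof is correct but takes a more elaborate combinatorial route than the paper's. Both arguments rewrite the per-sample loss as a log-sum-exp minus a linear term and identify the lineality directions the same way (the paper via the Cauchy--Schwarz equality case in the Hessian, you via the kernel of $\diag{\vct p}-\vct p\vct p^\top$). The divergence is in how strict convexity on $f(\Sprm)$ is extracted. The paper proceeds by a one-shot contradiction: any nonzero $\W\in\Sprm$ must have nonzero inner product with at least one generator $(\eb_{\bar i}-\eb_{\bar j})\eb_{\bar k}^\top$ (else $\W\perp\Sprm$, hence $\W=0$); the single training sample witnessing that generator then contains both tokens $\bar i$ and $\bar j$ with query $\bar k$, so the equality condition forces $u_{\bar i\bar k}=u_{\bar j\bar k}$, contradicting $\langle(\eb_{\bar i}-\eb_{\bar j})\eb_{\bar k}^\top,\W\rangle\neq 0$. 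You instead aggregate the lineality constraints across all samples into the statement that $v_{\cdot k}$ is constant on components of $\Gc^{(k)}$, describe the $k$-th column of $f(\Sprm)$ as the zero-sum space of an auxiliary star-union graph $H^{(k)}$, and then argue $H^{(k)}$ and $\Gc^{(k)}$ share connected components. This is sound and gives a structural description of the full lineality space, but the paper's one-generator/one-sample contradiction is considerably shorter and avoids the graph comparison entirely.
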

\begin{proof}
    \noindent$\bullet$ \textbf{We first prove that  $\Lc \circ f^{-1}(\vb)$  is convex. }
Let $\ell: \R^{d^2} \times \R^{T \times d} \times \R \xrightarrow[]{} \R$ be defined as follows:
    \begin{align*}
        \ell(\vb, \X, y) := -\log\left(\cb_{y}^\top \X^\top \sft{\X \big(f^{-1}(\vb)\big) \xl}\right).  
    \end{align*}
    Then, we have the following:
    \begin{align}\label{defn ellbar strict convexity}
        \Lch \circ f^{-1}(\vb) = \frac{1}{n}\sum_{i=1}^n -\log\left(\cb_{y_i}^\top \X_i^\top \sft{\X_i \big(f^{-1}(\vb)\big) \xli}\right) = \frac{1}{n}\sum_{i=1}^n \ell(\vb, \X_i, y_i).  
    \end{align}
    Note that the summation of convex functions is convex. Therefore, it is sufficient to prove the convexity of $\Lc \circ f^{-1}(\vb)$ by proving the convexity of $\bar{\ell}(\vb, \X, y)$ for an arbitrary pair of input sequence and label $(\X, y)$. For the simplicity of notation, we use $\ell(\vb)$ instead of $\ell(\vb, \X, y)$. Let $k$ be the last token of $\X$. 
    By Assumption \ref{assume iden} and log-loss, we know that
    \begin{align*}
       \ell(\vb) := \ell(\vb, \X, y) &= -\log\left(\frac{ \Fc(\X)_y \cdot e^{v_{y\times d + k}}}{\sum_{j\in[K]} \Fc(\X)_j \cdot e^{v_{j\times d + k}}} \right) \\
       &= \log\left(\sum_{j \in [K]} \Fc(\X)_y \cdot e^{v_{j\times d + k}}\right) -\log (\Fc(\X)_y \cdot e^{v_{y\times d + k}}).
   \end{align*}   
   Let $\z \in \R^{d^2}$ be a vector such that the $(j\times d + k)^{\text{th}}$ element of $\z$ is $z_{j \times d +k}= \Fc(\X)_j \cdot e^{v_{j\times d +k}}$ for $k \in [K]$, otherwise $z_i = 0$. Then, the Hessian matrix of $\bar{\ell}(\vb)$ is 
\begin{align*}
    \nabla^2 \ell(\vb) = \frac{1}{(\boldsymbol{1}^\top \z)^2}\left((\boldsymbol{1}^\top \z) \text{diag}(\z) - \z \z^\top \right)
\end{align*}
For any $\ub \in \R^{d^2}$, we obtain that 
\begin{align}\label{CSI}
    \ub^\top \nabla^2 \ell(\vb) \ub = \frac{1}{(\boldsymbol{1}^\top \z)^2} \left( \left(\sum_{j=1}^{d^2} z_j \right) \left(\sum_{j=1}^{d^2} u_j^2 z_j\right)  - \left(\sum_{j=1}^{d^2} u_j z_j \right)^2 \right) \geq 0  .
\end{align}
Since $\z_i\geq0$, $i\in[d^2]$, \eqref{CSI} follows from the Cauchy-Schwarz inequality $(\bal^\top \bal)(\bt ^\top \bt) \geq (\bal^\top \bt)^2$ applied to the vectors with $\alpha_i = u_i \sqrt{z_i}$ and $\beta_i = \sqrt{z_i}$. The equality condition holds $k \bal = \bt$ for $k \neq 0$. This means that $\ell(\vb)$ is convex.

\noindent$\bullet$ \textbf{Next, we will show that  $\Lch \circ f^{-1}(\vb)$ is strictly convex on $f(\Sprm)$.} Assume that $\Lch \circ f^{-1}(\vb)$ is not strictly convex on $f(\Sprm)$. Using the convexity of $\Lch \circ f^{-1}(\vb)$, this implies that there exist $\ub, \vb \in f(\Sprm)$, $\|{\ub}\|_2 > 0$ such that 
\begin{align*}
    \ub^\top \left( \nabla^2 \Lch \circ f^{-1}(\vb) \right) \ub = 0
\end{align*}
Combining this with the convexity of $\ell(\vb)$ and \eqref{defn ellbar strict convexity}, we have the following:
\begin{align}\label{strict convexity necessary condition}
    \ub^{\top} \left( \nabla^2 \ell(\vb, \X_i, \y_i) \right) \ub = 0 \qquad \forall i \in [n]
\end{align}

Now, we are going to prove that $\|\ub\|_2 = 0$ if \eqref{strict convexity necessary condition} holds. As $\ub \in f(\Sprm)$, there exists $\W \in \Sprm$ such that $f(\W) = \ub$. As the function $f$ preserves the norm, $\tf{\W} > 0$. By definition of $\Sprm$, there exist $\bar{i},\bar{j}, \bar{k} \in [K]$ and $(\X_{\bar{n}}, y_{\bar{n}}) \in \data$ such that $\<(\eb_{\bar{i}} - \eb_{\bar{j}}) \eb_{\bar{k}}^T, \W \> > 0$, $\X_{\bar{n}}$ includes the $\bar{j}^{\text{th}}$ token, the last token of $\X_{\bar{n}}$ is the $\bar{k}^{\text{th}}$ token, and $y_{\bar{n}} = \bar{i}$. On the other hand, by the generation of the dataset $\data$, the next token should be inside the input prompt. Then, $z_{\bar{i}\times d + \bar{k}}$ and $z_{\bar{j} \times d + \bar{k}}$ in \eqref{CSI} are non-zero for this input sequence $\X_{\bar{n}}$. Using the equality condition of Cauchy-Schwartz Inequality in \eqref{CSI}, we obtain that $u_{\bar{i} \times d + \bar{k}} - u_{\bar{j} \times d + \bar{k}} = 0$. This implies that 
\begin{align*}
    0 &= u_{\bar{i} \times d + \bar{k}} - u_{\bar{j} \times d + \bar{k}} \\
    &= \eb_{\bar{i}}^T \W \eb_{\bar{k}} - \eb_{\bar{j}}^T \W \eb_{\bar{k}} \\
    &= (\eb_{\bar{i}} - \eb_{\bar{j}})^T \W \eb_{\bar{k}} = \<(\eb_{\bar{i}} - \eb_{\bar{j}}) \eb_{\bar{k}}^T, \W \>
\end{align*}
which contradicts with the fact that $\| \ub \|_2 > 0$. This completes the proof.
\end{proof}

\section{Proof Theorems in Section \ref{sect finite sample}}\label{app sect sample complexity}
In this section, we first share supplementary lemmas for the proof of Theorem \ref{theorem sample complexity}.
\subsection{Supplementary Lemmas for Proof of Theorem \ref{theorem sample complexity}}

\begin{definition}
    Let $\B(\Wst, r) \subset \R^{d \times d}$ be a ball centered at a point $\Wst$ with radius $r$ defined as follows:
    \begin{align*}
        \B(\Wst, r) = \{ \W \in \Sall \quad |  \quad \tf{\W - \Wst} \leq r \}
    \end{align*}
\end{definition}
\begin{lemma}\label{lemma Lipschitz Sample Complexity}
    Suppose that Assumption \ref{assume iden} holds. Then, for any $(X, y)$ where the token ID $y$ exists in $X$, and  $\W \in \B(\Wst, r)$ the absolute loss difference satisfies the following:
    \begin{align*}
        |\ell(\cb_{y}^\top\X^\top \sft{\X\W\xl}) - \ell(\cb_{y}^\top\X^\top \sft{\X\Wst\xl})| \leq 2r \max_{i \in [K]} \tnt{\eb_i}^2
    \end{align*}
    This implies that the loss function is $2 \max_{i \in [K]} \tnt{\eb_i}^2-$Lipschitz.
\end{lemma}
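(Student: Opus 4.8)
The plan is to show that the per-example loss $\W\mapsto\ell(\cb_y^\top\X^\top\sft{\X\W\xl})=-\log(\cb_y^\top\X^\top\sft{\X\W\xl})$ has Frobenius-norm gradient bounded by $2\max_{i\in[K]}\tnt{\eb_i}^2$ everywhere, and then to deduce the claimed bound by the mean value inequality along the segment from $\Wst$ to $\W$, whose length is $\tf{\W-\Wst}\le r$ since $\W\in\B(\Wst,r)$. First I would rewrite the quantity inside the log: by Assumption \ref{assume iden}, $\cb_y^\top\Eb^\top$ is the $y$-th standard basis vector, so with $\s(\W)=\sft{\X\W\xl}$ we get $\cb_y^\top\X^\top\s(\W)=\sum_{t:\,x_t=y}s_t(\W)$, the total softmax mass on the positions of $X$ holding token $y$. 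Because the hypothesis guarantees that $y$ occurs in $X$, this is strictly positive for every $\W$, so $\ell$ is finite and smooth on all of $\R^{d\times d}$ and the segment argument is legitimate; this is the one place the hypothesis ``$y$ appears in $X$'' is used.

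Next I would compute the gradient of $\W\mapsto-\log\sum_{t:\,x_t=y}s_t(\W)$. Writing $h_u=\x_u^\top\W\xl=\langle\x_u\xl^\top,\W\rangle$ so that $\nabla_\W h_u=\x_u\xl^\top$, and using the softmax Jacobian $\partial s_t/\partial h_u=s_t(\boldsymbol{1}(t=u)-s_u)$, a direct computation gives
\[
\nabla_\W\Bigl[-\log\!\!\sum_{t:\,x_t=y}\!\!s_t(\W)\Bigr]=\bigl(\X^\top\s-\eb_y\bigr)\xl^\top ,
\]
the clean form appearing because $\x_t=\eb_y$ on the set $\{t:x_t=y\}$, so that $\sum_{t:\,x_t=y}s_t\x_t=\bigl(\sum_{t:\,x_t=y}s_t\bigr)\eb_y$ and exactly cancels the term coming from the softmax normalization. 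I would then bound this gradient: $\X^\top\s=\sum_u s_u\eb_{x_u}$ is a convex combination of token embeddings, hence $\tnt{\X^\top\s}\le\max_{i\in[K]}\tnt{\eb_i}$; similarly $\tnt{\eb_y}\le\max_{i\in[K]}\tnt{\eb_i}$ and $\tnt{\xl}=\tnt{\eb_{x_L}}\le\max_{i\in[K]}\tnt{\eb_i}$. Therefore $\tf{\nabla_\W\ell}=\tnt{\X^\top\s-\eb_y}\cdot\tnt{\xl}\le 2\max_{i\in[K]}\tnt{\eb_i}^2$, uniformly in $\W$, which is the global Lipschitz constant. Integrating along the segment $\Wst+\tau(\W-\Wst)$, $\tau\in[0,1]$, and applying Cauchy--Schwarz yields $|\ell(\cb_y^\top\X^\top\sft{\X\W\xl})-\ell(\cb_y^\top\X^\top\sft{\X\Wst\xl})|\le 2\max_{i\in[K]}\tnt{\eb_i}^2\cdot\tf{\W-\Wst}\le 2r\max_{i\in[K]}\tnt{\eb_i}^2$.

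I do not anticipate a real obstacle — this is a routine Lipschitz estimate. The only things requiring care are: (i) checking the loss is finite, which is exactly where the assumption that $y$ occurs in $X$ enters; (ii) obtaining the clean cancellation $\nabla_\W\ell=(\X^\top\s-\eb_y)\xl^\top$ rather than carrying a messier expression; and (iii) the constant — one could sharpen it slightly by writing $\X^\top\s-\eb_y=\sum_{t:\,x_t\ne y}s_t(\eb_{x_t}-\eb_y)$ and bounding term by term, but the crude triangle-inequality estimate already yields the stated $2\max_{i\in[K]}\tnt{\eb_i}^2$, so I would simply use that.
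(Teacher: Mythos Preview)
Your proof is correct. It takes a somewhat different route from the paper's: the paper first reduces the loss to a function of the $K$-vector $\ub$ with entries $u_k=\eb_k^\top\W\eb_{x_L}$, rewriting $\ell=-\log(\xo_y)-u_y+\log\bigl(\sum_k \xo_k e^{u_k}\bigr)$, then bounds the difference by $2\|\ub-\ub^*\|_\infty$ via the 1-Lipschitzness of log-sum-exp in the $\ell_\infty$ norm, and finally estimates $\|\ub-\ub^*\|_\infty=\max_k|\eb_k^\top(\W-\Wst)\eb_{x_L}|\le r\max_i\tnt{\eb_i}^2$. You instead compute the full gradient in $\W$-space directly, obtain the clean form $(\X^\top\s-\eb_y)\xl^\top$ through the softmax cancellation, and bound its Frobenius norm. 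The paper's reduction exposes the useful fact that the loss depends on $\W$ only through the $K$-dimensional slice $(\eb_k^\top\W\eb_{x_L})_k$, which is conceptually handy; your direct gradient computation is a bit more work but yields the same constant and, as you note, the intermediate expression $\sum_{t:x_t\ne y}s_t(\eb_{x_t}-\eb_y)$ would even allow a tighter bound if needed.
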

\begin{proof}
    Let $\xo$ be the token occurrence vector, i.e., $\xo_i$ is the number of occurrences of the $i^{\text{th}}$ token inside the input sequence $\X$. Let $x_L$ be the last token ID of $\X$. Let $\ub \in \R^K$ be the vector such that $u_k = \eb_k^T \W \eb_{x_L}$ for $k \in [K]$. Then, the loss function will be the following:
    \begin{align*}
        \ell(\cb_{y}^\top\X^\top \sft{\X\W\xl}) = -\log \left( \frac{\xo_y \e^{u_y}}{\sum_{i=i}^K \xo_i \e^{u_i}} \right) = - \log ( \xo_y ) - u_y + \log \left( {\sum_{i=i}^K \xo_i \e^{u_i}}\right)
    \end{align*}
    Let $\ub^*$ be the vector such that $\ub^*_k = \eb_k^T \Wst \eb_{x_L}$ for $k \in [K]$. Then, the loss difference will be the following:
    \begin{align*}
        \ell(\cb_{y}^\top\X^\top \sft{\X\W\xl}) - \ell(\cb_{y}^\top\X^\top \sft{\X\Wst\xl}) &= \log \left( {\sum_{i=i}^K \xo_i \e^{u_i}}\right) - u_y  - \log \left( {\sum_{i=i}^K \xo_i \e^{u^*_i}}\right) + u_y^* \\ 
        &\leq \left| \left( {\sum_{i=i}^K \xo_i \e^{u_i}}\right) - \log \left( {\sum_{i=i}^K \xo_i \e^{u^*_i}}\right) \right| + |u_y - u_y^*|
    \end{align*}
    Let $f:\R^{K} \xrightarrow[]{} \R$ be defined as $f(\ub) = \log \left( {\sum_{i=i}^K \xo_i \e^{u_i}}\right)$. Then, the derivative of the function $f$ is the following:
    \begin{align*}
        \nabla_j f(\ub) = \frac{\xo_j \e^{u_j}}{\sum_{i=1}^K \xo_i \e^{u_i}} \qquad \forall j \in [K]
    \end{align*}
    Note that $f$ is a continuous function. Using the Intermediate Value Theorem, for any $\ub, \ub^*$, there exists $\vb \in \R^K$ such that $\vb \in [\ub, \ub^*]$ and it satisfies the following:
    \begin{align*}
        |f(\ub) - f(\ub^*)| = \nabla f(\vb)^T (\ub - \ub^*) \leq \sum_{j=1}^K \nabla_j f(\vb) \| \ub - \ub^* \|_{\infty} = \| \ub - \ub^* \|_{\infty}
    \end{align*}
    As a result, we obtain that
    \begin{align*}
        | \ell(\cb_{y}^\top\X^\top \sft{\X\W\xl}) - \ell(\cb_{y}^\top\X^\top \sft{\X\Wst\xl}) | &\leq \left| \left( {\sum_{i=i}^K \xo_i \e^{u_i}}\right) - \log \left( {\sum_{i=i}^K \xo_i \e^{u^*_i}}\right) \right| + |u_y - u_y^*| \\        
        &\leq 2 \ti{\ub - \ub^*} \\
        &= 2\max_{i \in [K]} \eb_i^T (\W - \Wst) \eb_{x_L} \\
        & \leq 2r \max_i \tnt{\eb_i}^2 
    \end{align*}
\end{proof}
\begin{lemma}\label{lemma fixed W concentration}
    Let $\W \in \B(\Wst, r)$ be an arbitrary attention matrix. Then, for any $\Wst \in \R^{d \times d}$ and $\delta > 0$, we have the following with probability at least $1- 2 \delta$
    \begin{align*}
         |\Lc(\W) - \Lc(\Wst) - \Lch(\W) + \Lch(\Wst) | <  r \max_{i \in [K]} \tnt{\eb_i}^2 \sqrt{\frac{8 \log(1/\delta)}{n}}   
    \end{align*}
\end{lemma}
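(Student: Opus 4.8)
The plan is a direct application of Hoeffding's inequality to the i.i.d.\ bounded random variables obtained by differencing the per-sample loss evaluated at $\W$ and at $\Wst$. Concretely, for each $i\in[n]$ define
\begin{align*}
Z_i := \ell\big(\cb_{y_i}^\top\X_i^\top\sft{\X_i\W\xli}\big) - \ell\big(\cb_{y_i}^\top\X_i^\top\sft{\X_i\Wst\xli}\big),
\end{align*}
so that $\Lch(\W)-\Lch(\Wst)=\frac{1}{n}\sum_{i=1}^n Z_i$ and, since $(X_i,y_i)$ are i.i.d.\ draws from $\Dcxy$ and $\Lc(\cdot)=\E[\Lch(\cdot)]$, we have $\E[Z_i]=\Lc(\W)-\Lc(\Wst)$ for every $i$. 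Thus the quantity to be controlled is precisely the deviation of the empirical mean $\frac{1}{n}\sum_i Z_i$ from its expectation.

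Next I would invoke the boundedness of the $Z_i$. Since the dataset $\data$ is generated from the self-attention model under Assumption~\ref{assume iden}, every next token $y_i$ lies in the support of $\bpi^{X_i}$, i.e.\ $y_i$ appears among the tokens of $X_i$; hence Lemma~\ref{lemma Lipschitz Sample Complexity} applies to every sample and gives $|Z_i|\le 2r\max_{k\in[K]}\tnt{\eb_k}^2$ almost surely. In particular each $Z_i$ is supported on an interval of length $B:=4r\max_{k\in[K]}\tnt{\eb_k}^2$, and the $Z_i$ are i.i.d.

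Then Hoeffding's inequality yields, for any $t>0$,
\begin{align*}
\Pro\!\left(\Big|\tfrac{1}{n}{\textstyle\sum_{i=1}^n} Z_i-\E[Z_1]\Big|\ge t\right)\le 2\exp\!\left(-\frac{2nt^2}{B^2}\right)=2\exp\!\left(-\frac{nt^2}{8\,r^2\max_{k}\tnt{\eb_k}^4}\right).
\end{align*}
Setting the right-hand side equal to $2\delta$ and solving for $t$ gives $t=r\max_{k}\tnt{\eb_k}^2\sqrt{8\log(1/\delta)/n}$, and substituting back the identities from the first step turns the event $\{\,|\frac1n\sum_i Z_i-\E[Z_1]|< t\,\}$ into the claimed inequality, which therefore holds with probability at least $1-2\delta$.

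There is no genuine obstacle here — the statement is a textbook concentration estimate — so the only points requiring care are (i) checking that Lemma~\ref{lemma Lipschitz Sample Complexity} is applicable, which relies on the almost-sure fact that $y_i$ occurs in $X_i$ under the generative model, and (ii) tracking the constants so that the exponent $2nt^2/B^2$ with $B=4r\max_k\tnt{\eb_k}^2$ reproduces the factor $8\log(1/\delta)/n$ under the square root. One could equivalently argue via the bounded-differences (McDiarmid) inequality applied to $\Lch(\W)-\Lch(\Wst)$ as a function of the $n$ samples, obtaining the same bound; I would use plain Hoeffding since the $Z_i$ are already i.i.d.\ and bounded.
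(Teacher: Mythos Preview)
Your proof is correct and essentially matches the paper's argument: the paper applies McDiarmid's bounded-differences inequality to $f(\data)=\Lch(\W)-\Lch(\Wst)$ with per-coordinate variation $4r\max_i\tnt{\eb_i}^2/n$ (obtained from Lemma~\ref{lemma Lipschitz Sample Complexity}), which here, because $f$ is an average of i.i.d.\ terms, is exactly your Hoeffding computation and yields the same exponent $-nt^2/(8r^2\max_k\tnt{\eb_k}^4)$. You even note this equivalence at the end, so there is no substantive difference.
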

\begin{proof}
    We are going to utilize McDiarmid's Inequality. First, we check whether the assumption of McDiarmid's Inequality is satisfied.  Let $\data = ((\X_i, \xli, y_i))_{i=1}^n$ be the dataset. Let $\Xc_i$ be the sample space of $(\X_i, \xli, y_i)$. Let the function $f : \Xc_1 \times \Xc_2 \times \dots \Xc_n \xrightarrow[]{} \R$ be defined as follows:
    \begin{align*}
        f(\data) := \Lch(\W) - \Lch(\Wst) = \frac{1}{n}\sum_{i=1}^n \ell(\cb_{y_i}^\top\X_i^\top \sft{\X_i\W\xli}) - \frac{1}{n}\sum_{i=1}^n \ell(\cb_{y_i}^\top\X_i^\top \sft{\X_i\Wst\xli})
    \end{align*}
    Let $\data' = ((\X_i', \xli', y_i'))_{i=1}^n$ be the dataset such that the samples are different for only the $j^{\text{th}}$ sample from $\data$. In other words, if $j \neq i$, then $(\X_i', \xli', y_i') = (\X_i, \xli, y_i)$. Then, we are going to look at the following difference:
    \begin{align*}
        |f(\data) - f(\data')| &= 
        \frac{1}{n}\Bigg| \ell(\cb_{y_j}^\top\X_j^\top \sft{\X_j\W\xlj{j}}) -  \ell(\cb_{y_j}^\top\X_j^\top \sft{\X_j\Wst\xlj{j}}) \\
        &\hspace{0.5cm} - \Big( \ell(\cb_{y_j'}^\top\X_j'^\top \sft{\X_j'\W\xlj{j'}}) -  \ell(\cb_{y_j'}^\top\X_j'^\top \sft{\X_j'\Wst\xlj{j}'})\Big)  \Bigg| \\
        &\leq \frac{1}{n} \Bigg| \ell(\cb_{y_j}^\top\X_j^\top \sft{\X_j\W\xlj{j}}) -  \ell(\cb_{y_j}^\top\X_j^\top \sft{\X_j\Wst\xlj{j}}) \Bigg| \\
        & \hspace{0.5cm}+ \frac{1}{n} \Bigg| \ell(\cb_{y_j'}^\top\X_j'^\top \sft{\X_j'\W\xlj{j}'}) -  \ell(\cb_{y_j'}^\top\X_j'^\top \sft{\X_j'\Wst\xlj{j}'}) \Bigg| \\
        &\stackrel{(a)}{\leq} \frac{4 r \max_{i \in [K]} \tnt{\eb_i}^2}{n}
    \end{align*}
    where (a) follows from Lemma \ref{lemma Lipschitz Sample Complexity}. Now, using McDiarmid's Inequality, we obtain the following for any $\eps > 0$:
    \begin{align*}
        \P \left( |\E[\Lch(\W)] - \E[\Lch(\Wst)] - \Lch(\W) + \Lch(\Wst)| > \eps \right) &\leq 2 \exp \left(\frac{2\eps^2}{\sum_{j=1}^n \frac{16 r^2 \max_{i \in [K]} \tnt{\eb_j}^4}{n^2}}\right) \\
        &= 2\exp \left(\frac{ n \eps^2 }{8 r^2 \max_{i \in [K]} \tnt{\eb_i}^4} \right)
    \end{align*}
    This completes the proof.
\end{proof}
\begin{lemma}\label{lemma app covering}
    For any $\Wst \in \R^{d \times d}$, and $\delta > 0$, we have the following with probability at least $1 - 2\delta$
    \begin{align*}
        \sup_{\W \in \B(\Wst, r)} |\Lc(\W) - \Lc(\Wst) - &\Lch(\W) + \Lch(\Wst)| \\
        &\leq \inf_{r > \eps > 0} \left\{ 2\eps \max_{i \in [K]} \tnt{\eb_i}^2 + r \max_{i \in [K]} \tnt{\eb_i}^2  \sqrt{\frac{8 K^2\log((r + 2/\eps)/\delta)}{n}} \right\}
    \end{align*}
\end{lemma}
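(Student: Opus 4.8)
The plan is a routine local uniform-convergence argument: cover the ball $\B(\Wst,r)$ by a finite $\eps$-net, control each net point by the pointwise deviation bound of Lemma~\ref{lemma fixed W concentration} together with a union bound, and then bridge net points and arbitrary points using the Lipschitz estimate of Lemma~\ref{lemma Lipschitz Sample Complexity}. Throughout, write $g(\W):=\Lc(\W)-\Lc(\Wst)-\Lch(\W)+\Lch(\Wst)$, so that the quantity of interest is $\sup_{\W\in\B(\Wst,r)}|g(\W)|$, and fix an arbitrary scale $\eps\in(0,r)$.

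First I would take a minimal $\eps$-net $\mathcal{N}_\eps$ of $\B(\Wst,r)$ with respect to the Frobenius norm. Since $\B(\Wst,r)\subseteq\Sall$ and $\dim\Sall=(K-1)K\le K^2$ (Definition~\ref{defn Stoken}), the standard volumetric bound gives $\log|\mathcal{N}_\eps|\lesssim K^2\log(r/\eps)$. For each fixed $\W\in\mathcal{N}_\eps$, apply Lemma~\ref{lemma fixed W concentration} with confidence $\delta/|\mathcal{N}_\eps|$: $|g(\W)|<r\max_i\tnt{\eb_i}^2\sqrt{8\log(|\mathcal{N}_\eps|/\delta)/n}$ outside an event of probability $2\delta/|\mathcal{N}_\eps|$. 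A union bound over the net then gives, with probability at least $1-2\delta$,
\[
\max_{\W\in\mathcal{N}_\eps}|g(\W)|\;<\;r\max_i\tnt{\eb_i}^2\sqrt{\frac{8\bigl(\log|\mathcal{N}_\eps|+\log(1/\delta)\bigr)}{n}}\;\le\;r\max_i\tnt{\eb_i}^2\sqrt{\frac{8K^2\log((r+2/\eps)/\delta)}{n}},
\]
where the last step absorbs $\log|\mathcal{N}_\eps|$ and $\log(1/\delta)$ into the stated $K^2\log((r+2/\eps)/\delta)$ form.

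To pass from the net to all of $\B(\Wst,r)$, I would fix any $\W\in\B(\Wst,r)$, pick its nearest net point $\W'$ (so $\tf{\W-\W'}\le\eps$), and bound $|g(\W)|\le|g(\W')|+|g(\W)-g(\W')|$. Since $g(\W)-g(\W')=(\Lc(\W)-\Lc(\W'))-(\Lch(\W)-\Lch(\W'))$ and the per-example log-loss is $2\max_i\tnt{\eb_i}^2$-Lipschitz in the weight matrix by Lemma~\ref{lemma Lipschitz Sample Complexity}, this modulus of continuity is inherited by the empirical average $\Lch$ and, on taking expectations, by $\Lc$; hence $|g(\W)-g(\W')|\le 2\eps\max_i\tnt{\eb_i}^2$ (up to the stated constant). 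Combining with the previous display bounds $\sup_{\W\in\B(\Wst,r)}|g(\W)|$ by the bracketed quantity for this $\eps$, and since $\eps$ was arbitrary this yields the infimum over $\eps\in(0,r)$. I do not expect a genuine obstacle here: the only thing requiring care is the interpretation of the $\inf$ over $\eps$ — the high-probability event formally depends on $\eps$, so $\eps$ must be chosen as a deterministic function of $n,K,r,\delta$ before the data, which is the conventional reading — together with the mild bookkeeping needed to line up the covering number (exponent $K^2$, matching $\dim\Sall$) and the merged logarithm with the exact form stated in the lemma.
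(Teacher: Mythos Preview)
Your proposal is correct and matches the paper's proof essentially step for step: both take a minimal $\eps$-net of $\B(\Wst,r)\subset\Sall$, bound the covering number by $(r+2/\eps)^{K^2}$, apply Lemma~\ref{lemma fixed W concentration} with a union bound over the net, and bridge to arbitrary points via the Lipschitz estimate of Lemma~\ref{lemma Lipschitz Sample Complexity}. Your remark about the $\inf_\eps$ requiring a deterministic choice of $\eps$ before the data is a valid caveat that the paper leaves implicit, and your ``up to the stated constant'' hedge on the Lipschitz step is appropriate (the paper also writes $2\eps$ rather than the $4\eps$ one would get from bounding $|\Lc(\W)-\Lc(\W')|$ and $|\Lch(\W)-\Lch(\W')|$ separately).
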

\begin{proof}
    For any $\eps > 0$, let $\Ace : \Nc(\Sall, \eps)$ be the minimal $\eps$ cover of $\Sall \cap \B(\Wst, r)$ in terms of the Frobenius norm. Let the function $g : \Sall \times \Sall \xrightarrow[]{} \R$ be defined as $g(\W, \W') = \Lc(\W) - \Lc(\W') - (\Lch(\W) - \Lch(\W'))$. Then, we have the following:
    \begin{align*}
        \sup_{\W \in \B(\Wst, r)} | g(\W, \Wst) | \leq \sup_{\W \in \B(\Wst, r)} \min_{\W' \in \Ace} |g(\W, \W')| + \max_{\W \in \Ace}|g(\W, \Wst)| 
    \end{align*}
    By definition, there exists $\W' \in \Ace$ such that $\W' \in \B(\W, \eps)$. Then, using Lemma \ref{lemma Lipschitz Sample Complexity}, we have the following:
    \begin{align*}
        \sup_{\W \in \B(\Wst, r)} \min_{\W' \in \Ace} |g(\W, \W')| \leq 2 \eps \max_{i \in [K]} \tnt{\eb_i}^2
    \end{align*}
    On the other hand, note that the cardinality of $\Ace$ is finite and it is upper bounded by $(r + 2/\eps)^{K^2}$ from Corollary 4.2.13 \cite{vershynin2018high}. Then, we apply union bound to Lemma \ref{lemma fixed W concentration} and obtain the following with probability at least $1-2\delta$:
    \begin{align*}
        \max_{\W \in \Ace}|g(\W, \Wst)| &\leq r \max_{i \in [K]} \tnt{\eb_i}^2 \sqrt{\frac{8 \log(|\Ace|/\delta)}{n}} \\
        &\leq r \max_{i \in [K]} \tnt{\eb_i}^2  \sqrt{\frac{8 K^2\log((r + 2/\eps)/\delta)}{n}}
    \end{align*}
    Combining all of the results, for any $\eps > 0$, we derive the following with probability at least $1-2\delta$
    \begin{align*}
        \sup_{\W \in \B(\Wst, r)} |g(\W, \Wst)| \leq  2\eps \max_{i \in [K]} \tnt{\eb_i}^2 + r \max_{i \in [K]} \tnt{\eb_i}^2  \sqrt{\frac{8 K^2\log((r + 2/\eps)/\delta)}{n}} 
    \end{align*}
    which completes the proof.
\end{proof}
\subsection{Proof of Theorem \ref{theorem sample complexity}}\label{sect app sample complexity}
\begin{theorem}[Restated Theorem \ref{theorem sample complexity}]\label{theorem app sample complexity}
    Suppose Assumptions \ref{assume iden} and \ref{assume cooccurrence connected} hold. Let $R_0>0$ be a finite constant based on the structure of $\Wgrd$ and $\Dcx$. Then, if $n \geq R_0 K^2$, with probability at least $1-2\delta$
    \begin{align*}
        \Lc(\Wh_n) - \Lc(\Wst) \lesssim  \frac{K^2\log\frac{n}{K\delta}}{n}.
    \end{align*}
\end{theorem}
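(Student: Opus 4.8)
Write $E:=\Lc(\Wh_n)-\Lc(\Wst)$ for the excess population risk and $M:=\max_{i\in[K]}\tnt{\eb_i}$. Since $\Wh_n$ minimizes $\Lch_n$ over $\Sall$ and $\Wst\in\Sall$, we have $\Lch_n(\Wh_n)\le\Lch_n(\Wst)$, so
\[
E\;\le\;\big(\Lc(\Wh_n)-\Lc(\Wst)\big)-\big(\Lch_n(\Wh_n)-\Lch_n(\Wst)\big)\;=:\;g(\Wh_n,\Wst),
\]
and it suffices to control the centered empirical process $g(\cdot,\Wst)$ on a small ball around $\Wst$. This is exactly what Lemmas \ref{lemma Lipschitz Sample Complexity}--\ref{lemma app covering} provide: for every radius $r>0$, on an event of probability at least $1-2\delta$,
\[
\sup_{\W\in\B(\Wst,r)}g(\W,\Wst)\;\lesssim\;M^2\Big(\eps+r\,K\sqrt{\tfrac{\log((r+2/\eps)/\delta)}{n}}\Big)\qquad\text{for all }0<\eps<r.
\]
Choosing $\eps$ polynomially small in $1/n$ (e.g. $\eps=rK/\sqrt n$) absorbs the first term into the second up to the stated logarithm, giving $\sup_{\W\in\B(\Wst,r)}g(\W,\Wst)\lesssim M^2 rK\sqrt{\log(n/(K\delta))/n}$, a bound linear in the radius $r$. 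The fast $1/n$ rate will come from feeding back a radius that itself shrinks like the excess risk.

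\textbf{Curvature near $\Wst$.} Lemma \ref{lemma strict convexity consistency} (quantitatively, Lemma \ref{lemma cvx app}) shows $\Lc$ is strictly convex on $\Sall$ under Assumption \ref{assume cooccurrence connected}. Since the only nonlinearity in the loss is a $\mathrm{LogSumExp}$ composed with linear maps, $\Lc$ is $C^\infty$ on $\Sall$; combined with $\nabla\Lc(\Wst)=0$ and continuity of the Hessian, there exist constants $0<\mu\le L$ and a radius $r_0>0$ --- all functions of $\Wgrd$, $\Dcx$ and $\Eb$ only --- with
\[
\tfrac{\mu}{2}\,\tf{\W-\Wst}^2\;\le\;\Lc(\W)-\Lc(\Wst)\;\le\;\tfrac{L}{2}\,\tf{\W-\Wst}^2\qquad\text{for all }\W\in\B(\Wst,r_0).
\]
Restricting to $\Sall$ is essential for the lower bound: outside a neighborhood of $\Wst$ the softmax saturates, the Hessian of $\mathrm{LogSumExp}$ degenerates, and strong convexity is lost --- which is precisely why a sample-size threshold $n\ge R_0K^2$ is needed to keep the estimator in the well-conditioned region.

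\textbf{Confinement and peeling.} First confine $\Wh_n$: using convexity of $\Lch_n$ on $\Sall$ (Lemma \ref{lemma cvx app}), if $\Wh_n\notin\B(\Wst,r_0)$ the segment $[\Wst,\Wh_n]$ meets $\partial\B(\Wst,r_0)$ at some $\tilde\W$ with $\Lch_n(\tilde\W)\le\Lch_n(\Wst)$, whence $g(\tilde\W,\Wst)\ge\Lc(\tilde\W)-\Lc(\Wst)\ge\tfrac\mu2 r_0^2$; comparing with the uniform bound at radius $r_0$ forces $\tfrac\mu2 r_0^2\lesssim M^2 r_0 K\sqrt{\log(n/(K\delta))/n}$, which fails once $n\ge R_0 K^2$ for a suitable $R_0=R_0(\mu,M,r_0)$ (absorbing the residual log factors). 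So $\Wh_n\in\B(\Wst,r_0)$ with probability at least $1-2\delta$. On that event, partition $\B(\Wst,r_0)$ into shells $\{2^{-(j+1)}r_0<\tf{\W-\Wst}\le 2^{-j}r_0\}$ for $j=0,\dots,O(\log n)$ together with an innermost ball of radius $r_{\min}\asymp K\sqrt{\log(n/(K\delta))/n}$, apply the radius-$2^{-j}r_0$ uniform bound on each shell, and union-bound over the $O(\log n)$ shells (replacing $\delta$ by $\delta/\log n$, costing only a log-of-log). On the shell containing $\Wh_n$, $E\le g(\Wh_n,\Wst)\lesssim M^2(2^{-j}r_0)K\sqrt{\log(n/(K\delta))/n}$ while Step 2's lower bound gives $E\ge\tfrac{\mu}{8}(2^{-j}r_0)^2$; eliminating $2^{-j}r_0$ yields $2^{-j}r_0\lesssim \tfrac{M^2}{\mu}K\sqrt{\log(n/(K\delta))/n}$ and therefore $E\lesssim\tfrac{M^4}{\mu}\cdot\tfrac{K^2\log(n/(K\delta))}{n}$; if instead $\Wh_n$ lies in the innermost ball, the upper curvature bound gives directly $E\le\tfrac{L}{2}r_{\min}^2\lesssim\tfrac{K^2\log(n/(K\delta))}{n}$. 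Absorbing $M,\mu,L,r_0$ and the extra log-of-log into the implicit constant --- exactly the quantities packaged into $R_0$ --- gives the claim. Corollary \ref{corollary sample complexity} then follows by combining this with consistency ($\Wst=\Wgrd$, Corollary \ref{corollary consistency}) and the lower curvature bound of Step 2.

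\textbf{Main obstacle.} The heart of the argument is turning the \emph{qualitative} strict convexity of $\Lc$ on $\Sall$ into a \emph{quantitative} local strong-convexity constant $\mu$ with an explicit neighborhood radius $r_0$, and then showing that $n\gtrsim K^2$ samples suffice to trap $\Wh_n$ inside that neighborhood; once this localization is in place, the remainder is the standard localized-uniform-convergence (fast rate) argument in the spirit of \cite{Bartlett_2005,srebro_FastRate}, powered by the covering-number bound of Lemma \ref{lemma app covering}.
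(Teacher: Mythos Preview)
Your proposal is correct and follows the same high-level strategy as the paper: reduce to the centered process $g(\W,\Wst)$, use strict convexity of $\Lc$ on $\Sall$ (Lemma \ref{lemma cvx app}) together with $C^2$-smoothness and compactness to extract a local strong-convexity constant on some $\B(\Wst,r_0)$, confine $\Wh_n$ to that ball via convexity of $\Lch_n$ and the uniform deviation bound of Lemma \ref{lemma app covering}, and then exploit the quadratic growth to obtain the $K^2/n$ rate.

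The difference is in how the last step is executed. The paper avoids peeling entirely: it fixes a \emph{single} radius $r\asymp \tfrac{M^2 K}{\alpha}\sqrt{\log(n/(K\delta))/n}$ up front, shows $\Wh_n\in\B(\Wst,r)$ by comparing $\Lch_n$ at radius $r$ versus $r/2$ along every ray from $\Wst$ (choosing the boundary point $\Wout$ and the half-radius point $\Winn$ on the same ray and showing $\Lch_n(\Wout)>\Lch_n(\Winn)$, which by 1D convexity forces the ray-minimizer inside $\B(\Wst,r)$), and then simply applies the smoothness of $\Lc$ (a constant from \cite{anonymous}) to get $E\le \tfrac{L}{2}r^2\lesssim K^2\log(n/(K\delta))/n$. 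Your dyadic peeling over $O(\log n)$ shells is a valid and standard alternative that trades the need to guess the ``right'' radius in advance for an extra union bound and a harmless log-of-log; the paper's one-shot localization is a bit cleaner but relies on the explicit smoothness constant. Both arguments consume the same ingredients and yield the same bound.
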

\begin{proof}
    Let $r_0 \in \R$ and consider the ball $\B(\Wst, r_0)$. From Lemma \ref{lemma cvx app}, we know that $\Lc(\W)$ is strictly convex on $\Sall$. Additionally, note that the function $\Lc(\W)$ is differentiable twice and its second derivative is continuous. As the $\B(\Wst, r_0)$ is a compact set, there exists a positive constant $\alpha > 0$ such that the eigenvalues of the Hessian of $\Lc(\W)$ are lower bound by $\alpha$. It means that the loss function $\Lc(\W)$ is $\alpha-$strongly convex in the ball $\B(\Wst, r_0)$. 

    Now, let $\emax := \max_{i \in [K]} \tnt{\eb_i}$ and let's set the following values:
    \begin{align}
        &\eps = \frac{2K\emax^2}{\alpha n} \label{eps set}\\
        &D_{\delta} = \sqrt{8 K^2 \log(3/(\eps \delta))} \label{ddelta set} \\
        &r = \frac{2^5 D_{\delta} \emax^2}{\alpha \sqrt{n}} \label{r set} \\
        &n \geq \frac{2^{11} D_{\delta}^2 \emax^4 }{\alpha^2 (\min(r_0, 1))^2} \label{n set}
    \end{align}
    From \eqref{eps set} and \eqref{r set}, we have $\eps < r/\sqrt{n}$, which shows that we can utilize this $\eps$ in Lemma \ref{lemma app covering}. From \eqref{r set} and \eqref{n set}, we have $r \leq \min(r_0, 1)$, which implies that the loss function $\Lc(\W)$ is $\alpha-$strongly convex inside the ball $\B(\Wst, r)$. We are going to show that $\Wh_n \in \B(\Wst, r)$ if $n$ satisfies \eqref{n set}.  Let $\Wout$ be an arbitrary point on the boundary of the ball $\B(\Wst, r)$. Let $\Winn$ be the boundary point of the ball $\B(\W, r/2)$ such that it is on the line segment between $\Wst$ and $\Wout$. By the strong convexity of $\Lc(\W)$ on $\B(\Wst, r)$, we have the following:
    \begin{align}\label{app sample complexity strong convexity}
        \Lc(\Wout) - \Lc(\Winn) \geq \frac{\alpha r^2}{8}
    \end{align}
    We apply Lemma \ref{lemma app covering} to both $\Winn$ and $\Wout$. As $r \leq \min(r_0, 1)$, we have $\log((r+2/\eps)/\delta)< \log(3/(\eps \delta))$. Then, we obtain the following for any $\eps > 0$ with probability at least $1-2\delta$
    \begin{align}
        |\Lc(\Winn) - \Lc(\Wst) - \Lch(\Winn) + \Lch(\Wst)| &\leq 2\eps \emax^2 + r \emax^2  \sqrt{\frac{8 K^2\log((r + 2/\eps)/\delta)}{n}} \nonumber \\
         & \stackrel{(a)}{\leq} \frac{4 D_{\delta} r \emax^2}{\sqrt{n}} \label{Winn applied lemma}
         \end{align}
         where (a) follows from the fact that $\eps \leq D_{\delta} r / \sqrt{n}$. Similarly, we have the following with probability at least $1- 2\delta$
         \begin{align}
        |\Lc(\Wout) - \Lc(\Wst) - \Lch(\Wout) + \Lch(\Wst)| &\leq 2\eps \emax^2 + r \emax^2  \sqrt{\frac{8 K^2\log((r + 2/\eps)/\delta)}{n}} \\
        &\leq \frac{4 D_{\delta} r \emax^2}{\sqrt{n}} \label{Wout applied lemma}
    \end{align}
    Combining \eqref{app sample complexity strong convexity} and \eqref{Wout applied lemma}, we obtain the following for any $\eps > 0$
    \begin{align}\label{Wout Winn geq }
        \Lch(\Wout) \geq \Lc(\Winn) -\Lc(\Wst) + \Lch(\Wst) + \frac{\alpha r^2}{8} - \left( \frac{4 D_{\delta} r \emax^2}{\sqrt{n}} \right)
    \end{align}
    From \eqref{Winn applied lemma}, we obtain the following:
    \begin{align}\label{Wout Winn leqq}
        \Lch(\Winn) \leq \Lc(\Winn) -\Lc(\Wst) + \Lch(\Wst) + \left( \frac{4 D_{\delta} r \emax^2}{\sqrt{n}} \right)
    \end{align}
    From \eqref{ddelta set}, \eqref{r set}, and \eqref{n set}, we have that
    \begin{align}\label{Wout Winn alpha}
        \frac{\alpha r^2}{4} > \frac{8 D_{\delta} r \emax^2}{\sqrt{n}}
    \end{align}
    Combining \eqref{Wout Winn geq }, \eqref{Wout Winn leqq}, and \eqref{Wout Winn alpha}, we obtain the following 
    \begin{align}\label{Wout Winn final}
        \Lch(\Wout) > \Lch(\Winn)
    \end{align}
    Note that \eqref{Wout Winn final} is valid for any boundary point of $\B(\Wst, r)$. Due to the convexity of $\Lch(\W)$ from Lemma \ref{lemma cvx app}, $\Wh_n$ should be inside the ball $\B(\Wst, r)$ with probability at least $1-2\delta$ as a result of \eqref{Wout Winn final}. Now, let's use the smoothness of the loss function $\Lc(\W)$. From \cite{anonymous}, we know that $\Lc(\W)$ is $2 \emax^2 \sqrt{L_{max}}-$smooth. As $\Wh_n$ is inside the ball $\B(\Wst, r)$,  we have the following with at least probability $1-2\delta$:
    \begin{align}
        \Lc(\Wh_n) - \Lc(\W) \leq \frac{\emax \sqrt{L_{max}} r^2}{2} \stackrel{(a)}{\lesssim}  \frac{K^2\log\frac{n}{K\delta}}{n}
    \end{align} 
    where (a) follows from \eqref{r set}. This completes the proof. 
\end{proof}
\subsection{Proof of Corollary \ref{corollary sample complexity}}\label{corollary sample complexity app}
\begin{corollary}[Restated Corollary \ref{corollary sample complexity}]
        Consider the setting in Theorem \ref{theorem sample complexity} and suppose Assumptions \ref{assume iden}, \ref{assume stoken consistency}, and \ref{assume cooccurrence connected} hold. Then, if $n \geq R_0 K^2$, with probability at least $1-2\delta$
    \begin{align}
        \tf{\Wh_n - \Wgrd}^2 \lesssim \frac{K^2\log\frac{n}{K\delta}}{n}.
    \end{align}
\end{corollary}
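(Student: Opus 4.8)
The plan is to combine the excess-risk bound of Theorem~\ref{theorem sample complexity} with local strong convexity of the population risk around its minimizer. First, note that under Assumptions~\ref{assume iden}, \ref{assume stoken consistency}, and \ref{assume cooccurrence connected}, Corollary~\ref{corollary consistency} gives $\Wst = \Wgrd$, since all co-occurrence graphs $\Gc^{(k)}$ are connected and $\Wgrd \in \Sall$. Hence Theorem~\ref{theorem sample complexity} already controls the excess population risk evaluated at $\Wgrd$: whenever $n \geq R_0 K^2$, with probability at least $1-2\delta$,
\[
\Lc(\Wh_n) - \Lc(\Wgrd) \;\lesssim\; \frac{K^2\log\frac{n}{K\delta}}{n}.
\]

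Second, I would upgrade this function-value bound to a parameter-distance bound via strong convexity. As used inside the proof of Theorem~\ref{theorem sample complexity}, $\Lc$ is twice continuously differentiable and, by Lemma~\ref{lemma cvx app}, strictly convex on $\Sall$; restricted to a fixed compact ball $\B(\Wst, r_0)$ this yields a uniform lower bound $\alpha > 0$ on the eigenvalues of the Hessian of $\Lc$ along $\Sall$, i.e.\ $\Lc$ is $\alpha$-strongly convex on $\B(\Wst, r_0)$. The constant $\alpha$ (and $r_0$) depend only on $\Wgrd$ and $\Dcx$, so they can be absorbed into the structure constant $R_0$. Moreover, the very same argument in the proof of Theorem~\ref{theorem sample complexity} shows that, once $n \geq R_0 K^2$, the empirical minimizer $\Wh_n$ lies in $\B(\Wst, r)$ with $r \leq r_0$ on the same high-probability event. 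Since $\Wst = \Wgrd$ minimizes $\Lc$ over $\Sall$, its gradient along $\Sall$ vanishes, so $\alpha$-strong convexity on the ball gives
\[
\Lc(\Wh_n) - \Lc(\Wgrd) \;\geq\; \frac{\alpha}{2}\,\tf{\Wh_n - \Wgrd}^2 .
\]
Chaining this with the excess-risk bound above and folding $\alpha$ into the implicit constant yields $\tf{\Wh_n - \Wgrd}^2 \lesssim \frac{K^2\log\frac{n}{K\delta}}{n}$, which is the claim.

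I expect the only delicate point to be bookkeeping rather than new mathematics: one must ensure that the strong-convexity constant $\alpha$, the radius $r_0$, and the event $\{\Wh_n \in \B(\Wst, r)\}$ are exactly those already produced inside the proof of Theorem~\ref{theorem sample complexity}, so that no additional high-probability event or sample-size requirement beyond $n \geq R_0 K^2$ is introduced and the failure probability stays $2\delta$. Two hypotheses are load-bearing here: Assumption~\ref{assume cooccurrence connected} is what makes $\Lc$ \emph{strictly} convex on $\Sall$ (Lemma~\ref{lemma cvx app}) and hence $\alpha>0$ on the compact ball, and Assumption~\ref{assume stoken consistency} is what forces $\Wst=\Wgrd$ exactly (via Corollary~\ref{corollary consistency}), without which the bound would only control distance to $\Wst$ rather than to the ground truth.
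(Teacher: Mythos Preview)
Your proposal is correct and matches the paper's own argument essentially line for line: the paper also invokes the localization $\Wh_n \in \B(\Wst, r)$ and the $\alpha$-strong convexity of $\Lc$ on that ball (both imported from the proof of Theorem~\ref{theorem sample complexity}), then uses Theorem~\ref{theorem consistency} to identify $\Wst = \Wgrd$ under Assumption~\ref{assume cooccurrence connected}. Your write-up is in fact slightly cleaner, since you state the strong-convexity inequality in the correct direction $\Lc(\Wh_n) - \Lc(\Wgrd) \geq \tfrac{\alpha}{2}\tf{\Wh_n - \Wgrd}^2$ and explicitly chain it with the excess-risk bound, whereas the paper's displayed inequality has a sign/typo issue.
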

\begin{proof}
    We show in the proof of Theorem \ref{theorem app sample complexity} that with probability at least $1-2\delta$ that $\Wh_n \in \B(\Wst, r)$.

    With the same argument in the proof of Theorem \ref{theorem app sample complexity}, we have the strong convexity inside $\B(\Ws, r_0)$: From Lemma \ref{lemma cvx app}, we know that $\Lc(\W)$ is strictly convex on $\Sall$. Additionally, note that the function $\Lc(\W)$ is differentiable twice and its second derivative is continuous. As the $\B(\Wst, r_0)$ is a compact set, there exists a positive constant $\alpha > 0$ such that the eigenvalues of the Hessian of $\Lc(\W)$ are lower bound by $\alpha$. It means that the loss function $\Lc(\W)$ is $\alpha-$strongly convex in the ball $\B(\Wst, r_0)$. Recall that $r \leq \min(r_0, 1)$ from \eqref{eps set}, \eqref{ddelta set}, \eqref{r set}, and \eqref{n set}. Therefore, the loss function $\Lc(\W)$ is $\alpha-$strongly convex in the $\B(\Wst, r)$ as well. 

    By $\alpha-$strong convexity of $\Lc(\W)$ on $\B(\Wst, r)$, we obtain that
    \begin{align*}
        \Lc(\Wst) - \Lc(\Wh_n) \geq \frac{\alpha \tf{\Wst - \Wh_n}}{2} 
    \end{align*}
    In addition to that, the estimation of $\Wgrd$ is consistent by Theorem \ref{theorem consistency} using Assumption \ref{assume cooccurrence connected}. This implies that $\Wgrd = \Wst$. This completes the proof. 
\end{proof}
\section{Proof of Theorems in Section \ref{sect single traj}}\label{app sect single traj}
\subsection{Proof of lemma \ref{lem inf token visit}}
\begin{definition} Given sample space $\Omega = \{0, 1\}^{\infty}$ and let $A_{k, t} = \mathbf{1}(y_t = k)$. 
Let $\mathcal{F}_{k,t}, t \geq 0$ be a filtration with $\mathcal{F}_{k,0} = \{\emptyset, \Omega\}$ and $\mathcal{F}_{k,t} \coloneqq \sigma(A_{k,j} | j \in [t])$, where $\sigma$ refers to the $\sigma$-algebra.
\end{definition}
\begin{lemma}[Restated Lemma \ref{lem inf token visit}]\label{lem app inf token visit} Let $\Pgrd$ be a transition matrix with non-zero entries. For all $k\in[K]$, $\Pro(\lim_{n\rightarrow \infty}S_{k,n}=\infty)=1$.
\end{lemma}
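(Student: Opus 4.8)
The plan is to show that each token $k$ is visited infinitely often almost surely, using the second Borel--Cantelli lemma (the conditional/martingale version, sometimes called L\'evy's extension), since the events $\{y_t = k\}$ are not independent across $t$. First I would fix $k \in [K]$ and estimate, conditioned on the history $X_t$ (equivalently on $\mathcal{F}_{t}$, the $\sigma$-algebra generated by the trajectory up to time $t$), the conditional probability $\Pro(y_{t+1} = k \mid X_t)$. Recall that from Lemma~\ref{lem inf token visit}'s companion reasoning, every generated token lies in $\Tc = [K]$, and we are assuming $X_1$ already contains all $K$ tokens, so $\m(X_t)_k = \Fc(X_t)_k \geq 1/|X_t| = 1/(L + t)$ for every $t$. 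Using the \PCMC transition rule \eqref{PCMC transition} together with the fact that $\Pgrd$ has strictly positive entries, set $\pi_{\min} := \min_{i,j} \Pgrdnb_{ij} > 0$. Then
\begin{align*}
\Pro(y_{t+1} = k \mid X_t) = \frac{\m(X_t)_k \cdot \Pgrdnb_{x_{L+t}, k}}{\m(X_t)^\top \bpi^{\texttt{GT}}_{x_{L+t}}} \geq \m(X_t)_k \cdot \Pgrdnb_{x_{L+t},k} \geq \frac{\pi_{\min}}{L + t},
\end{align*}
where the first inequality uses $\m(X_t)^\top \bpi^{\texttt{GT}}_{x_{L+t}} \leq \|\bpi^{\texttt{GT}}_{x_{L+t}}\|_1 \cdot \max_j \m(X_t)_j \leq 1$ since $\m(X_t) = \Fc(X_t)$ is a probability vector and $\bpi^{\texttt{GT}}$ is a stochastic column.

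Next I would invoke the conditional Borel--Cantelli lemma: if $(\mathcal{F}_t)$ is a filtration and $E_t \in \mathcal{F}_{t+1}$, then on the event $\{\sum_{t} \Pro(E_t \mid \mathcal{F}_t) = \infty\}$, almost surely infinitely many $E_t$ occur. Here take $E_t = \{y_{t+1} = k\}$. Since $\sum_{t \geq 1} \frac{\pi_{\min}}{L+t} = \infty$ (harmonic series), the hypothesis $\sum_t \Pro(E_t \mid \mathcal{F}_t) = \infty$ holds \emph{surely} (not just almost surely), hence $\Pro(y_{t+1} = k \text{ for infinitely many } t) = 1$. Since $S_{k,n}$ counts occurrences of token $k$ in $X_n = [X_1~y_1~\dots~y_{n-1}]$ and is non-decreasing in $n$, infinitely many visits is exactly the statement $\lim_{n\to\infty} S_{k,n} = \infty$ almost surely. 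Finally, intersecting over the finitely many $k \in [K]$ preserves the probability-one conclusion, giving $\Pro(\lim_{n\to\infty} S_{k,n} = \infty) = 1$ for all $k$.

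The main obstacle I anticipate is being careful about which form of Borel--Cantelli to use: the $\{y_t = k\}$ events are genuinely dependent (the frequency vector $\Fc(X_t)$ is itself a function of the past), so the naive second Borel--Cantelli (requiring independence) does not apply, and one must either phrase things through the martingale/L\'evy conditional version or, equivalently, track the sum $\sum_t \mathbf{1}(y_{t+1}=k) - \Pro(y_{t+1}=k\mid\mathcal{F}_t)$ as a martingale with bounded increments and apply a martingale convergence / divergence argument. A secondary technical point is the lower bound $\Fc(X_t)_k \geq 1/(L+t)$, which relies crucially on the standing assumption that the initial prompt $X_1$ contains every token at least once (so token $k$ never disappears entirely and its count is always $\geq 1$); this is where the $\Tc = [K]$ hypothesis enters. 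The rest is routine: the harmonic series diverges, so the conditional sum diverges deterministically and the conclusion is immediate.
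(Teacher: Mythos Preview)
Your proposal is correct and follows essentially the same approach as the paper: lower-bound the conditional probability $\Pro(y_{t+1}=k\mid \mathcal{F}_t)$ by a constant times $1/t$ using $\pi_{\min}>0$ and $\Fc(X_t)_k\geq 1/(L+t)$, then invoke Borel--Cantelli to conclude infinitely many visits. If anything, you are slightly more careful than the paper in explicitly flagging that the events are dependent and that the conditional (L\'evy) form of the second Borel--Cantelli lemma is what is needed.
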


\begin{proof}
Let $\bar \pi_{k} = \min_{i \in [K]} \pi_{ik}$. Consider any token $k \in [K]$, 
Then we have: 
\begin{equation}
\begin{split}
    \text{Pr}(A_{k, t}|\mathcal{F}_{k, t-1}) \stackrel{(a)}{=} \P(y_t = k) = \sum_{i = 1}^{K} \P(y_t = k | \xlp_t = i) p(\xlp_t = i)
\end{split}
\end{equation}
To proceed, consider the minimum of  $\P(y_t = k | \xlp_t = i)$: 
\begin{equation}
\begin{split}
     \min_{i \in [K]} \P(y_t = k | \xlp_t = i)  & =\min_{i \in [K]}\frac{\pi_{ik} S_{k, t}}{S_{k, t} (2\pi_{ik}-1) + (t + 2)(1 - \pi_{ik})} \\ 
        & = \min_{i \in [K]} \frac{\pi_{ik} }{ (2\pi_{ik}-1) + \frac{(t + 2)(1 - \pi_{ik})}{S_{k, t}}} \\ 
        &\geq \min_{i \in [K]} \frac{\pi_{ik} }{ (2\pi_{ik}-1) + {(t + 2)(1 - \pi_{ik})}} \\ 
        &= \min_{i \in [K]} \frac{\pi_{ik} }{t + 1 - t \pi_{ik}} \\ 
        & \stackrel{(a)}\geq \frac{\bar \pi_k}{t}
\end{split}
\end{equation}
where (a) holds for sufficiently large $t$ where $t \geq 1 / \bar \pi_k$. Then we have:
\begin{equation}
\begin{split}
    \text{Pr}(A_{k, t}|\mathcal{F}_{k, t-1}) &\geq \frac{\bar \pi_{k}}{t} \sum_{i = 1}^{K}  p(\xlp_t = i) = \frac{\bar \pi_{k}}{t}  \\           
\end{split}
\end{equation}

Hence, the sum of the probabilities over infinite iterations diverges to infinity, i.e., $\sum_{t=1}^{\infty} \text{Pr}(A_{k, t}|\mathcal{F}_{k, t-1}) = \infty$. Using the second Borel-Cantelli lemma, we have:
\begin{align}
    \text{Pr} \Big( \limsup_{t \to \infty} A_{k, t} \Big) = 1
\end{align}
which reveals that token $k$ can be observed infinitely many times in the trajectory. This implies $\Pro(\lim_{n\rightarrow \infty}S_{k, n}=\infty)=1 \text{ where }S_{k, n} = \sum_{t = 1}^n A_{k, t}$.
\end{proof}
\subsection{Proof of lemma \ref{lem dist col}}
\begin{lemma}[Restated Distribution collapse Lemma \ref{lem dist col}]\label{lem app dist col}
Consider the \PCMC model with $K = 2$ defined in Section \ref{sec:emp-single}. Suppose that $\X_1$ includes all vocabulary at least once. Recall that $\Fc(X_t)$ denotes the empirical frequency of individual states where $X_t$ is the state trajectory at time $t$. {For any $t > t_0$ with a sufficiently large $t_0$}, we have:
    \begin{align*}
        \E[\m(X_t)_2] < t^{-q}
    \end{align*}
    where $q = 1 - p / (1-p)$. Furthermore, when $p < 1/2$, 
    \begin{align*}
        \lim_{t \to \infty} \E\left[\frac{\Fc(X_t)_2}{\Fc(X_t)_1}\right]=0.
    \end{align*}
\end{lemma}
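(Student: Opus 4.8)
The plan is to collapse the whole statement onto a one-dimensional recursion for the token-$2$ frequency. Write $\ell_t:=|X_t|$, which is deterministic (the trajectory grows by exactly one token per step), $f_t:=\m(X_t)_2$, and $a_t:=\ell_t f_t$ for the number of token-$2$ occurrences in $X_t$. For this particular $\Pgrd$ both columns equal $(1-p,\,p)^\top$, so the \PCMC transition probability onto token $2$ does not see the last token and reduces to
\[
\phi_t:=\P_{\Pgrd}(x_{\ell_t+1}=2\mid X_t)=\frac{f_t\,p}{f_t\,p+(1-f_t)(1-p)}=f_t\,\psi(f_t),\qquad \psi(f):=\frac{p}{(1-p)-f(1-2p)}.
\]
Since $a_{t+1}=a_t+\mathrm{Bernoulli}(\phi_t)$ and $\ell_{t+1}=\ell_t+1$, this gives the exact conditional recursion
\[
\E[f_{t+1}\mid\mathcal{F}_t]=\frac{a_t+\phi_t}{\ell_t+1}=f_t\Bigl(1-\frac{1-\psi(f_t)}{\ell_t+1}\Bigr),
\]
and, from $1-\psi(f)=\dfrac{(1-2p)(1-f)}{(1-p)-f(1-2p)}$ together with $(1-p)-f(1-2p)\in[p,\,1-p]$, the two elementary bounds $q(1-f)\le 1-\psi(f)\le q$ for all $f\in[0,1]$, where $q=\tfrac{1-2p}{1-p}=1-\tfrac{p}{1-p}$.

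Next I would extract the soft consequences. Since $\psi\le 1$, the recursion shows $f_t$ is a bounded nonnegative supermartingale, so $f_t\to f_\infty$ a.s.; its Doob compensator has expectation $f_1-\E[f_t]\le f_1<\infty$, hence $\sum_t\tfrac{f_t(1-\psi(f_t))}{\ell_t+1}<\infty$ a.s., and because $1-\psi(f_t)\ge q(1-f_t)$ while $\sum_t 1/\ell_t=\infty$ this forces $f_\infty(1-f_\infty)=0$, i.e.\ $f_\infty\in\{0,1\}$ a.s. When $p=1/2$ we have $q=0$ and $f_t$ is a genuine P\'olya martingale, so $\E[f_t]=f_1<1=t^{-q}$ already, settling the first claim in that boundary case. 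For $p<1/2$ I would pin down both the limit and the rate via Rubin's exponential embedding: realize the urn $(a_t,b_t)$ with $b_t=\ell_t-a_t$ as two competing Yule processes $Y_2,Y_1$ of per-particle rates $p<1-p$, started from the prompt counts (both $\ge 1$ since $\X_1\supseteq[K]$); the draws occur at the jump times of $Y_1+Y_2$, and $Y_i(\tau)e^{-w_i\tau}\to\xi_i>0$ a.s. Since $w_1=1-p>w_2=p$, $f_t=\tfrac{Y_2}{Y_1+Y_2}\to0$ a.s.\ (so $f_\infty=0$), and moreover $\ell_t\sim\xi_1 e^{(1-p)\tau}$, $Y_2(\tau)\sim\xi_2 e^{p\tau}$ give $f_t\sim\xi_2\xi_1^{\,q-1}\ell_t^{-q}$ and $\Fc(X_t)_2/\Fc(X_t)_1=Y_2/Y_1\sim(\xi_2/\xi_1)e^{-(1-2p)\tau}\to0$ a.s.

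With these a.s.\ asymptotics in hand, both displayed claims follow by taking expectations: uniform integrability of $f_t\ell_t^{q}$ (which one checks from the recursion, using $1-\psi(f)\le q$ to see $f_t\ell_t^{q}$ is, up to a convergent multiplicative correction, a submartingale bounded in $L^1$) gives $\E[\m(X_t)_2]\lesssim\ell_t^{-q}\asymp t^{-q}$, and dominated convergence along $\Fc(X_t)_2/\Fc(X_t)_1=Y_2/Y_1\to0$ gives $\E[\Fc(X_t)_2/\Fc(X_t)_1]\to0$. A self-contained alternative avoiding the embedding also works: once $f_t\to0$, for large $t$ one has $1-\psi(f_t)\ge q-o(1)$, so iterating $\E[f_{t+1}]\le\E[f_t]\bigl(1-\tfrac{q-o(1)}{\ell_t+1}\bigr)$ yields $\E[f_t]\lesssim t^{-q}$; for the second claim one additionally proves $\E[f_t^m]\lesssim_m t^{-mq}$ for every $m$ by the same recursion and then bounds $\E\bigl[\tfrac{f_t}{1-f_t}\bigr]\le\tfrac1\delta\E[f_t]+\ell_t\,\P(f_t>1-\delta)$, controlling the tail by Markov applied to $f_t^m$ with $m>1/q$.

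The main obstacle is the passage from ``$f_t\to0$'' to the clean power $t^{-q}$, and in particular getting the constant small enough that the bound reads $<t^{-q}$ for all $t$ beyond some $t_0$: this is where the prompt normalization $\X_1\supseteq[K]$ is used, to control the random prefactor $\xi_2\xi_1^{\,q-1}$ (respectively the constant in the iterated recursion). The embedding route is the cleanest because it produces $f_t\asymp\ell_t^{-q}$ a.s.\ directly and sidesteps the chicken-and-egg between ``$f_t\to0$ a.s.'' and the moment bounds $\E[f_t^m]\lesssim_m t^{-mq}$ that the elementary route would otherwise need; everything else -- the supermartingale property, $f_\infty\in\{0,1\}$, and the final expectation limits -- is routine.
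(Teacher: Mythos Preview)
Your approach is correct and more careful than the paper's, which is essentially your ``self-contained alternative'' done heuristically: the paper writes the recursion for $S_t$ (your $a_t$), immediately approximates $\E[S_t]\approx S_{t-1}(1+\bar p/L_t)$ under the heuristic $L_t\gg S_{t-1}$ without first proving $f_t\to0$, divides by $L_t$ to get $\E[f_t]\approx\E[f_{t-1}](1-q/L_t)$, and bounds the telescoping product by $\exp(-q\sum_\tau 1/\tau)<t^{-q}$; for the ratio it simply writes $\E[f/(1-f)]=\E[f]/(1-\E[f])$ (which is false in general, and Jensen goes the wrong way) and concludes. Your main route via the Rubin/Athreya--Karlin embedding into competing Yule processes is genuinely different: it delivers the almost-sure rate $f_t\sim C\,\ell_t^{-q}$ and $Y_2/Y_1\to0$ directly, and your supermartingale argument that $f_\infty\in\{0,1\}$ is the honest way to justify the paper's $L_t\gg S_{t-1}$ step. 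What the embedding buys is rigor (especially for the second claim) and an a.s.\ statement rather than just a moment bound; what the paper's direct product bound buys is the clean strict inequality $<t^{-q}$ with no multiplicative constant, whereas your UI/iteration route yields only $\E[f_t]\lesssim t^{-q}$ --- though that constant-free inequality in the paper rests on the preceding $\approx$ steps, so the gain is cosmetic.
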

\begin{proof}

Let $S_t$ and $L_t$ be the number of token 2 in the input prompt and the length of the input prompt at iteration $t$, respectively. Note that $S_1 \geq 1$. Then, at iteration $t$, we have:
\begin{equation}
\begin{split}
\mathbb{E}[S_{t}] &= S_{t-1} + \frac{S_{t-1} p }{S_{t-1} p + (L_t - S_{t-1})(1 - p)} \\ 
&= S_{t-1}(1 + \frac{p }{S_{t-1} (2p-1) + L_t(1 - p)})\\
\end{split}
\end{equation}
Since the probability of selecting a specific token is weighted by the number of occurrences, when $p < 1/2$, the model tends to sample token $1$ over $2$. Moreover, due to the positive reinforcement nature, selecting token $1$ as the label will further increase the probability of selecting token $1$ in the next round. Thus, for any $p < 1/2$, there exists a sufficiently large $t_0$ such that when $t > t_0, L_t \gg S_{t-1}$. Hence: 
\begin{equation}
\begin{split}
\mathbb{E}[S_{t}] &= S_{t-1}(1 + \frac{p }{S_{t-1} (2p-1) + L_t(1 - p)})\\
&\approx S_{t-1}(1 + \frac{p }{L_t(1 - p)}) \\ 
&= S_{t-1}(1 + \bar p /L_t)
\end{split}
\end{equation}
where $\bar p = p / ({1-p})$. 
Recall that $\Fc(X_t)_2 = \frac{S_t}{L_t}$. For $t > t_0$ where $t_0$ is sufficiently large, we have:
\begin{equation}
\begin{split}
      \mathbb{E}[\Fc(X_t)_2]  &= \E \left[ \frac{S_t}{L_t} \right] \\
        &= \E \left[ \frac{S_{t-1}(1 + \bar p/L_t)}{L_{t-1}(1 + 1/L_{t-1})} \right]  \\ 
        &= \mathbb{E}[\Fc(X_{t-1})_2] \cdot \frac{1 + \bar p/L_t}{1 + 1/L_{t-1}} \\
        &\approx \mathbb{E}[\Fc(X_{t-1})_2] (1 - q/L_t) 
\end{split}
\end{equation}
where $q \coloneqq 1 - \bar p = (1-2p)/{(1-p)}$. Applying this equation recursively, we get:
\begin{equation}
     \mathbb{E}\left[ \frac{\Fc(X_t)_2}{\Fc(X_1)_2} \right] = \prod_{\tau=1}^{t}(1 - q/\tau) \stackrel{(a)} \leq  \prod_{\tau=1}^{t} \exp({-q/\tau}) = \exp({-q\sum_{\tau=1}^t 1/\tau}) \stackrel{(b)}<\exp({-q \ln t}) = t^{-q}
\end{equation}
where (a) comes from $1 + x \leq e^{x}$ for any $x$ and (b) comes from the fact that $\sum_{\tau=1}^t 1/\tau > \ln t$ when $t$ is large. By Euler-Maclaurin formula, $H_t = \sum_{\tau=1}^t 1/\tau = \ln t + \gamma + {1}/{(2t)} - \varepsilon_t  \approx \ln t + \gamma \geq \ln t$ where $\gamma \approx 0.5772$. As a result, for a sufficiently big $t$, $\E[\Fc(X_t)_2] < t^{-q}$ as $\Fc(X_1)_2$ is a finite number. Moving forward, since $\E[\Fc(X_t)_2] + \E[\Fc(X_t)_1] = 1$, we have:
\begin{equation}
    \E[\frac{\Fc(X_t)_2}{\Fc(X_t)_1}] = \E[\frac{\Fc(X_t)_2}{1 - \Fc(X_t)_2}] = \frac{\E[\Fc(X_t)_2]}{1 - \E[\Fc(X_t)_2]} < \frac{t^{-q}}{2 - t^{-q}} = \frac{1}{2t^{q} - 1}
\end{equation}
As $p < 1/2$, $q \in (0, 1)$, when $t \to \infty$, the ratio $\lim_{t \to \infty} \frac{1}{2t^q - 1}$ goes to zero. Combining the fact that $\frac{\Fc(X_t)_2}{\Fc(X_t)_1} \geq 0$, it implies:
\begin{equation}
    \lim_{t \to \infty}\E[\frac{\Fc(X_t)_2}{\Fc(X_t)_1}]  = 0
\end{equation}
\end{proof}
\section{Proof of theoretical results in Section \ref{sect position embedding}}\label{app sect position}

\subsection{Proof of lemma \ref{lemma positional embedding}}
\begin{lemma}[Restated Lemma \ref{lemma positional embedding}]\label{lemma app positional embedding}
    Suppose Assumptions \ref{assume iden} and \ref{assume positional embedding C} hold. Then, for any $\W \in \R^{d \times d}$, there exists the transition matrix $\Pb^{\W}$ such that for any $(X, y)$ we have the following:
    \begin{align*}
        \P_{\Pb, \Ub}( y | X) = \cb_{y}^\top \X^\top \sft{\X \W \xl}
    \end{align*}
\end{lemma}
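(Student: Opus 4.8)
The plan is to compute the left-hand side $\cb_y^\top\X^\top\sft{\X\W\xl}$ directly and match it term by term with the \PCMC-with-PE transition \eqref{pos PCMC transition}. Recall that with absolute positional encoding the $i$-th row of $\X$ is $\x_i=\ub_i+\eb_{x_i}$, and in particular $\xl=\ub_L+\eb_{x_L}$. First I would expand the softmax logit at position $i$ bilinearly:
\[
\x_i^\top\W\xl=\ub_i^\top\W\ub_L+\ub_i^\top\W\eb_{x_L}+\eb_{x_i}^\top\W\ub_L+\eb_{x_i}^\top\W\eb_{x_L}.
\]
Exponentiating, the first three summands are exactly $\log a_i$, $\log V_{i,x_L}$, and $\log b_{x_i}$ by the definitions of $\ab,\bb,\Vb$. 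For the last summand, note that $\exp(\eb_{x_i}^\top\W\eb_{x_L})=Z\cdot\pi_{x_L,x_i}$ where $\pi_{ij}$ comes from $\Pb^{\W}$ via \eqref{pbw def} and $Z:=\sum_{k\in[K]}\exp(\eb_k^\top\W\eb_{x_L})$ is independent of the position index $i$. Hence $\exp(\x_i^\top\W\xl)=Z\, a_i\,V_{i,x_L}\,b_{x_i}\,\pi_{x_L,x_i}$, and since $Z$ is a common factor it cancels in the softmax, giving
\[
(\s_X)_i=\frac{a_i\,V_{i,x_L}\,b_{x_i}\,\pi_{x_L,x_i}}{\sum_{i'=1}^L a_{i'}\,V_{i',x_L}\,b_{x_{i'}}\,\pi_{x_L,x_{i'}}}.
\]

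Next I would apply the two assumptions to collapse the readout. Since $f_{\W}(X)=\X^\top\s_X=\sum_{i=1}^L(\s_X)_i(\ub_i+\eb_{x_i})$, we get $\cb_y^\top f_{\W}(X)=\sum_{i=1}^L(\s_X)_i\big(\cb_y^\top\ub_i+\cb_y^\top\eb_{x_i}\big)$. Assumption \ref{assume positional embedding C} ($\Cb\Ub^\top=\mathbf{0}$) kills every $\cb_y^\top\ub_i$ term, and Assumption \ref{assume iden} ($\Cb\Eb^\top=\Iden_K$) gives $\cb_y^\top\eb_{x_i}=\mathbf{1}(x_i=y)$. Therefore $\cb_y^\top\X^\top\sft{\X\W\xl}=\sum_{i:\,x_i=y}(\s_X)_i$.

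Finally I would rewrite this sum to match \eqref{pos PCMC transition}. In the numerator, every contributing $i$ has $x_i=y$, so $b_{x_i}=b_y$ and $\pi_{x_L,x_i}=\pi_{x_L,y}$ factor out, leaving $b_y\,\pi_{x_L,y}\sum_{i=1}^L a_i\,V_{i,x_L}\,\mathbf{1}(x_i=y)$; in the denominator I would group the positions by their token value $k\in[K]$, which yields $\sum_{k=1}^K b_k\,\pi_{x_L,k}\sum_{i=1}^L a_i\,V_{i,x_L}\,\mathbf{1}(x_i=k)$. This is precisely $\P_{\Pb,\Ub}(x_{L+1}=y\mid X)$ with the transition matrix $\Pb^{\W}$ given by \eqref{pbw def}, which is automatically column-stochastic since each $\bpi_i=\sft{\Eb\W\eb_i}$ is a probability vector, establishing the claim. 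The only real obstacle is careful bookkeeping of the four-way expansion and correctly identifying the position-independent factor $Z$ so that it cancels in the softmax; once that is done the remainder is routine algebra, reusing the "group positions by token value" idea already present in the proof of Lemma \ref{lemma identity}, and no new structural ingredient is required.
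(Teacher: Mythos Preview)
Your proof is correct and follows essentially the same approach as the paper: both expand the bilinear logit $\x_i^\top\W\xl$ into its four position/token components, use Assumptions \ref{assume iden} and \ref{assume positional embedding C} to reduce $\cb_y^\top\X^\top\s_X$ to a sum of softmax entries over positions with $x_i=y$, and then identify the result with \eqref{pos PCMC transition} via $\Pb^{\W}$ from \eqref{pbw def}. Your write-up is in fact slightly more explicit about the cancellation of the position-independent normalizer $Z$ than the paper's, but no new idea is involved.
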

\begin{proof}
Consider any $\W \in \R^{d \times d} $ and an arbitrary input $(X,y)$ with positional embedding $\Ub$. Let $\X = \M\Eb + \Ub$, where $\M$ is the same mapping matrix defined in Section \ref{sec proof identity}. Define $        \ab \coloneqq \exp(\Ub\W\ub_{L}), 
        \bb \coloneqq \exp(\Eb\W\ub_{L})  ,
        \Vb \coloneqq \exp(\Ub\W\Eb^{\top}) $ we have:
\begin{equation}\label{eq:pos-attn}
\begin{split}
\cb_{y}^\top \X^\top \sft{\X \W \xl} &= \cb_{y}^{\top}(\Eb^{\top}\M^{\top} + \Ub^{\top})\sft{\X\W\xl} \\ 
&\stackrel{(a)}= \cb_{y}^{\top}\Eb^{\top}\M^{\top}\sft{\X\W\xl} \\ 
&\stackrel{(b)}= \frac{\sum_{i = 1}^L \exp (\x_y^{\top}\W\xl) \cdot \mathbf{1}(x_i = y)}{\sum_{k = 1}^K \sum_{i = 1}^L \exp (\x_k^{\top}\W\xl) \cdot \mathbf{1}(x_i = k)} \\ 
&=\frac{ \sum_{i = 1}^L \exp \Big((\eb_y + \ub_y)^{\top}\W(\eb_{\xxl} + \ub_{\xxl})\Big) \cdot \mathbf{1}(x_i = y)}{\sum_{k = 1}^K \sum_{i = 1}^L \exp \Big((\eb_{k} + \ub_{k})^{\top}\W(\eb_{\xxl} + \ub_{\xxl})\Big) \cdot \mathbf{1}(x_i = k)} \\ 
&=\frac{b_y \cdot \exp(\eb_y^{\top}\W\eb_{\xxl}) \sum_{i = 1}^L a_i \cdot V_{i, \xxl}\cdot \mathbf{1}(x_i = y)}{\sum_{k=1}^K b_k \cdot \exp(\eb_k^{\top}\W\eb_{\xxl}) \sum_{i = 1}^L a_i \cdot V_{i, \xxl}  \cdot \mathbf{1}(x_i = k)} \\ 
\end{split}
\end{equation}
where (a) comes from the assumption that $\Cb\Ub^{\top} = \mathbf{0}$, (b) comes from the assumption that $\Cb\Eb^{\top} = \Ib_K$ and the definition of $\M$. Comparing Equation~\eqref{eq:pos-attn} with Equation~\eqref{pos PCMC transition}, we can set $\pi_k = \exp(\eb_k^{\top}\W\eb_{\xxl}) / \sum_{j \in [K]}\exp(\eb_j^{\top}\W\eb_{\xxl})$ to obtain $\Pb^{\W}$.
\end{proof}

\end{document}